\newcommand{\methodname}{{\tt{FedAux}}}
\newcommand{\apv}{{\tt{APV}}}
\Crefname{figure}{Fig.}{Figs.}
\Crefname{equation}{Eq.}{Eq.}
\newtheorem{theorem}{Theorem}[section]
\newtheorem{lemma}[theorem]{Lemma}
\newtheorem{assumption}[theorem]{Assumption}
\definecolor{dark2green}{rgb}{0.1, 0.65, 0.3}
\definecolor{dark2orange}{rgb}{0.9, 0.4, 0.}
\definecolor{dark2purple}{rgb}{0.4, 0.4, 0.8}
\newcommand{\ms}[2]{{#1\tiny{$\pm$#2}}}
\title{Personalized Subgraph Federated Learning with Differentiable Auxiliary Projections}
\author{Wei Zhuo$^{1}$, Zhaohuan Zhan$^{2}$, Han Yu$^1$\\
$^1$Nanyang Technological University, $^2$Shenzhen MSU-BIT University\\
$^1$\texttt{\{wei.zhuo, han.yu\}@ntu.edu.sg},
$^2$\texttt{zhan.z@smbu.edu.cn}\\
}
\begin{document}

\maketitle

\begin{abstract}
Federated Learning (FL) on graph-structured data typically faces non-IID challenges, particularly in scenarios where each client holds a distinct subgraph sampled from a global graph. In this paper, we introduce \textbf{Fed}erated learning with \textbf{Aux}iliary projections (\methodname{}), a personalized subgraph FL framework that learns to align, compare, and aggregate heterogeneously distributed local models without sharing raw data or node embeddings. In \methodname{}, each client jointly trains (i) a local GNN and (ii) a learnable auxiliary projection vector (\apv{}) that differentiably projects node embeddings onto a 1D space. A soft-sorting operation followed by a lightweight 1D convolution refines these embeddings in the ordered space, enabling the \apv{} to effectively capture client-specific information. After local training, these \apv{}s serve as compact signatures that the server uses to compute inter‑client similarities and perform similarity‑weighted parameter mixing, yielding personalized models while preserving cross‑client knowledge transfer. Moreover, we provide rigorous theoretical analysis to establish the convergence and rationality of our design. Empirical evaluations across diverse graph benchmarks demonstrate that \methodname{} substantially outperforms existing baselines in both accuracy and personalization performance. The code is available at \href{https://github.com/JhuoW/FedAux}{https://github.com/JhuoW/FedAux}.
\end{abstract}

\section{Introduction}
Real‑world data often manifests as relational structures, ranging from social interactions~\citep{tan2023federated,zhuo2025commute} and financial networks~\citep{suzumura2019towards,zhuo2024partitioning} to molecular graphs~\citep{xie2021federated,zhuo2022efficient}, whose scale and privacy constraints increasingly require training to be carried out in a federated manner~\citep{he2021fedgraphnn}, whereby multiple clients collaboratively learn a Graph Neural Network (GNN) model without exchanging their raw data. However, applying federated learning to graph-structured data, such as social networks, faces severe challenges due to {\em non-identically and independently distributed} (non-IID) data across clients. For example, consider a federated learning scenario involving multiple regional social networking platforms, each representing a distinct subgraph of a global social network. Users within each region exhibit unique interaction patterns and distinct interests, resulting in significant heterogeneity in local graph structures and node attributes. This inherent diversity among subgraphs leads to substantial difficulties when attempting to aggregate local GNN models into a unified global model, as traditional FL algorithms~\citep{mcmahan2017communication,li2020federated} typically assume homogeneous data distributions across clients. 

To tackle the non-IID challenges inherent in subgraph federated learning, personalized FL~\citep{tan2022towards} has recently emerged as a promising paradigm, which aims to provide client-specific GNN models rather than enforcing a universal global solution. Existing personalized subgraph FL approaches commonly achieve personalization by clustering clients on the server side, necessitating a reliable measure of client similarity without direct access to client-side data. In this work, we impose even stricter privacy constraints: neither raw data nor embeddings are shared, and only model learnable parameters can be exchanged. Although the server could compare clients by directly measuring similarity between their parameter matrices uploaded, the high dimensionality of these matrices makes such metrics unreliable under the curse of dimensionality~\citep{bellman1966dynamic}. Recent improvements have proposed measuring similarity by comparing communication‑level parameter gradients~\citep{xie2021federated} or generating a common anchor graph on the server as a neutral testbed~\citep{baek2023personalized}. Although these strategies mitigate some limitations, they remain largely heuristic and do not explicitly model the heterogeneity inherent in subgraph clients (See extended discussion in \Cref{app:details_pFL}).

\begin{figure}[t]
	\centering 
	\includegraphics[keepaspectratio=true, width=1\linewidth]{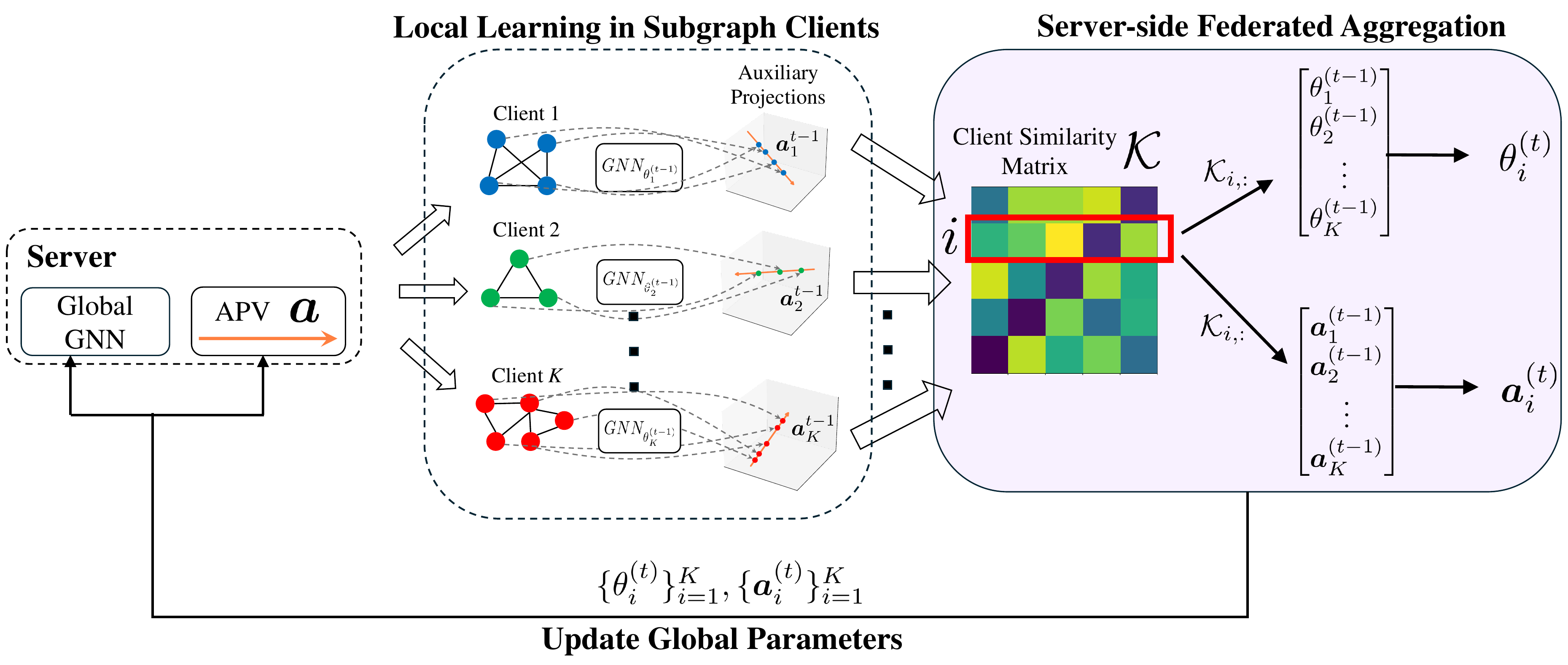}
	\caption{The overall framework of \methodname{}. Left: The server maintains a global GNN model together with learnable auxiliary projection vectors (\apv{}s) that are broadcast to all clients at the start of each communication round. Middle: Clients jointly optimize the GNN and \apv{} during local training, where the \apv{} projects node embeddings onto a 1D $\boldsymbol{a}_k$-space that positions related nodes closer together. Right: After local training, clients transmit their optimized GNN parameters and personalized \apv{}s to the server. The server computes a client similarity matrix by comparing the learned \apv{}s, which captures the heterogeneity across subgraphs without accessing raw data. These similarities determine personalized aggregation weights. At the end of each round, the updated global parameters are broadcast for the next communication round.}
	\label{fig:framework}
\end{figure}

\textbf{Motivation} Our key insight is that a compact, low-dimensional proxy, derived directly from the client’s own model parameters, can faithfully summarize local subgraph characteristics without leaking sensitive node features or embeddings. Such a proxy remains compact enough to avoid the pitfalls of high-dimensional similarity measures, yet expressive enough to reflect meaningful differences between clients. By learning this proxy jointly with the GNN parameters in each client, and using it to guide both local adaptation and server-side aggregation, we obtain a principled, privacy-preserving mechanism for personalization that directly leverages model parameters as a stand-in for subgraph information.

\textbf{Contribution} In this work, we propose \methodname{}, which employs differentiable auxiliary projections to effectively capture and exploit client-specific heterogeneity for subgraph FL. As illustrated on the left of \Cref{fig:framework}, the server stores not only a global GNN but also a learnable auxiliary projection vector (\apv{}) that accompanies the model parameters. At the start of the first communication round, the server broadcasts the global GNN and the current \apv{} to all clients. Each client projects its node embeddings onto the \apv{}, which is treated as a one-dimensional latent space. A differentiable soft-sorting operator then orders the projected embeddings by similarity, after which a simulated 1D convolution aggregates the sorted embedding sequence. The aggregated representations drive a supervised loss that simultaneously refines the local GNN and the \apv{}, so that the optimized \apv{} preserves the relational structure of the client subgraph. Upon completing local training, clients send their updated GNN weights and personalized \apv{}s back to the server. Since the \apv{} reveals only the latent space that best preserves local node relationships while concealing the exact position of every node in this space, it acts as a compact privacy-preserving summary of the client subgraph. Then the server computes similarity among the returned \apv{}s to quantify inter-client affinity and yields client-specific aggregation weights. The server uses these weights to combine the incoming parameters, producing a personalized model for each client that respects both shared knowledge and local subgraph idiosyncrasies.

Furthermore, we establish comprehensive and rigorous theoretical analyses that justify the soundness and interoperability of every technique used in \methodname{}. Extensive federated node classification experiments on six datasets, spanning diverse graph domains and client scales, demonstrate that \methodname{} achieves better accuracy and stronger personalization than state-of-the-art personalized subgraph FL baselines.

\section{Problem Statement: Subgraph Federated Learning} \label{sec:prob_state}
In Federated Learning (FL), multiple clients collaboratively train a global model without exchanging their raw data. In the {\em subgraph federated learning} setting, each client holds a subgraph of a larger graph. Formally, let a graph $\mathcal{G}$ be partitioned (or subsampled) into $K$ subgraphs $\{G_1, G_2, \cdots, G_K\}$ as $K$ clients, where $G_k = (V_k, E_k, X_k, Y_k)$. Here, $V_k = \{v_{k,1}, \cdots,v_{k,N_k}\}$ is the set of nodes in the $k$-th subgraph with $N_k = |V_k|$ nodes, $E_k$ the set of edges among those nodes, $X_k \in \mathbb{R}^{N_k \times d}$ the node features, and $Y_k$ the labels relevant to the learning task. In our FL scenario, each client $G_i$ has access only to its local data (i.e., its subgraph structure, node features, and labels), and there is no sharing of raw data or any node embeddings between clients.

A typical GNN $f_{\theta_k}(G_k)$ parameterized by $\theta_k$ is employed to produce node embeddings and ultimately generate predictions on the client $G_k$. In a standard federated learning setting such as FedAvg~\citep{mcmahan2017communication}, one aims to solve the global objective: $\min _\theta \sum_{k=1}^K \alpha_k \mathcal{L}_k(\theta)$,
subject to the privacy constraint that raw local data $G_k$ never leaves the client side. A common choice is to weight client $G_k$ by $\alpha_k = N_k / \bigl(\sum_{j=1}^K N_j\bigr)$ or simply $\alpha_k = 1/K$. The iterative procedure proceeds as follows. First, the server initializes $\theta^{(0)}$. At each global communication round $t \in \{1, \cdots, T\}$, it sends $\theta^{(t-1)}$ to each client. $G_k$ then updates $\theta^{(t-1)}$ locally by taking a few stochastic gradient steps on $\mathcal{L}_k(\theta)$ to update the parameters $\theta_k \leftarrow \theta_k-\eta \nabla \mathcal{L}$, which produce $G_k$'s optimal local parameters $\theta_k^{(t)}$. After the $t$-th local training, all clients' locally updated parameters $\{\theta_1^{(t)}, \cdots \theta_K^{(t)}\}$ are sent back to the server, which aggregates them via a weighted average $\theta^{(t)}=\sum_{k=1}^K \alpha_k \theta_k^{(t)}$. The newly aggregated global parameters $\theta^{(t)}$ are then broadcast back to each client for the next communication round. When the process converges or reaches a designated number of rounds, the final global parameters $\theta^{(T)}$ are taken as the parameters of the GNN model on the server. 

\section{Methodology}
In this section, we introduce the proposed Subgraph Federated Learning with Auxiliary Projections (\methodname{}) framework, designed to address the heterogeneity across local subgraphs in federated learning. \Cref{fig:framework} illustrates an overview of \methodname{}. Our objective is twofold: 1) Each client locally encodes its subgraph into a one-dimensional space via a learnable auxiliary projection vector \apv{}, and 2) the server then exploits these auxiliary vectors to realize personalized aggregation.
\subsection{Client-Side Local Training}\label{sec:client_side_learning}
Before the first communication round $t = 0$, alongside the GNN model parameterized by $\theta^{(0)}$, the server also maintains a learnable \apv{} $\boldsymbol{a}^{(0)} \in \mathbb{R}^{d^\prime}$. During each communication round $t$, the server distributes $(\theta^{(t-1)}, \boldsymbol{a}^{(t-1)})$ to initialize all clients' local model $\{(\theta_k^{(t-1)}, \boldsymbol{a}_k^{(t-1)})\}_{k = 1}^K \leftarrow\left(\theta^{(t-1)}, \boldsymbol{a}^{(t-1)}\right)$. For a client $G_k$, it runs the local GNN model to optimize the node embeddings:
\begin{equation}
    \mathbf{H}^{(t-1)}_k = f_{\theta_k^{(t-1)}} (G_k) = \left[h^{(t-1)}_{k,1}, h^{(t-1)}_{k,2}, \cdots, h^{(t-1)}_{k,N_k}\right] \in \mathbb{R}^{N_k \times d^\prime},
    \label{eq:gnn_emb}
\end{equation}
where $d^\prime$ is the output dimension of node embeddings. Given the local \apv{} $\boldsymbol{a}_k^{(t-1)}$, we first normalize all node embeddings so that all embeddings are compared on a consistent scale as $\hat{h}^{(t-1)}_{k, i} =h^{(t-1)}_{k, i}/\max_{j} \|h^{(t-1)}_{k, j}\|$. Then the similarity between node $v_{k, i}$ and $\boldsymbol{a}_k^{(t-1)}$ is defined as $s^{(t-1)}_{k,i} = \langle \hat{h}^{(t-1)}_{k,i}, \boldsymbol{a}_k^{(t-1)}\rangle$, where $\langle \cdot, \cdot \rangle$ denotes the inner product in $\mathbb{R}^{d^\prime}$. Intuitively, $s^{(t-1)}_{k,i}$ can be interpreted as the coordinate of each node $v_{k, i}$ in $\boldsymbol{a}_k$-space, which is a 1D line. Since $\boldsymbol{a}^{(t-1)}_k$ is itself learnable, the client $G_k$ is adaptively refining this space to capture relationships among its node embeddings more effectively.

Next, $G_k$ collects the similarity scores $S^{(t-1)}_k = \{s^{(t-1)}_{k,1},\cdots, s^{(t-1)}_{k,N_k}\}$ and sort them in non-decreasing order. Let $\pi_k$ be the permutation that orders these scores:
\begin{equation}
    s^{(t-1)}_{k, \pi_k(1)} \leq  s^{(t-1)}_{k, \pi_k(2)} \leq \cdots \leq  s^{(t-1)}_{k, \pi_k(N_k)}, 
    \label{eq:argsort}
\end{equation}
where $\pi_k(j)$ represents the node index at rank $j$. Accordingly, we apply this permutation $\pi_k$ to the row indices of $\mathbf{H}^{(t-1)}_k$, which yields the sorted embedding matrix $\boldsymbol{a}_k^{(t-1)}$ as:
\begin{equation}
    \widetilde{\mathbf{H}}_k^{(t-1)}=\left[\mathbf{H}_{k}^{(t-1)}\right]_{\pi_k,:} = \left[h^{(t-1)}_{k,\pi_k(1)}, h^{(t-1)}_{k,\pi_k(2)}, \cdots, h^{(t-1)}_{k,\pi_k(N_k)}\right],
    \label{eq:sorted_emb}
\end{equation}

\begin{wrapfigure}{rt}{0.45\textwidth}
\vspace{-0.25cm}
\centering
\includegraphics[width=0.4\textwidth]{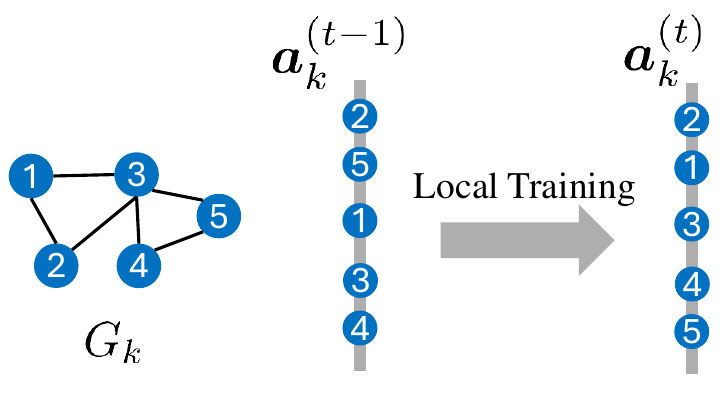}
\caption{The local training of \methodname{} aims to map all nodes in $G_k$ to a corresponding $\boldsymbol{a}_k$-space, and the optimization objective is to learning the \apv{} $\boldsymbol{a}_k$, such that the resulting $\boldsymbol{a}_k$-space preserves the optimal node sorting.}
\label{fig:sort_node}
\vspace{-0.5cm}
\end{wrapfigure}
which aligns node embeddings according to their coordinates in the $\boldsymbol{a}_k$-space. As shown in \Cref{fig:sort_node}, during local training, the node embeddings and the structure of the $\boldsymbol{a}_k$-space are jointly optimized so that nodes with stronger relationships are positioned closer along this learned space (in $G_k$ comprising two triangles, nodes within the same triangle should be proximate in the $\boldsymbol{a}_k$-space). In other words, the objective of the local model is to adaptively reshape the \apv{} so that the induced node sorting effectively captures the local data information. 

Under the semi-supervised setting, the node sorting on \apv{} can be adaptively refined under the guidance of the downstream task. Thinking of each $h^{(t-1)}_{k,\pi_k(i)}$ as a feature vector in a 1D sequence $\widetilde{\mathbf{H}}_k^{(t-1)}$, inspired by~\citep{liu2021non}, we can apply a 1D convolution with a fixed kernel size $B$ over $\widetilde{\mathbf{H}}_k^{(t-1)}$ as $\mathrm{Conv1D}(\left[h^{(t-1)}_{k,\pi_k(1)}, h^{(t-1)}_{k,\pi_k(2)}, \cdots, h^{(t-1)}_{k,\pi_k(N_k)}\right])$. More specifically, for each node $v_{\pi_k(i)}$ in the sorted sequence, the convolution can be written as:
\begin{equation}
    z^{(t-1)}_{k, \pi_k(i)} = \sum_{\tau=-\lfloor B / 2\rfloor}^{\lfloor B / 2\rfloor} \mathbf{W}_\tau h^{(t-1)}_{k, \pi_k(i + \tau)}+b,
    \label{eq:conv}
\end{equation}
where $\mathbf{W}_\tau \in \mathbb{R}^{d^\prime \times d^\prime}$ are learnable convolution kernels for offset $\tau$, and $b$ is a bias term. Convolving around $v_{\pi_k(i)}$ in \Cref{eq:conv} amounts to a proximity-based aggregation in the $\boldsymbol{a}_k$-space, where each embedding is updated by aggregating information from its neighbors along this learned 1D sorting. Hence, the quality of this sorting significantly impacts the aggregation effectiveness. Intuitively, nodes with stronger semantic relationships or similar labels should appear closer together in this learned sorting. We can formulate the learning objective to explicitly optimize the sorting induced by the \apv{} $\boldsymbol{a}_k$, ensuring that the resultant sorting facilitates effective aggregation and improves downstream predictive accuracy. Consequently, we formulate the learning objective:
\begin{equation}
\left(\theta_k^*, \boldsymbol{a}_k^*, \Phi_k^*\right)=\underset{\theta_k, a_k, \Phi_k}{\operatorname{argmin}} \mathcal{L}\left(\operatorname{CLF}\left(\mathrm{Conv1D}\left(\tilde{\mathbf{H}}_k^{(t-1)}\right)\right), Y_k\right),
\label{eq:conv_loss}
\end{equation}
where $\Phi_k$ denotes the full set of parameters for $\mathrm{Conv1D}$ and the subsequent classifier $\operatorname{CLF}$ that maps node embeddings to final logits. Through this objective, we explicitly encourage $\boldsymbol{a}_k$-space to yield an optimal node sorting, enabling the convolutional operation to effectively capture and leverage localized, label-informed relationships, thereby the optimized \apv{} $\boldsymbol{a}^{(t)}_k = \boldsymbol{a}^*_k$ accurately preserves and encodes the local node relationships specific to each client. 

However, $\boldsymbol{a}_k$ does not directly participate in the loss defined in \Cref{eq:conv_loss} in a way that enables standard backpropagation to update it. It is because the role of $\boldsymbol{a}_k$ is limited to generating similarity scores, which in turn determine the input order to the $\mathrm{Conv1D}$ layer. Thus, $\boldsymbol{a}_k$ only affects the network’s output by reordering embeddings, which is a purely indirect pathway that does not produce a gradient signal for $\boldsymbol{a}_k$ from the downstream loss. ~\cite{liu2021non} attempted to mitigate this by multiplying each node embedding by its similarity score and then sorting. While this modification integrates $\boldsymbol{a}_k$ into the learning pipeline directly, the hard discrete sort persists, causing the gradient signal that could refine $\boldsymbol{a}_k$ to be still routed through a non-smooth transformation. Hence $\boldsymbol{a}_k$ still cannot be fully optimized to reorder embeddings based on loss feedback, leaving the core issue unresolved.

To eliminate the hard-sorting bottleneck, we propose a {\em continuous aggregation} scheme over the $\boldsymbol{a}_k$-space. Rather than ranking or discretizing these similarity scores, for each node $v_i$, we define a continuous kernel $\kappa (s^{(t-1)}_{k,i}, s^{(t-1)}_{k,j})$, which could be a simple Gaussian-like function $\mathcal{K}_{ij} = \kappa (s^{(t-1)}_{k,i}, s^{(t-1)}_{k,j}) = \exp \left(- (s^{(t-1)}_{k,i}- s^{(t-1)}_{k,j})^2/\sigma^2\right)$ with bandwidth $\sigma > 0$, measuring how close $v_j$ is to $v_i$ in the real line spanned by $\{s^{(t-1)}_{k}\}$. As shown in the right part of \Cref{fig:framework}, we then obtain an aggregated embedding for each node $v_i$ by a smooth weighted sum of all node embeddings:
\begin{equation}
    z^{(t-1)}_{k, i} = \frac{1}{M_i}\sum^{N_k}_{j=1} \kappa\left(s^{(t-1)}_{k,i}, s^{(t-1)}_{k,j}\right) h^{(t-1)}_{k,j}, \quad M_i = \sum^{N_k}_{j=1} \kappa\left(s^{(t-1)}_{k,i}, s^{(t-1)}_{k,j}\right).
    \label{eq:kernel_emb}
\end{equation}
Unlike discrete sorting, this continuous aggregator is fully differentiable with respect to $\boldsymbol{a}_k$, because changes in $\boldsymbol{a}_k$ smoothly shift each $s^{(t-1)}_{k,i}$ and thus adjust the kernel weights $\kappa$. This approach naturally learns to group nodes with similar $s^{(t-1)}_{k}$ values, emulating the sorted 1D convolution effect, while sidestepping the gradient-blocking issues that arise from hard-sorting steps.

To jointly train the GNN parameters $\theta_k$ and the \apv{} $\boldsymbol{a}_k$, we associate each node $v_i$ with two embeddings: $h^{(t-1)}_{k, i}$ produced by the GNN defined in \Cref{eq:gnn_emb}, and $z^{(t-1)}_{k, i}$  generated via our kernel-based aggregation method as \Cref{eq:kernel_emb}. We then concatenate these embeddings to form $v_i$'s final embedding $r^{(t-1)}_{k,i} = [h^{(t-1)}_{k, i} \| z^{(t-1)}_{k, i}]$, which is fed into a simple MLP classifier $\mathrm{CLF}(\cdot)$ to produce logits for the cross-entropy loss $\mathcal{L}_k = \frac{1}{N_k}CE(\mathrm{CLF}(\Gamma^{(t-1)}_{k}),Y_k)$, where $\Gamma^{(t-1)} = [r^{(t-1)}_{k,i}]_{i=1}^{N_k}$. 


\subsection{Server-Side Federated Aggregation}\label{sec:server_side_learning}
At the end of local training for communication round $t$, each client $G_k$ transmits its optimized parameters $(\theta^{(t-1)}_{k}, \boldsymbol{a}^{(t-1)}_k)$ to the server. In doing so, only these high-level parameters are exchanged, rather than gradients or node embeddings, thereby limiting direct leakage of private subgraph information. Note that {\bf $\boldsymbol{a}_k$ serves as the optimal subspace for capturing node relationships, how individual nodes map into this space (and thus the precise relational details) remains unknown to the server}. This design strictly adheres to the fundamental FL principle that {\em Data stays local; only model updates leave}.

Since the data distributions across clients can be non-IID, the server is expected to personalize the aggregation for each client. Given that each $\boldsymbol{a}^{(t-1)}_k$ can be a descriptor of how node embeddings in $G_k$ are arranged and structured, the similarity between two clients $G_k$ and $G_l$ can be measured via the cosine similarity of their \apv{}s: $\textsc{Sim}(\boldsymbol{a}^{(t-1)}_k,\boldsymbol{a}^{(t-1)}_l) = \frac{\left\langle\boldsymbol{a}^{(t-1)}_k,\boldsymbol{a}^{(t-1)}_l\right\rangle}{\left\|\boldsymbol{a}^{(t-1)}_k\right\|\left\|\boldsymbol{a}^{(t-1)}_l\right\|}$. We then convert this similarity into a weight:
\begin{equation}
    w^{(t-1)}_{k, l} = \frac{\exp \left(\alpha \textsc{Sim}\left(\boldsymbol{a}^{(t-1)}_k,\boldsymbol{a}^{(t-1)}_l\right)\right)}{\sum_{r=1}^K \exp \left(\alpha \textsc{Sim}\left(\boldsymbol{a}^{(t-1)}_k,\boldsymbol{a}^{(t-1)}_r\right)\right)},
    \label{eq:kernel_weight}
\end{equation}
where $\alpha > 0$ is a temperature controlling the sharpness of the weighting distribution. $w^{(t-1)}_{k, l}$ reflects how much client $G_k$ should incorporate the update from $G_l$. By emphasizing contributions from similar clients (i.e., those with high similarity in their \apv{}s), each client’s final model can better handle heterogeneous data while reducing interference from dissimilar clients. Instead of averaging all local parameters into one single global model, the server can compute a personalized aggregation of parameters for each client $G_k$ as:
\begin{equation}
\theta_k^{(t)}=\sum_{l=1}^K w^{(t-1)}_{k, l} \theta_l^{(t-1)}, \quad \boldsymbol{a}_k^{(t)}=\sum_{l=1}^K w^{(t-1)}_{k, l} \boldsymbol{a}_l^{(t-1)}.
\label{eq:personalized_agg}
\end{equation}
Thus, after the server performs these personalized aggregations for both $\theta$ and $\boldsymbol{a}$, it transmits $(\theta^{(t)}_{k}, \boldsymbol{a}^{(t)}_k)$ back to client $G_k$ for the $(t+1)$-th communication round starting point. \Cref{app:pseudo} shows the pseudo code of \methodname{}.

\paragraph{Complexity Analysis}  For the client side of \methodname{}, the local GNN embedding generation incurs a complexity of $\mathcal{O}(|E_k|d^\prime)$, and the auxiliary projection from embeddings to the \apv{} results in $\mathcal{O}(N_kd^\prime)$. Besides, the kernel-based embedding aggregation over the 1D space induced by the \apv{} has complexity $\mathcal{O}(N_k^2d^\prime)$. Consequently, the per-client complexity is $\mathcal{O}(|E_k|d^\prime + N_k^2d^\prime)$. On the server side, computing the client-wise similarity for personalized federated aggregation involves a complexity $\mathcal{O}(K^2 d^\prime)$. Therefore, the total complexity of \methodname{} per communication round is $\mathcal{O}\left((|E_k|+ N_k^2 + K^2)d^\prime\right)$.

\subsection{Theoretical Analysis} \label{sec:theoretical_analysis}
For notational simplicity, we focus on a single client with $N$ nodes in a given communication round and omit the subscript $k$ and superscript $(t-1)$. 
The core of our model is to use a learnable auxiliary projection vector \apv{} $\boldsymbol{a}$ to capture an optimal node sorting of the local node embeddings and thus serves as a compact summary of the subgraph. However, there is a foundational question that inevitably arises once we replace hard sorting with the smooth kernel aggregator: when the \apv{} $\boldsymbol{a}$ is learned via back‑propagation, {\em what does it actually learn? Does it encode an arbitrary nuisance direction, or does it converge to a geometrically meaningful axis that faithfully summarizes the local subgraph?} To answer these questions, we analyze the fidelity of the \apv{} with the following theorem.

\begin{theorem}[Fidelity of the \apv{} $\boldsymbol{a}$] \label{thm:fidelity}
Let $\mathbf{C} := \frac{1}{N} \sum^N_{i = 1} h_i h_i^\top$ be the empirical covariance of node embeddings in the current client with size $N$. The gradient of the local loss $\mathcal{L}$ w.r.t. the \apv{} $\boldsymbol{a}$ satisfies:
\begin{equation}
    \nabla_a \mathcal{L}=-\frac{2}{\sigma^2} \mathbf{C} \boldsymbol{a}+\mathcal{R}(\sigma),
    \label{eq:nabla_L}
\end{equation}
where the remainder term obeys $\|\mathcal{R}(\sigma)\| = \mathcal{O}(\sigma^0)$ as $\sigma \to 0^+$. Define $\mathbb{S}^{d-1} = \{x\in \mathbb{R}^d: \|x\|_2 = 1 \}$ as the unit Euclidean sphere embedded in $\mathbb{R}^d$, then the gradient descent on $\mathcal{L}$ with unit-norm re-normalization reproduces Oja learning rule~\citep{oja1982simplified}:
\begin{equation}
    \boldsymbol{a} \leftarrow \Pi_{\mathbb{S}^{d-1}}(\boldsymbol{a}-\eta \mathbf{C} \boldsymbol{a}),
    \label{eq:oja_learning_rule}
\end{equation}
whose unique stable fixed points are the eigenvectors of $\mathbf{C}$, and the global attractor is the principal eigenvector (largest eigenvalue). 
\end{theorem}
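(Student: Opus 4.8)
The plan is to prove \Cref{thm:fidelity} in two stages --- first derive the gradient expansion \eqref{eq:nabla_L}, then read off the induced learning dynamics. For the first stage, the key observation is that in the local objective the \apv{} $\boldsymbol{a}$ enters $\mathcal{L}$ \emph{only} through the scalar scores $s_i=\langle \hat h_i,\boldsymbol{a}\rangle$, which feed the Gaussian kernel $\kappa_{ij}=\exp(-(s_i-s_j)^2/\sigma^2)$ and hence the aggregated embeddings $z_i$ of \eqref{eq:kernel_emb}; the GNN weights $\theta_k$ and the classifier head are held fixed for this computation. Since $\nabla_{\boldsymbol{a}}s_i=\hat h_i$, the chain rule gives $\nabla_{\boldsymbol{a}}\mathcal{L}=\sum_i(\partial\mathcal{L}/\partial s_i)\,\hat h_i$, so it suffices to expand $\partial\mathcal{L}/\partial s_i$. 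Differentiating through $z_i$ and then through $\kappa_{ij}$ is where the prefactor is born: $\partial_{s_i}\kappa_{ij}=-\tfrac{2}{\sigma^2}(s_i-s_j)\kappa_{ij}$ (with $\partial_{s_j}\kappa_{ij}$ the negative of this), so writing $g_i:=\partial\mathcal{L}/\partial z_i$ for the upstream gradient of the cross-entropy head one obtains $\nabla_{\boldsymbol{a}}\mathcal{L}=-\tfrac{2}{\sigma^2}\sum_{i,j}\tfrac{1}{M_i}(s_i-s_j)\kappa_{ij}\,\langle g_i,h_j-z_i\rangle\,(\hat h_i-\hat h_j)$.

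\textbf{Isolating the principal-direction term.} The crux is to extract the $\mathcal{O}(\sigma^{-2})$ content of this double sum and identify it with $\mathbf{C}\boldsymbol{a}$. I would symmetrize the sum in $(i,j)$ to turn $\sum_{i,j}(s_i-s_j)(\hat h_i-\hat h_j)(\cdots)$ into $2\sum_i(s_i-\bar s)(\hat h_i-\overline{\hat h})(\cdots)$, and then use that as $\sigma\to 0^+$ the kernel matrix concentrates ($\kappa_{ij}\to\delta_{ij}$, $M_i\to 1$, $z_i\to h_i$): the part that is not exponentially suppressed contracts to something proportional to $\sum_i s_i\hat h_i=N\widehat{\mathbf{C}}\boldsymbol{a}$, where $\widehat{\mathbf{C}}=\tfrac1N\sum_i\hat h_i\hat h_i^\top$ equals $\mathbf{C}$ up to the fixed positive scalar coming from the embedding normalization (same eigenvectors, so the conclusion is unaffected). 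Everything else --- the mean-subtraction terms $\bar s,\overline{\hat h}$, the $\mathcal{O}(1)$ fluctuations of $M_i$, the exponentially small off-diagonal kernel tails weighted by the bounded $g_i$, and the indirect $\boldsymbol{a}$-dependence routed through $g_i$ --- carries no $\sigma^{-2}$ factor and is collected into $\mathcal{R}(\sigma)$ with $\|\mathcal{R}(\sigma)\|=\mathcal{O}(\sigma^0)$. \textbf{This bookkeeping is the main obstacle}: one must bound the tail and normalizer contributions uniformly, which I would do under mild regularity (bounded embeddings, bounded softmax/cross-entropy gradients $\|g_i\|\le G$, non-degenerate score gaps), carefully tracking which terms genuinely scale like $\sigma^{-2}$ and which are $\sigma$-independent.

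\textbf{From the gradient to Oja's rule and the attractor.} Given \eqref{eq:nabla_L}, a projected gradient step $\boldsymbol{a}\leftarrow\Pi_{\mathbb{S}^{d-1}}(\boldsymbol{a}-\eta\nabla_{\boldsymbol{a}}\mathcal{L})$ becomes $\boldsymbol{a}\leftarrow\Pi_{\mathbb{S}^{d-1}}(\boldsymbol{a}+\tilde\eta\,\mathbf{C}\boldsymbol{a}+\mathcal{O}(\eta))$ once the $2/\sigma^2$ prefactor is absorbed into an effective step $\tilde\eta$; discarding the $\mathcal{O}(\eta)$ bias (negligible relative to the $\sigma^{-2}$ drift) yields exactly the unit-normalized Hebbian/Oja update \eqref{eq:oja_learning_rule}. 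For small $\tilde\eta$ I would Taylor-expand the retraction, $\Pi_{\mathbb{S}^{d-1}}(\boldsymbol{a}+\tilde\eta\,\mathbf{C}\boldsymbol{a})=\boldsymbol{a}+\tilde\eta\big(\mathbf{C}\boldsymbol{a}-(\boldsymbol{a}^\top\mathbf{C}\boldsymbol{a})\boldsymbol{a}\big)+\mathcal{O}(\tilde\eta^2)$, recognizing Oja's rule, and then invoke its classical analysis: the equilibria on $\mathbb{S}^{d-1}$ are the $\pm$ unit eigenvectors of $\mathbf{C}$, and linearizing the update map at an eigenvector with eigenvalue $\lambda_k$ gives Jacobian eigenvalues $1+\tilde\eta(\lambda_j-\lambda_k)$, so that equilibrium is asymptotically stable iff $\lambda_k=\lambda_{\max}$ while every other eigenvector is a saddle. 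Hence from a generic initialization (nonzero overlap with the leading eigenspace --- its complement has measure zero) the iterates converge to the principal eigenvector of $\mathbf{C}$; I would cite \citep{oja1982simplified} together with the standard ODE/stochastic-approximation convergence argument, and close the \emph{global} statement with the usual ``random initialization escapes the stable manifolds of the saddles'' fact, noting that the persistent additive perturbation $\mathcal{R}(\sigma)$ does not alter the $\mathbf{C}$-eigenstructure governing the leading-order dynamics.
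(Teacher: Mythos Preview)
Your overall architecture --- chain rule through $s_i\to\kappa_{ij}\to z_i$, pull out the $-2/\sigma^2$ prefactor, then read off projected-gradient dynamics and invoke Oja's stability analysis --- matches the paper's, and your treatment of the Oja part (Taylor-expand the retraction, linearize at each eigenvector, show only the top one is stable) is essentially the paper's argument with, if anything, cleaner sign bookkeeping.

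The gap is in your extraction of the $\mathbf{C}\boldsymbol{a}$ term. Two concrete problems. First, the symmetrization you propose, collapsing $\sum_{i,j}(s_i-s_j)(\hat h_i-\hat h_j)(\cdots)$ to $2\sum_i(s_i-\bar s)(\hat h_i-\overline{\hat h})(\cdots)$, requires the bracketed factor to be symmetric in $(i,j)$; here it is $\tfrac{\kappa_{ij}}{M_i}\langle g_i,h_j-z_i\rangle$, which is not. Second, and more fundamentally, the kernel-concentration limit cannot by itself produce a surviving $\sigma^{-2}$ term: on the diagonal $j=i$ the factor $(s_i-s_j)$ vanishes, and for $j\neq i$ the combination $\tfrac{1}{\sigma^2}(s_i-s_j)\,e^{-(s_i-s_j)^2/\sigma^2}$ tends to zero. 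So ``the part that is not exponentially suppressed'' is identically zero, not $\sum_i s_i\hat h_i$.

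The paper closes this gap through two structural assumptions you do not make: centred embeddings ($\bar h=0$) and, crucially, a \emph{linear classifier Jacobian} $g_i=\mathbf{W}z_i$. The linearity is what makes the argument go: it turns $g_i$ itself into a kernel-smoothed quantity, so the product $(h_j-z_i)^\top g_i=(h_j-z_i)^\top\mathbf{W}z_i$ acquires additional hidden dependence on the score differences. The paper then Taylor-expands the whole expression in $\varepsilon_{ij}=s_i-s_j$ (rather than sending $\sigma\to 0$ and invoking concentration) and uses the centring assumption with a symmetry argument to reduce the first-order piece to $-\tfrac{2}{\sigma^2}\mathbf{C}\boldsymbol{a}$. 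Treating the upstream gradients $g_i$ as generic bounded constants, as you do, leaves no mechanism to manufacture the covariance; you would need to adopt the linear-Jacobian assumption (or a first-order surrogate) and replace the concentration argument by a small-$\varepsilon$ expansion.
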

The proof and more discussions are provided in \Cref{app:proof_fidelity}. It guarantees that, once the kernel aggregator makes $\boldsymbol{a}$ differentiable, ordinary back-propagation forces \apv{} to align with the direction along with the node embeddings in that client vary the most. Equivalently, the \apv{} $\boldsymbol{a}$ is not an arbitrary trainable knob but a statistically optimal, variance‑maximizing summary of local embeddings. Thus, {\bf the \apv{} $\boldsymbol{a}$ is provably the first principal component of the local embeddings}. 

In \Cref{sec:client_side_learning}, we propose a continuous kernel aggregator to replace the hard sort-then-$\mathrm{Conv1D}$ pipeline used in earlier work~\citep{liu2021non}. To justify that replacement, the following theorem rigorously shows that the new smooth operator degenerates to the old one in the appropriate limit.

\begin{theorem}[Sorting limit and equivalence to $\mathrm{Conv1D}$]\label{thm:sorting_limit}
Let $z_i$ be the kernel-smoothed embeddings, and gather them in score order $\widetilde{\mathbf{Z}} = \left[z_{\pi(1)},\cdots, z_{\pi(N)}\right] \in \mathbb{R}^{N \times d^\prime}$. The original sorted embeddings $\widetilde{\mathbf{H}} = [h_{\pi(1)},\cdots, h_{\pi(N)}]$ is defined in \Cref{eq:sorted_emb}. Let $\mathbf{W} \in \mathbb{R}^{B \times d^\prime}$ be an arbitrary fixed $\mathrm{Conv1D}$ kernel with zero padding, and denote $\mathrm{Conv}_{\mathbf{W}} (\mathcal{X})_t = \sum^B_{\tau=1} \mathbf{W}_\tau \mathcal{X}_{t + \tau - \lceil B / 2\rceil}$, for any sequence $\mathcal{X} \in \mathbb{R}^{N \times d^\prime}$. Then we have:
\begin{equation}
    \lim_{\sigma \to 0^+} \left\| \mathrm{Conv}_{\mathbf{W}}\left(\widetilde{\mathbf{Z}}\right) - \mathrm{Conv}_{\mathbf{W}}\left(\widetilde{\mathbf{H}}\right) \right\|_F = 0,
\end{equation}
where $\|\cdot\|_F$ is the Frobenius norm.
\end{theorem}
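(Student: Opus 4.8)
The plan is to show that as $\sigma \to 0^+$, each kernel-smoothed embedding $z_i$ converges to the corresponding original embedding $h_i$, so that $\widetilde{\mathbf{Z}} \to \widetilde{\mathbf{H}}$ entrywise, and then invoke continuity of the fixed linear map $\mathrm{Conv}_{\mathbf{W}}$. First I would examine the kernel weights $\mathcal{K}_{ij} = \exp(-(s_i - s_j)^2/\sigma^2)$ in \Cref{eq:kernel_emb}. For the diagonal, $\mathcal{K}_{ii} = 1$ always. For off-diagonal terms with $s_i \neq s_j$, we have $\mathcal{K}_{ij} = \exp(-(s_i-s_j)^2/\sigma^2) \to 0$ as $\sigma \to 0^+$, and in fact this decay is super-polynomially fast. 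Hence $M_i = \sum_j \mathcal{K}_{ij} \to 1$ and the normalized weight $\mathcal{K}_{ii}/M_i \to 1$ while $\mathcal{K}_{ij}/M_i \to 0$ for $j \neq i$, giving $z_i = \frac{1}{M_i}\sum_j \mathcal{K}_{ij} h_j \to h_i$.

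The key estimate I would make precise is a bound of the form $\|z_i - h_i\| \le C(N, \{h_j\}) \cdot e^{-\delta^2/\sigma^2}$ where $\delta := \min_{s_i \neq s_j} |s_i - s_j| > 0$ is the smallest gap between distinct score values. Splitting the sum, $z_i - h_i = \frac{1}{M_i}\sum_{j \neq i} \mathcal{K}_{ij}(h_j - h_i)$, and bounding $\sum_{j\neq i}\mathcal{K}_{ij} \le (N-1)e^{-\delta^2/\sigma^2}$ while $M_i \ge 1$, one gets $\|z_i - h_i\| \le (N-1) e^{-\delta^2/\sigma^2} \cdot 2\max_j \|h_j\|$. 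This goes to $0$ as $\sigma \to 0^+$. Assembling over all rows and the (fixed) permutation $\pi$, $\|\widetilde{\mathbf{Z}} - \widetilde{\mathbf{H}}\|_F \to 0$. Since $\mathrm{Conv}_{\mathbf{W}}$ is a fixed linear operator (convolution with kernel $\mathbf{W}$, zero padding), it is Lipschitz: $\|\mathrm{Conv}_{\mathbf{W}}(\widetilde{\mathbf{Z}}) - \mathrm{Conv}_{\mathbf{W}}(\widetilde{\mathbf{H}})\|_F \le \|\mathbf{W}\|_{\mathrm{op}} \cdot \|\widetilde{\mathbf{Z}} - \widetilde{\mathbf{H}}\|_F$ (with a mild dimensional constant from overlapping windows), and the right side vanishes in the limit. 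This yields the claimed limit.

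The main obstacle is the degenerate case where some scores coincide, $s_i = s_j$ for $i \neq j$, so that $\delta$ as defined above is not strictly positive (or the tie-breaking in the permutation $\pi$ becomes ambiguous). I would handle this by grouping indices into equivalence classes of equal score value: within a tie-class, $\mathcal{K}_{ij} = 1$ for all members, so $z_i$ converges to the \emph{average} of $\{h_j\}$ over that class rather than to $h_i$ itself — meaning $\widetilde{\mathbf{Z}}$ does not converge to $\widetilde{\mathbf{H}}$ in general under ties. The clean fix is to note that ties occur only on a measure-zero set of the \apv{} $\boldsymbol{a}$ (the scores $s_i = \langle \hat h_i, \boldsymbol{a}\rangle$ are distinct for all $i$ outside a finite union of hyperplanes in $\mathbb{R}^{d'}$, assuming the normalized embeddings $\hat h_i$ are pairwise distinct), so the generic-position assumption $s_i \neq s_j$ for $i \neq j$ is essentially without loss of generality; I would state this as a standing genericity hypothesis. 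Alternatively, if ties are allowed, one redefines $\widetilde{\mathbf{H}}$ so that rows in a tie-class are replaced by their class average, and the equivalence statement holds verbatim with $\delta$ the minimal gap between \emph{distinct} score values. I would remark on this subtlety rather than belabor it, since the intended regime is distinct scores.
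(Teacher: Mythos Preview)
Your proposal is correct and follows essentially the same route as the paper's proof: kernel-weight concentration under distinct scores, pointwise convergence $z_i\to h_i$, Frobenius convergence $\|\widetilde{\mathbf{Z}}-\widetilde{\mathbf{H}}\|_F\to 0$, and then the Lipschitz property of the fixed linear operator $\mathrm{Conv}_{\mathbf{W}}$. Your explicit exponential bound and your discussion of the tie case (which the paper simply excludes by assuming all scores distinct in its weight-concentration lemma) are additional refinements, but the underlying argument is the same.
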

The proof is in \Cref{app:sorting_limit}. \Cref{thm:sorting_limit} indicates that the kernel aggregation followed by $\mathrm{Conv1D}$ converges to hard-sorting followed by $\mathrm{Conv1D}$ as the bandwidth $\sigma$ tends to 0. Hence, the two architectures have identical expressive power up to an arbitrarily small error for sufficiently small $\sigma$. Although the limit $\sigma \to 0^+$ recovers discrete sorting, a larger $\sigma$ performs a soft neighborhood pooling that can act as a learnable regularizer against over‑fitting noisy local orderings. In practice, we find $\sigma = 1$ to be effective across all datasets.

Next, we present a theoretical analysis of \methodname{}’s convergence rate, which guarantees that it cannot diverge in expectation. Since this analysis focuses on the global model, we use the subscript $\cdot_k$ to denote the client index.

\begin{theorem}[Global linear convergence] \label{thm:linear_convergence} 
Let $\Psi^{(t)}_k = (\theta^{(t)}_k, \boldsymbol{a}^{(t)}_k)$ be the local parameters of client $G_k$ at the communication round $t$, and $\Psi^{(t)} = \left[\Psi^{(t)}_1, \cdots, \Psi^{(t)}_k\right]$ collects all local parameters. Assuming 1) every local objective is $\mathscr{L}$-smooth: $\forall \Psi_k, \Psi^\prime_k: \left\|\nabla \mathcal{L}_k(\Psi_k)-\nabla \mathcal{L}_k\left(\Psi^\prime_k\right)\right\| \leq \mathscr{L}\left\|\Psi_k-\Psi^\prime_k\right\|$; 2) the stochastic gradients are unbiased ($\mathbb{E}\left[g_k\right]=\nabla \mathcal{L}_k$) and variance-bounded ($\mathbb{E}\left[\left\|g_k-\nabla \mathcal{L}_k\right\|^2\right] \leq \zeta^2$), where $g_k:=\nabla_{(\Psi_k)} \mathcal{L}_k$ means the local gradients; 3) each local objective satisfies the $\mu$-PL condition~\citep{polyak1963gradient}; 4) let $\Omega^{(t)} = [w^{(t)}_{kl}]_{k,l} \in \mathbb{R}^{K \times K}$, the spectral gap $\rho:=\sup _t\left\|\Omega^{(t)}-\frac{1}{K} \mathbf{1 1}^{\top}\right\|_2<1$. Let each client perform $Q$ local updates per round, and the learning rate $0 < \eta \leq \frac{1}{2\mathscr{L}}$. With any initial parameters $\Psi^{(0)} = (\theta^{(0)}, \boldsymbol{a}^{(0)})$, we have:
\begin{equation}
\mathbb{E}\left[\mathcal{L}\left(\Psi^{(T)}\right)-\mathcal{L}^{\star}\right] \leq(1-\eta \mu)^{Q T}\left(\mathcal{L}\left(\Psi^{(0)}\right)-\mathcal{L}^{\star}\right)+\frac{\eta \mathscr{L} \zeta^2}{2 \mu}+\frac{2 \eta \mathscr{L} \rho^2}{\mu(1-\rho)^2},
\label{eq:formula_thm_converage}
\end{equation}
where $\mathcal{L}(\Psi):= \sum^K_{k=1} p_k \mathcal{L}_k(\Psi_k)$ is the global objective with client sampling probability $p_k$ (w.l.o.g., $p_k = 1/K$). $\mathcal{L}^{\star} := \sum_k p_k \mathcal{L}_k^\star$ is the weighted optimal value.
\label{eq:linear_convergence}
\end{theorem}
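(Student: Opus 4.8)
The plan is to combine a Polyak--{\L}ojasiewicz (PL) linear-convergence argument for the local SGD steps with a decentralized-optimization-style analysis of the server's similarity-weighted mixing. Throughout I would treat the concatenated parameter $\Psi_k=(\theta_k,\boldsymbol{a}_k)$ as a single vector in a Euclidean space, so that its internal block structure is irrelevant and the fact that $\theta$ and $\boldsymbol{a}$ are mixed with the \emph{same} weights $w^{(t)}_{kl}$ (cf.\ \Cref{eq:personalized_agg}) is automatically respected; I also take $p_k=1/K$, as the statement permits. The first ingredient is a one-step descent bound: by $\mathscr L$-smoothness of $\mathcal L_k$ together with $\mathbb E[g_k]=\nabla\mathcal L_k$ and $\mathbb E\|g_k-\nabla\mathcal L_k\|^2\le\zeta^2$, a single update $\Psi_k^{+}=\Psi_k-\eta g_k$ obeys $\mathbb E[\mathcal L_k(\Psi_k^{+})]\le\mathcal L_k(\Psi_k)-\eta\|\nabla\mathcal L_k(\Psi_k)\|^2+\tfrac{\mathscr L\eta^2}{2}(\|\nabla\mathcal L_k(\Psi_k)\|^2+\zeta^2)$; since $\eta\le\tfrac1{2\mathscr L}$ the gradient-norm coefficient is strictly negative, and the $\mu$-PL inequality $\|\nabla\mathcal L_k(\Psi_k)\|^2\ge2\mu(\mathcal L_k(\Psi_k)-\mathcal L_k^\star)$ converts this into a per-step contraction $\mathbb E[\mathcal L_k(\Psi_k^{+})-\mathcal L_k^\star]\le(1-\eta\mu)(\mathcal L_k(\Psi_k)-\mathcal L_k^\star)+\tfrac{\mathscr L\eta^2}{2}\zeta^2$ (a valid upper bound; the exact factor is $1-\tfrac32\eta\mu$). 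Averaging over clients with weights $p_k$ preserves this contraction for the global potential $V:=\mathcal L(\Psi)-\mathcal L^\star$.

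The second ingredient handles the aggregation step $\Psi^{(t)}=\Omega^{(t-1)}\widetilde\Psi^{(t-1)}$ in stacked form, where $\widetilde\Psi^{(t-1)}$ collects the post-local-training parameters. Decomposing $\Omega^{(t-1)}=\tfrac1K\mathbf 1\mathbf 1^\top+(\Omega^{(t-1)}-\tfrac1K\mathbf 1\mathbf 1^\top)$ and using that both matrices are row-stochastic, the deviation of client $k$ from the mean iterate $\bar\Psi^{(t-1)}=\tfrac1K\sum_l\widetilde\Psi_l^{(t-1)}$ equals $\sum_l(w^{(t-1)}_{kl}-\tfrac1K)(\widetilde\Psi_l^{(t-1)}-\bar\Psi^{(t-1)})$, whose aggregate squared norm is at most $\rho^2$ times the parameter spread $\Xi^{(t-1)}:=\sum_l\|\widetilde\Psi_l^{(t-1)}-\bar\Psi^{(t-1)}\|^2$ by the spectral-gap hypothesis. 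Feeding this deviation into the smoothness upper bound for each $\mathcal L_k$ and splitting the cross term by a Young-type inequality gives $\mathcal L(\Psi^{(t)})\le\mathcal L(\widetilde\Psi^{(t-1)})+O(\mathscr L\rho^2\Xi^{(t-1)})$ up to a piece that is reabsorbed into the descent. The spread $\Xi$ itself satisfies an auxiliary recursion in which mixing contracts it by $\rho^2$ while the $Q$ heterogeneous local steps inject a source term bounded through the variance assumption and smoothness; unrolling this recursion and summing the source against a \emph{double} geometric series in $\rho$ — one factor from the mixing contraction, a second from a Cauchy--Schwarz over the history — yields a uniform-in-$t$ bound on $\Xi^{(t)}$ that carries the $\rho^2/(1-\rho)^2$ dependence.

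Third, I would chain the two ingredients. Viewing the whole run as $QT$ local steps with one mixing step inserted after every $Q$-th step, $V$ contracts by $(1-\eta\mu)$ with additive noise $\tfrac{\mathscr L\eta^2\zeta^2}{2}$ on local steps and picks up $O(\mathscr L\rho^2\Xi)$ on mixing steps. Unrolling this scalar recursion over all $QT$ steps produces the homogeneous term $(1-\eta\mu)^{QT}(\mathcal L(\Psi^{(0)})-\mathcal L^\star)$; the accumulated noise sums to $\tfrac{\mathscr L\eta^2\zeta^2}{2}\sum_{j\ge0}(1-\eta\mu)^{j}=\tfrac{\eta\mathscr L\zeta^2}{2\mu}$; and the accumulated mixing error, after substituting the bound on $\Xi$ and carrying out the geometric summation over rounds, contributes $\tfrac{2\eta\mathscr L\rho^2}{\mu(1-\rho)^2}$. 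Collecting the three pieces gives exactly \Cref{eq:formula_thm_converage}.

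The main obstacle is the aggregation analysis. Unlike in standard decentralized SGD, $\Omega^{(t)}$ here is only \emph{row}-stochastic — its rows are softmaxes of cosine similarities between \apv{}s — hence neither symmetric nor doubly stochastic, so $\|\Omega^{(t)}-\tfrac1K\mathbf 1\mathbf 1^\top\|_2<1$ does not by itself deliver the clean averaging-operator contraction. The spectral-gap hypothesis must be exploited carefully: both to bound the per-round client deviation and, more delicately, to set up and close the auxiliary recursion for the spread $\Xi^{(t)}$ whose stationary value is responsible for the $\rho^2/(1-\rho)^2$ term, while ensuring that the local-step contraction and the periodic mixing do not compound destructively across rounds — in particular that the iterates $\Psi_k^{(t)}$ remain in a region where the PL and smoothness assumptions apply. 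Everything else — the descent lemma, the PL contraction, the geometric sums — is routine bookkeeping.
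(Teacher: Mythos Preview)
Your local-step analysis matches the paper's almost verbatim: descent lemma $\Rightarrow$ PL $\Rightarrow$ per-step contraction $(1-\eta\mu)$ with additive $\tfrac{\mathscr L\eta^2\zeta^2}{2}$, iterated $Q$ times. Where you diverge is in the treatment of the aggregation step.

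You propose a \emph{parameter-level} consensus analysis: track the spread $\Xi^{(t)}=\sum_l\|\widetilde\Psi_l-\bar\Psi\|^2$, show that mixing contracts it by $\rho^2$ while local steps inject a bounded source, close an auxiliary recursion on $\Xi$ to obtain its stationary bound with the $\rho^2/(1-\rho)^2$ dependence, and finally feed this into the smoothness upper bound to control the change in $\mathcal L$ across the mixing step. The paper instead works entirely at the \emph{function-value level}: it forms the vector $\mathbf f^{(t-1)}$ with $f_k=\mathcal L_k(\Psi_k^{\mathrm{loc}})-\mathcal L_k^\star$, applies a Jensen-type inequality to get $\mathcal L(\Psi^{(t)})-\mathcal L^\star\le\mathbf p^\top\Omega^{(t-1)}\mathbf f^{(t-1)}$, splits $\Omega=\tfrac1K\mathbf{11}^\top+V$, bounds the residual $\mathbf p^\top V\mathbf r$ by Cauchy--Schwarz and the spectral-gap hypothesis, and then extracts the $\rho^2/(1-\rho)^2$ factor through purely algebraic manipulations ($\sqrt{\bar f}\le1+\bar f$, $1+\tfrac{2\rho^2}{1-\rho}\le(1-\rho)^{-1}$) rather than an auxiliary recursion. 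No parameter spread is ever introduced.

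The paper's route is shorter --- a single scalar recursion on $\mathcal L-\mathcal L^\star$, no separate bookkeeping for $\Xi$ --- but it leans on a Jensen step that formally requires convexity of each $\mathcal L_k$ (the paper cites ``Jensen's inequality and $\mathscr L$-smoothness'' without saying which does the work). Your route is the standard decentralized-SGD template; it needs only smoothness, handles the merely row-stochastic $\Omega^{(t)}$ cleanly (the concern you flag at the end), and makes the origin of the $(1-\rho)^{-2}$ factor more transparent, at the cost of an extra recursion to set up and close.
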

The proof is in \Cref{app:linear_convergence}. In \Cref{eq:linear_convergence}, the first term decays linearly; the second is the classical SGD variance term; the third is the personalization error and vanishes as $\rho \to 0$. Thus \methodname{} can linearly descend to a neighborhood of the global optimum.

\section{Experiments}
\begin{table*}[t]
\caption{Federated node classification results. The reported results are the mean and standard deviation over three different runs. Best performance is highlighted in \textbf{bold}. }
\label{tab:disjoint}
\centering
\resizebox{1\textwidth}{!}{
\renewcommand{\arraystretch}{1.0}
\begin{tabular}{lccc|ccc|ccc}
\toprule
& \multicolumn{3}{c}{\bf Cora} & \multicolumn{3}{c}{\bf CiteSeer} & \multicolumn{3}{c}{\bf Pubmed} \\
\cmidrule(lr){2-4} \cmidrule(lr){5-7} \cmidrule(lr){8-10} 
\textbf{Methods} & \textbf{5 Clients} & \textbf{10 Clients} & \textbf{20 Clients} & \textbf{5 Clients} & \textbf{10 Clients} & \textbf{20 Clients} & \textbf{5 Clients} & \textbf{10 Clients} & \textbf{20 Clients}  \\
\midrule

Local   & \ms{81.30}{0.21} & \ms{79.94}{0.24} & \ms{80.30}{0.25} & \ms{69.02}{0.05} & \ms{67.82}{0.13} & \ms{65.98}{0.17} & \ms{84.04}{0.18} & \ms{82.81}{0.39} & \ms{82.65}{0.03} \\

\midrule

FedAvg   & \ms{74.45}{5.64} & \ms{69.19}{0.67} & \ms{69.50}{3.58} & \ms{71.06}{0.60} & \ms{63.61}{3.59} & \ms{64.68}{1.83} & \ms{79.40}{0.11} & \ms{82.71}{0.29} & \ms{80.97}{0.26} \\
FedProx  & \ms{72.03}{4.56} & \ms{60.18}{7.04} & \ms{48.22}{6.81} & \ms{71.73}{1.11} & \ms{63.33}{3.25} & \ms{64.85}{1.35} & \ms{79.45}{0.25} & \ms{82.55}{0.24} & \ms{80.50}{0.25} \\
FedPer   & \ms{81.68}{0.40} & \ms{79.35}{0.04} & \ms{78.01}{0.32} & \ms{70.41}{0.32} & \ms{70.53}{0.28} & \ms{66.64}{0.27} & \ms{85.80}{0.21} & \ms{84.20}{0.28} & \ms{84.72}{0.31} \\
GCFL     & \ms{81.47}{0.65} & \ms{78.66}{0.27} & \ms{79.21}{0.70} & \ms{70.34}{0.57} & \ms{69.01}{0.12} & \ms{66.33}{0.05} & \ms{85.14}{0.33} & \ms{84.18}{0.19} & \ms{83.94}{0.36} \\
FedGNN   & \ms{81.51}{0.68} & \ms{70.12}{0.99} & \ms{70.10}{3.52} & \ms{69.06}{0.92} & \ms{55.52}{3.17} & \ms{52.23}{6.00} & \ms{79.52}{0.23} & \ms{83.25}{0.45} & \ms{81.61}{0.59} \\
FedGTA  & \ms{71.26}{2.93}  & \ms{68.33}{1.27} & \ms{69.24}{0.91} &  \ms{69.39}{0.75} & \ms{67.34}{1.08} & \ms{65.29}{1.92} & \ms{78.47}{0.25}  & \ms{82.79}{0.20} & \ms{81.92}{0.60}\\
FedSage+ & \ms{72.97}{5.94} & \ms{69.05}{1.59} & \ms{57.97}{12.6} & \ms{70.74}{0.69} & \ms{65.63}{3.10} & \ms{65.46}{0.74} & \ms{79.57}{0.24} & \ms{82.62}{0.31} & \ms{80.82}{0.25} \\
FED-PUB & \ms{83.72}{0.18} & \ms{81.45}{0.12} & \ms{81.10}{0.64} & \ms{72.40}{0.26} & \ms{71.83}{0.61} & \ms{66.89}{0.14} & \ms{86.81}{0.12} & \ms{86.09}{0.17} & \ms{84.66}{0.54} \\
\midrule
\methodname{} & \textbf{\ms{84.57}{0.39}} & \textbf{\ms{82.05}{0.71}}& \textbf{\ms{81.60}{0.64}}  & \textbf{\ms{72.99}{0.82}} & \textbf{\ms{73.16}{0.29}} & \textbf{\ms{68.10}{0.35}} & \textbf{\ms{88.10}{0.16}} & \textbf{\ms{86.43}{0.20}} & \textbf{\ms{84.87}{0.42}} \\

\midrule
\midrule

& \multicolumn{3}{c}{\bf Amazon-Computer} & \multicolumn{3}{c}{\bf Amazon-Photo} & \multicolumn{3}{c}{\bf ogbn-arxiv} \\
\cmidrule(lr){2-4} \cmidrule(lr){5-7} \cmidrule(lr){8-10}
\textbf{Methods} & \textbf{5 Clients} & \textbf{10 Clients} & \textbf{20 Clients} & \textbf{5 Clients} & \textbf{10 Clients} & \textbf{20 Clients} & \textbf{5 Clients} & \textbf{10 Clients} & \textbf{20 Clients} \\
\midrule

Local   & \ms{89.22}{0.13} & \ms{88.91}{0.17} & \ms{89.52}{0.20} & \ms{91.67}{0.09} & \ms{91.80}{0.02} & \ms{90.47}{0.15} & \ms{66.76}{0.07} & \ms{64.92}{0.09} & \ms{65.06}{0.05}\\

\midrule

FedAvg   & \ms{84.88}{1.96} & \ms{79.54}{0.23} & \ms{74.79}{0.24} & \ms{89.89}{0.83} & \ms{83.15}{3.71} & \ms{81.35}{1.04} & \ms{65.54}{0.07} & \ms{64.44}{0.10} & \ms{63.24}{0.13} \\
FedProx  & \ms{85.25}{1.27} & \ms{83.81}{1.09} & \ms{73.05}{1.30} & \ms{90.38}{0.48} & \ms{80.92}{4.64} & \ms{82.32}{0.29} & \ms{65.21}{0.20} & \ms{64.37}{0.18} & \ms{63.03}{0.04} \\
FedPer   & \ms{89.67}{0.34} & \ms{89.73}{0.04} & \ms{87.86}{0.43} & \ms{91.44}{0.37} & \ms{91.76}{0.23} & \ms{90.59}{0.06} & \ms{66.87}{0.05} & \ms{64.99}{0.18} & \ms{64.66}{0.11} \\
GCFL     & \ms{89.07}{0.91} & \ms{90.03}{0.16} & \textbf{\ms{89.08}{0.25}} & \ms{91.99}{0.29} & \ms{92.06}{0.25} & \ms{90.79}{0.17} & \ms{66.80}{0.12} & \ms{65.09}{0.08} & \ms{65.08}{0.04} \\
FedGNN   & \ms{88.08}{0.15} & \ms{88.18}{0.41} & \ms{83.16}{0.13} & \ms{90.25}{0.70} & \ms{87.12}{2.01} & \ms{81.00}{4.48} & \ms{65.47}{0.22} & \ms{64.21}{0.32} & \ms{63.80}{0.05} \\
FedGTA & \ms{85.06}{0.82} & \ms{84.27}{0.71} & \ms{79.46}{0.28} & \ms{89.70}{0.67} & \ms{76.53}{3.21} & \ms{82.02}{0.78} & \ms{65.42}{0.09} & \ms{64.22}{0.08} & \ms{63.75}{0.18} \\
FedSage+ & \ms{85.04}{0.61} & \ms{80.50}{1.30} & \ms{70.42}{0.85} & \ms{90.77}{0.44} & \ms{76.81}{8.24} & \ms{80.58}{1.15} & \ms{65.69}{0.09} & \ms{64.52}{0.14} & \ms{63.31}{0.20} \\
FED-PUB  & \ms{90.25}{0.07} & \ms{89.73}{0.16} & \ms{88.20}{0.18} & \ms{93.20}{0.15} & \textbf{\ms{92.46}{0.19}} & \ms{90.59}{0.35} & \ms{67.62}{0.11} & \ms{66.35}{0.16} & \ms{63.90}{0.27} \\
\midrule
\methodname{} & \textbf{\ms{90.38}{0.08}} & \textbf{\ms{89.92}{0.15}} & \ms{88.35}{0.96} & \textbf{\ms{93.37}{0.26}} & \ms{92.30}{0.29} & \textbf{\ms{90.91}{0.60}} & \textbf{\ms{68.83}{0.15}} & \textbf{\ms{68.50}{0.27}} & \textbf{\ms{65.52}{0.10}}\\
\bottomrule
\end{tabular}
}
\end{table*}

\subsection{Experimental Setup} 

\paragraph{Datasets and Experimental Settings} Following previous works~\citep{zhang2021subgraph, baek2023personalized, zhang2024fedgt}, we construct distributed subgraphs from benchmark datasets by partitioning each original graph into multiple subgraphs corresponding to individual clients. Specifically, we perform experiments on six widely used datasets, including four citation networks (Cora, CiteSeer, Pubmed~\citep{sen2008collective}, and ogbn-arxiv~\citep{hu2020open}) and two product co-purchase networks (Amazon-Computer and Amazon-Photo~\citep{mcauley2015image, shchur2018pitfalls}). We employ METIS~\citep{karypis1997metis} as our default graph partitioning algorithm, which allows explicit specification of the desired number of subgraphs without overlapping. Based on these datasets, we follow the standard experimental setup used in personalized subgraph federated learning literature~\citep{baek2023personalized, zhang2024fedgt}. Specifically, for dataset splitting, we randomly sample 20\%/40\%/40\% of nodes from each subgraph for training, validation, and testing, respectively. The only exception is the ogbn-arxiv dataset, due to its significantly larger size. For this dataset, we randomly select 5\% of the nodes for training, half of the remaining nodes for validation, and the rest for testing. Dataset statistics and implementation details can be found in \Cref{app:exp_details}.

\paragraph{Baselines}
\methodname{} is compared against several representative federated learning (FL) methods, including general FL baselines: FedAvg~\citep{mcmahan2017communication}, FedProx~\citep{li2020federated}, and FedPer~\citep{arivazhagan2019federated}; as well as specialized graph-based FL models~\footnote{We exclude weakly privacy-protected baselines such as FedGCN~\citep{yao2023fedgcn}, GraphFL~\citep{wang2022graphfl}, and FedStar~\citep{tan2023federated} as these methods leak node embeddings, connectivities, or local gradients.}: GCFL~\citep{xie2021federated}, FedGNN~\citep{wu2021fedgnn}, FedGTA~\citep{li2023fedgta}, FedSage+\citep{zhang2021subgraph}, and FED-PUB\citep{baek2023personalized}. Additionally, we consider a local variant of our model (\emph{Local}), where \methodname{} is trained independently at each client without parameter sharing.

\subsection{Main Results}
\begin{wraptable}{r}{0.47\textwidth}
\vspace{-0.7cm}
\centering
\small
\caption{Degree of non-IIDness.  Pubmed exhibits the lowest non-IIDness, and Amazon-Photo has the highest.}
\begin{adjustbox}{width=1\linewidth}
\begin{tabular}{lccccccccc}
\toprule
& \multicolumn{3}{c}{\bf Pubmed} \\
\cmidrule(lr){2-4}
\textbf{Non-IIDness} & \textbf{5 Clients} & \textbf{10 Clients} & \textbf{20 Clients}\\
\midrule
$\xi$ & 0.1316 & 0.1500 & 0.1725 \\
\midrule
\midrule

& \multicolumn{3}{c}{\bf Amazon-Photo} \\
\cmidrule(lr){2-4} 
\textbf{Non-IIDness} & \textbf{5 Clients} & \textbf{10 Clients} & \textbf{20 Clients}\\
\midrule
$\xi$ & 0.3398 & 0.3668 & 0.4307  \\
\bottomrule
\end{tabular}
\end{adjustbox}
\label{tab:non_iid_examples}
\vspace{-0.3cm}
\end{wraptable}
As summarized in \Cref{tab:disjoint}, \methodname{} is the only algorithm that wins every dataset–client-count combination, indicating that the \apv{}-driven personalization generalizes from small citation graphs (Cora, CiteSeer) to large-scale, high-dimensional graphs (ogbn-arxiv). Specifically, relative to the strongest competitor in each column, \methodname{} achieves accuracy improvements ranging from 0.2\% to 2.4\%. The margin over the canonical FedAvg is more pronounced, with an average gain of 4.5\%, underscoring that the proposed auxiliary projection mechanism confers benefits well beyond classical weighted averaging. Further, to quantify statistical heterogeneity (i.e., degree of non-IIDness), we adopt $\xi = \textsc{Jsd} + \textsc{Mmd}$ where the Jensen–Shannon Divergence (\textsc{Jsd}) captures label-distribution skew and the Maximum Mean Discrepancy (\textsc{Mmd}) captures disparities in subgraph structure (formal definition in \Cref{app:qua_hetero}). Higher values of $\xi$ indicate stronger non-IIDness. As reported in \Cref{tab:degree_noniid} of \Cref{app:qua_hetero}, $\xi$ rises monotonically for every dataset as the federation enlarges from 5 to 20 clients, confirming that our partition protocol indeed induces progressively harsher heterogeneity. This trend is mirrored in the performance of all methods, whose accuracies decline with larger client counts. Nevertheless, the drop for \methodname{} is consistently the smallest: on Cora, accuracy falls by only 2.9\%, while FedAvg and FedProx lose nearly 7\%. To highlight the contrast, we single out the most IID dataset Pubmed and the most non-IID dataset Amazon-Photo in \Cref{tab:non_iid_examples}. The results under these settings show that \methodname{} remains the top performer in both extremes, indicating that our model is merely tuned for gentle partitions but retains its edge under pronounced non-IID conditions.

\begin{figure*}[tb]
\centering
\begin{subfigure}[t]
{0.23\textwidth}
  \centering
  \includegraphics[width=\linewidth]{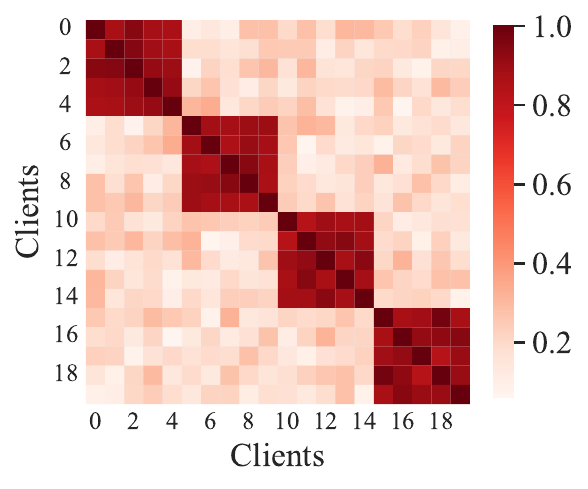}
  \subcaption{Embedding Similarity}
  \label{fig:embedding_share}
\end{subfigure}%
 \hspace{0.1in}
\begin{subfigure}[t]{0.23\textwidth}
  \centering
  \includegraphics[width=\linewidth]{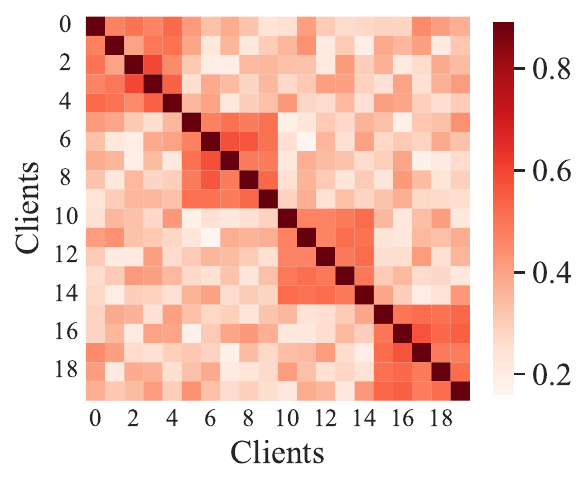}
\subcaption{\apv{}-based Similarity}
\label{fig:apv_embedding_sim}
\end{subfigure}%
 \hspace{0.1in}
\begin{subfigure}[t]{0.23\textwidth}
  \centering
  \includegraphics[width=\linewidth]{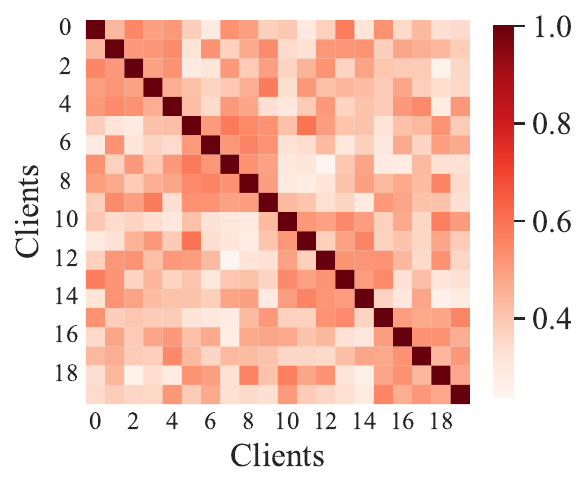}
 \subcaption{Weight Similarity}
 \label{fig:local_embedding}
\end{subfigure}%
 \hspace{0.1in}
\begin{subfigure}[t]{0.23\textwidth}
  \centering
  \includegraphics[width=\linewidth]{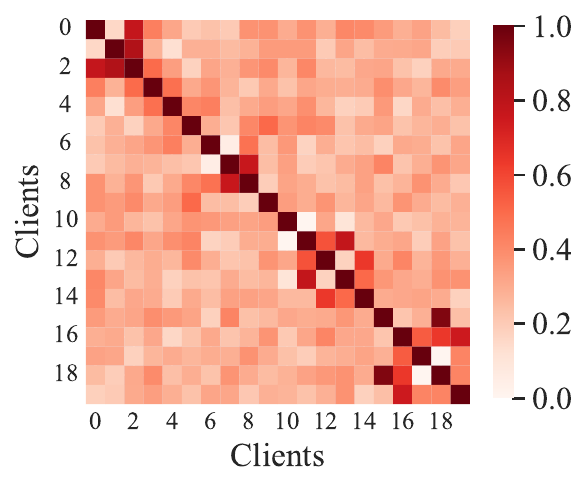}
 \subcaption{Functional Similarity}
\label{fig:func_emb_sim}
\end{subfigure}%
\caption{Client similarity based on different measures. Darker colors indicate higher similarity. }
\label{fig:effectiveness_apv}
\end{figure*}

\begin{table}[h]
\begin{minipage}[t]{0.3\linewidth}
\vspace{0pt}
\centering
\caption{Attack AUC of MIA.}
\label{tab:mia}
\begin{adjustbox}{width=1.1\textwidth}
\begin{tabular}{lccc}
\toprule
\textbf{Dataset} & \textbf{FedGNN} & \textbf{FedGTA}  & \textbf{FedAux} \\
\midrule
\textbf{Cora}       & 0.56 & 0.54  & \textbf{0.51} \\
\textbf{Pubmed}     & 0.58 & 0.55  & \textbf{0.52} \\
\textbf{ogbn-arxiv} & 0.55 & 0.53  & \textbf{0.49} \\
\bottomrule
\end{tabular}
\end{adjustbox}
\end{minipage}
\hfill
\begin{minipage}[t]{0.7\linewidth}
\vspace{0pt}
\centering
    \begin{subfigure}[t]{0.3\textwidth}
            \centering
            \includegraphics[width=\linewidth]{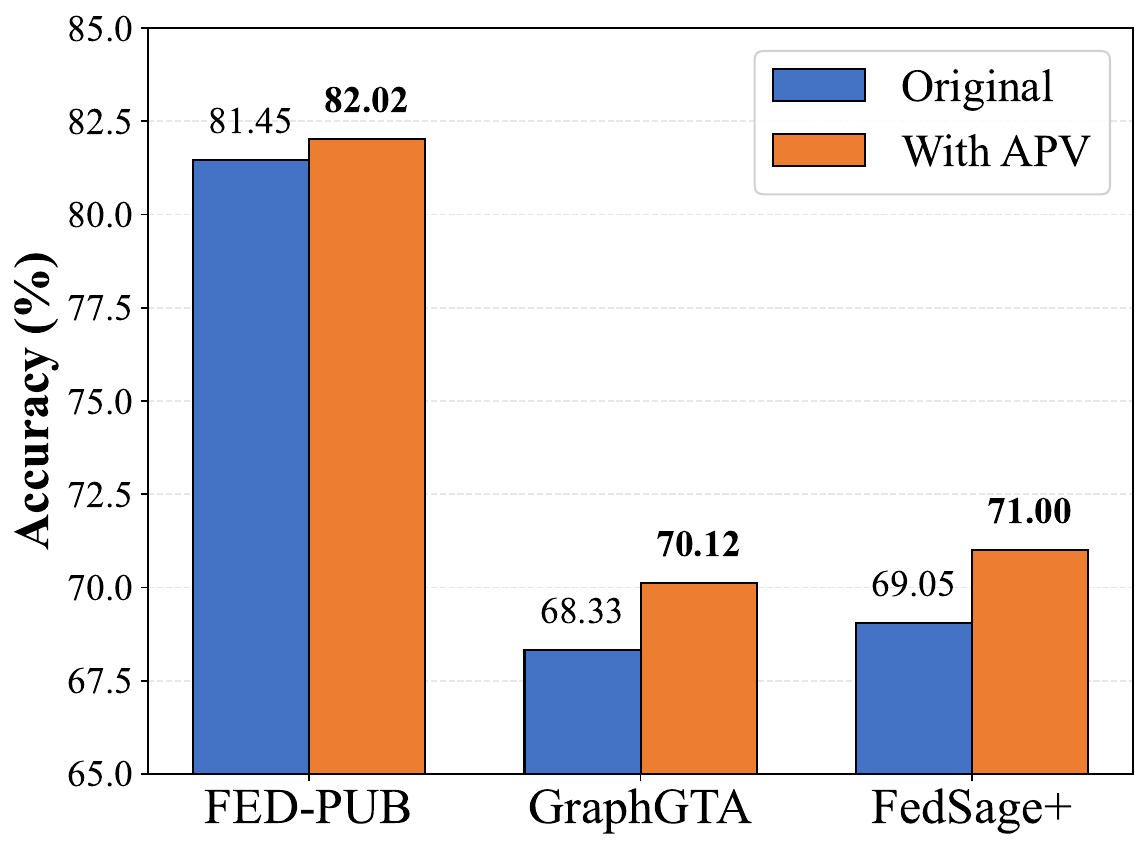}
            \subcaption{Cora}
    \end{subfigure}
    \begin{subfigure}[t]{0.3\textwidth}
            \centering
            \includegraphics[width=\linewidth]{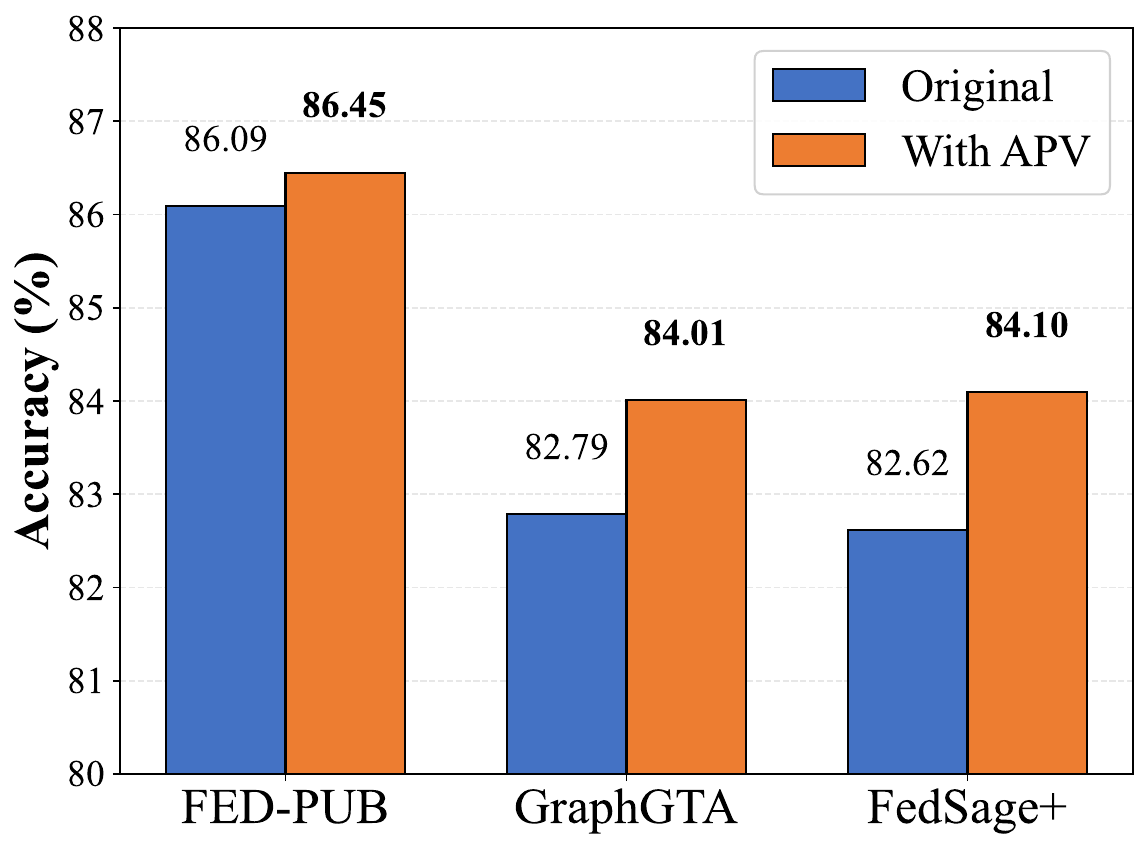}
            \subcaption{Pubmed}
    \end{subfigure}
    \begin{subfigure}[t]{0.3\textwidth}
            \centering
            \includegraphics[width=\linewidth]{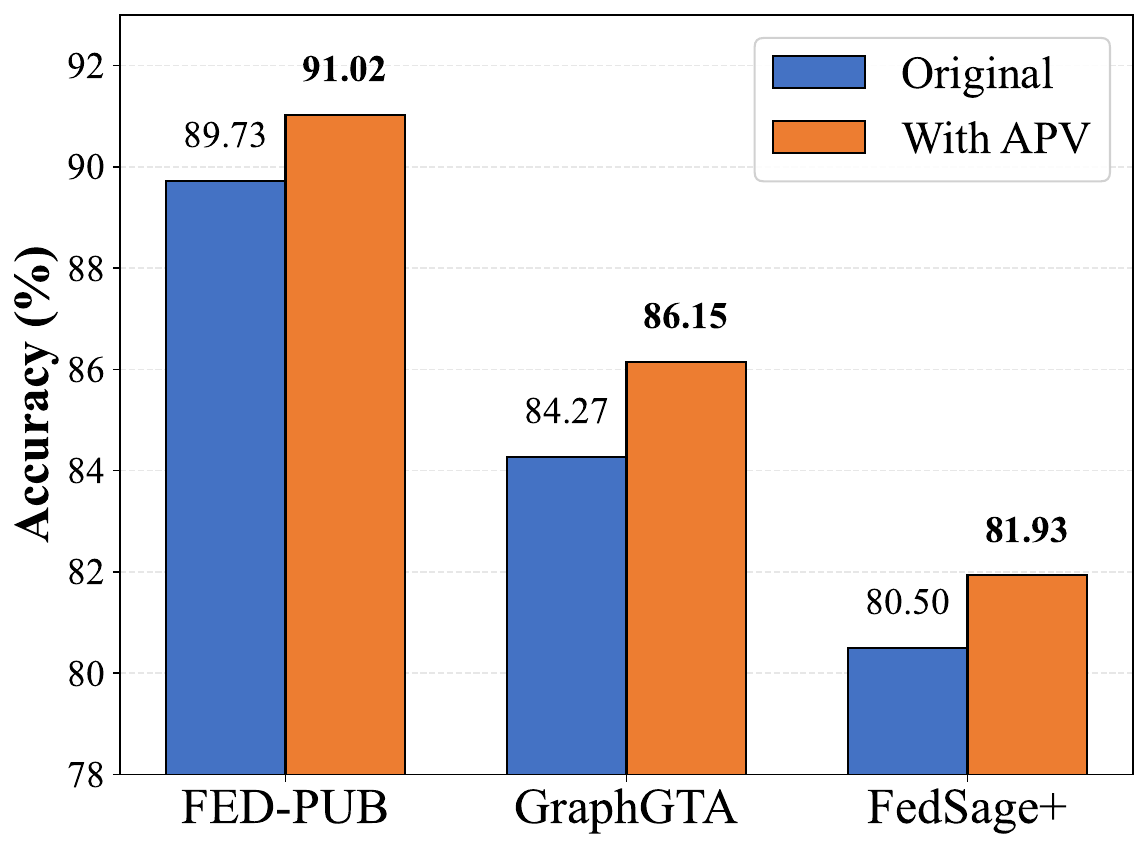}
            \subcaption{Amazon-Computer}
    \end{subfigure}
\captionof{figure}{Transferability of \apv{}.}
\label{fig:transfer}
\end{minipage}
\end{table}

\subsection{Model Analysis}
\paragraph{Effectiveness of \apv{}-based Client Similarity Estimation} To intuitively show that the auxiliary projection vector \apv{} can accurately capture the latent similarity among clients under non-IIDness, we construct a synthetic graph (See \Cref{app:synthetic_graph}) that jointly embodies the two types of heterogeneity: label heterogeneity and subgraph (structural and feature) heterogeneity. Results in \Cref{fig:effectiveness_apv} demonstrate that the \apv{}-based client similarity can best recover the ground‑truth client relationships. Each \apv{} converges to the principal axis of its client’s embedding distribution, yielding highly aligned directions within the same group and near‑orthogonal directions across different groups (\Cref{thm:fidelity}). Thus \apv{} can serve as a privacy‑preserving yet information‑rich descriptor in subgraph FL.

\begin{figure*}[!t]
    \begin{minipage}{0.49\linewidth}
        \begin{minipage}{0.49\linewidth}
            \centerline{\includegraphics[width=0.975\linewidth]{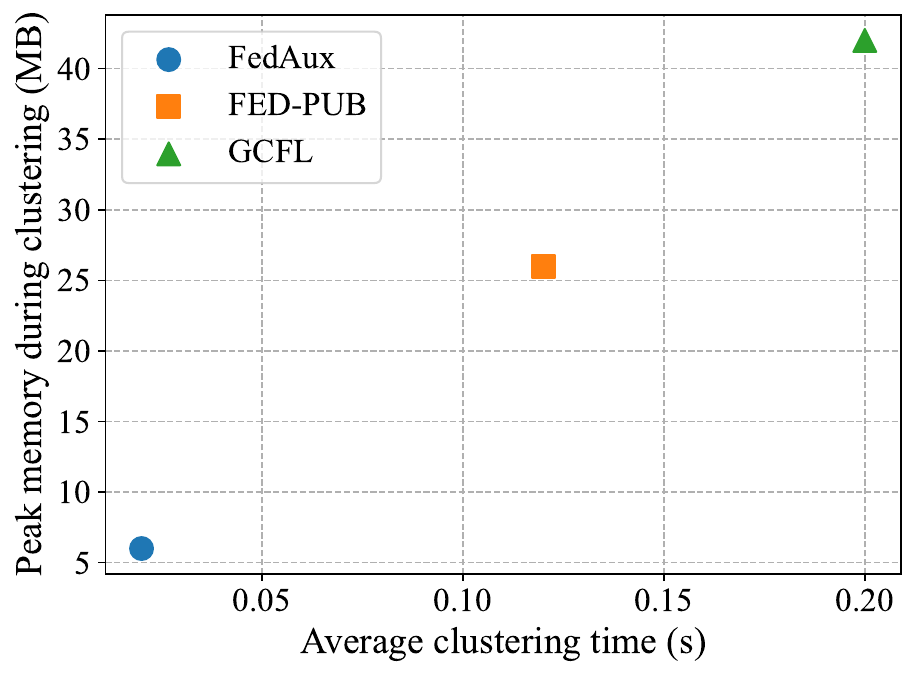}}
            \vspace{-0.05in}
            \subcaption{\scriptsize Cora with 10 clients}
        \end{minipage}
        \begin{minipage}{0.49\linewidth}
            \centerline{\includegraphics[width=0.975\linewidth]{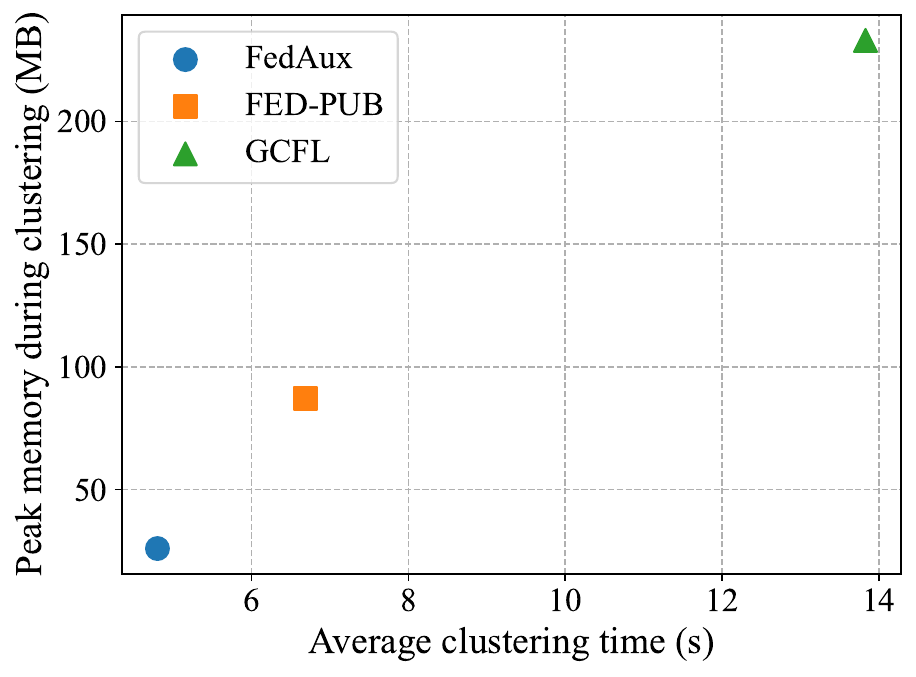}}
            \vspace{-0.05in}
            \subcaption{\scriptsize ogbn-arxiv with 10 clients}
        \end{minipage}
        \centering
        \caption{Server-side clustering cost.}
        \label{fig:time_vs_memory}
    \end{minipage}
    \hfill
    \begin{minipage}{0.49\linewidth}
        \vspace{-0.03in}
        \begin{minipage}{0.49\linewidth}
            \centering
            \includegraphics[width=1\linewidth]{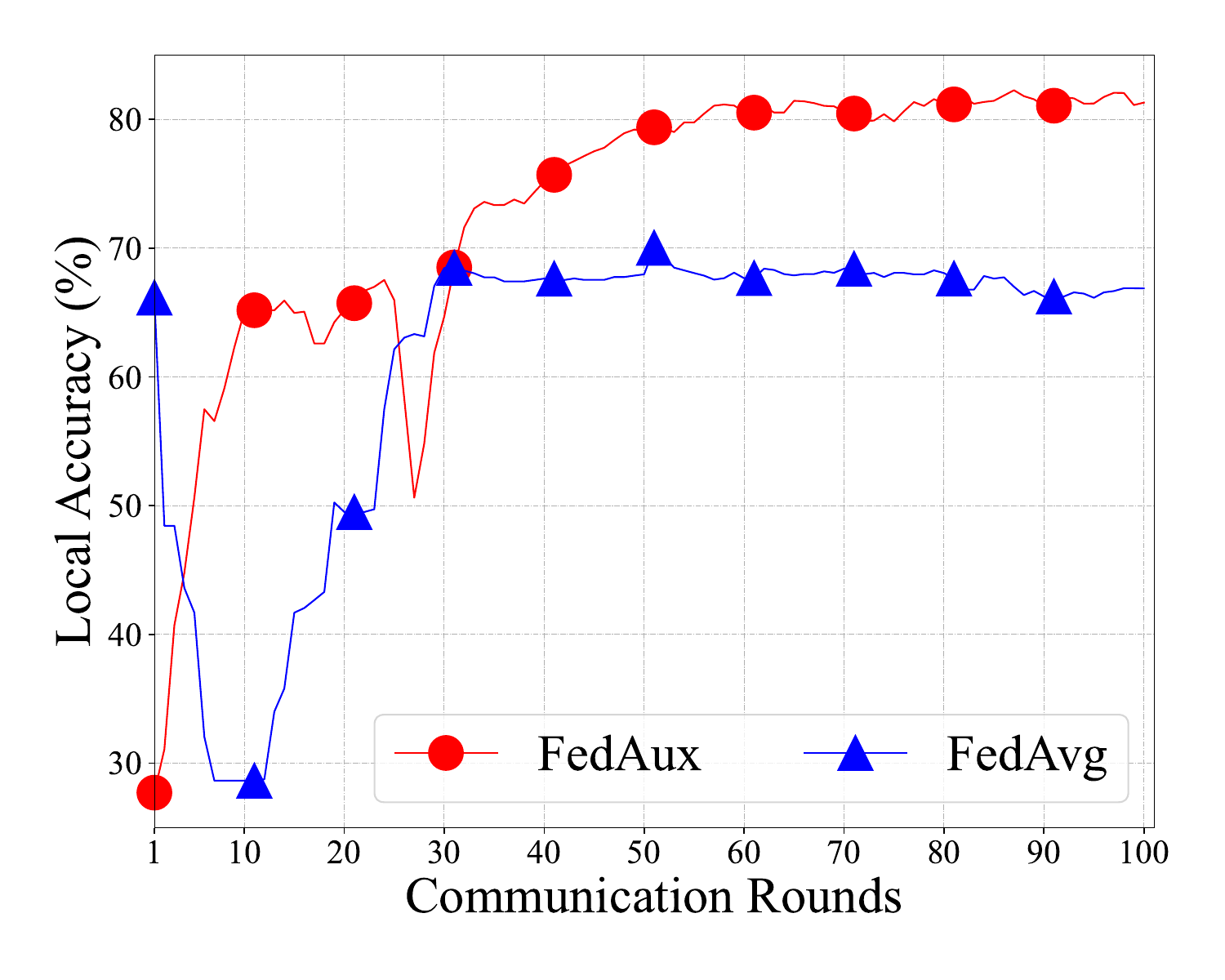}
            \vspace{-0.23in}
            \subcaption{\scriptsize Cora with 10 clients}
        \end{minipage}
        \begin{minipage}{0.49\linewidth}
            \centering
            \includegraphics[width=1\linewidth]{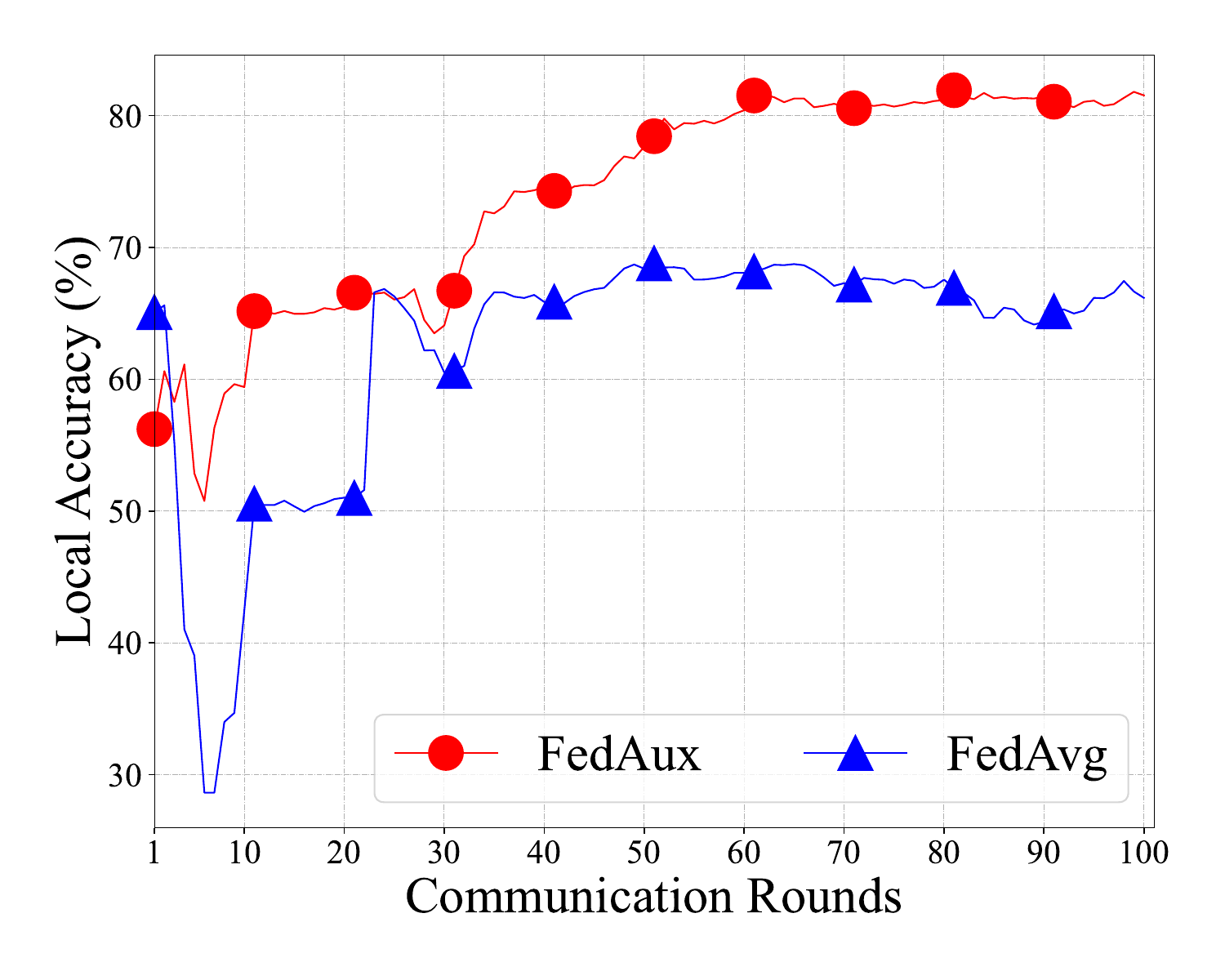}
            \vspace{-0.23in}
            \subcaption{\scriptsize Cora with 20 clients}
        \end{minipage}
        \caption{Averaged local accuracy concerning communication rounds.}
    \label{fig:convergence}
    \end{minipage}
\end{figure*}

\paragraph{Membership Inference Attacks (MIA)} We add a comprehensive empirical privacy evaluation via MIAs to rigorously test whether our \apv{}-based framework leaks sensitive information. We assume an honest‑but‑curious server with full access to the entire history of a client's \apv{} $\boldsymbol{a}_k$ and its GNN parameters $\theta_k$. The goal of the server is to decide whether a probe embedding $h_{\text{prob}}$ originated from client $G_k$, using only $(\boldsymbol{a}_k, \theta_k)$, while raw data, node embeddings, and gradients are never shared. Following the standard MIA methodology~\citep{nasr2019comprehensive,shokri2017membership}, the server trains an attack classifier $g_k(\boldsymbol{a}_k,\theta_k, h_{\text{prob}})$ implemented as a two‑layer MLP with hidden size 64, to output the probability that $h_{\text{prob}}$ belongs to the training set of $G_k$. The attack is trained by maximum likelihood on a held‑out mixture of member against non‑member probes. In \Cref{tab:mia}, we compare \methodname{} against representative subgraph FL baselines on three datasets using identical client partitions to those in our main experiments, and report the Attack AUC (lower values indicate better privacy) averaged over five random seeds. The results show that MIAs against FedAux achieve AUCs in the range $[0.49, 0.52]$, which is indistinguishable from random guessing. This demonstrates that \apv{}s do not leak sensitive membership information and actually provide stronger privacy than baselines.

\paragraph{Transferability of \apv{}} To examine whether \apv{} can generalize beyond \methodname{} and serve as a plug-and-play personalization module, we integrate it into representative subgraph FL baselines. Specifically, we replace FED-PUB’s functional embedding similarity with \apv{}, and jointly train \apv{} with GraphGTA and FedSage+. All variants are trained under the same 10-client non-IID split on three datasets. \Cref{fig:transfer} shows that \apv{} consistently improves each baseline across datasets. For example, GraphGTA with \apv{} achieves 1\%-2\% higher accuracy than its original counterpart. These improvements confirm that \apv{} functions as a generic, lightweight, and privacy-preserving similarity proxy that can be seamlessly integrated with diverse subgraph FL frameworks.

\paragraph{Clustering Efficiency on Server} To perform personalized FL, FED-PUB uses soft clustering by running a proxy graph through each client model and comparing embeddings. GCFL applies hard clustering, grouping clients with a Stoer-Wagner cut on cross-round gradients. Our \apv{}-based method only computes similarity between the uploaded vectors, requiring no extra forward passes, gradient logging or cut computation, so server overhead is minimal. As \Cref{fig:time_vs_memory} illustrates, the \apv{} method attains the fastest clustering time and the lowest peak memory, making it more efficient at scale. 

\paragraph{Convergence Rates} \Cref{fig:convergence} compares the convergence behavior of \methodname{} and FedAvg. Notably, \methodname{} converges by the 60th communication round in both the 10‑client and 20‑client settings. Since the latter involves a higher level of data heterogeneity, it indicates that \methodname{} maintains a consistent convergence speed even as the degree of non‑IIDness increases.

In \Cref{app:more_exp}, we conduct an ablation study to investigate the impact of our proposed continuous aggregation scheme, which encodes local node relationships in the 1D space induced by the \apv{},  as well as the effect of the server-side \apv{}-based personalized federated aggregation. We also conduct experiments to analyze the sensitivity to hyperparameters.




\section{Related Work}
For general \textbf{Federated Learning}, FedAvg~\citep{mcmahan2017communication} first demonstrated that deep models can be trained on decentralized data with iterative model averaging. Subsequent work revealed that statistical and systems heterogeneity slow or destabilize FedAvg’s convergence, which has been formally addressed by proximal correction~\citep{li2020federated}, variance reduction~\citep{karimireddy2020scaffold}, and data–distribution smoothing through a small globally shared subset of samples~\citep{zhao2018federated}. Optimization refinements such as normalized aggregation~\citep{wang2020tackling}, adaptive server updates~\citep{reddiadaptive} and dynamic regularization~\citep{acarfederated} further tighten convergence guarantees under extreme non-IID settings. Beyond a single global model, personalization frameworks, e.g., Ditto~\citep{li2021ditto}, which jointly optimizes a shared model and client-specific objectives, explicitly trade off fairness, robustness, and local adaptation. These advances establish the algorithmic and theoretical foundations on which federated graph learning is built.

For \textbf{Graph Federated Learning}~\citep{xie2023federated,li2024fedne,yue2024federated,ye2023personalized}, at the node level, FedGCN~\citep{yao2023fedgcn} illustrates that one-shot encrypted exchange can suffice to federate GCNs while maintaining accuracy and privacy; GCFL~\citep{xie2021federated} clusters clients by gradient dynamics to mitigate structural and feature shift across graphs. To combat cross-domain heterogeneity, FedStar~\citep{tan2023federated} extracts a domain-invariant topology that generalizes across diverse graphs, and FedGraph~\citep{he2021fedgraphnn} augments local data by requesting node information from other clients. When each participant owns only a fragment of a larger network, subgraph federated learning methods such as FedSage/FedSage+~\citep{zhang2021subgraph} generate virtual neighbours to repair missing cross-subgraph edges. FED-PUB~\citep{baek2023personalized} proposes to generate functional embeddings to evaluate the similarity between clients for personalized aggregation. These models collectively highlight an open challenge for personalized graph FL, i.e., accurate client similarity measures, which our proposed \methodname{} addresses
through end-to-end learning of auxiliary projection vectors.

Several new methods further extend personalization but come with non-trivial trade-offs. FedSSP~\citep{tan2024fedssp} transmits spectral components that are invariant across domains but can leak sensitive local spectral information. FedEgo~\citep{zhang2023fedego} shares ego-network embeddings, which directly expose structural patterns. PFGNAS~\citep{fang2025large} leverages prompt-based neural architecture search without server-side similarity estimation, but its reliance on LLM-based personalization introduces prohibitive cost. FedGrAINS~\citep{ceyani2025fedgrains} personalizes within each client by learning with a GFlowNet, thereby avoiding server-side mixing, but at the expense of training an additional model with high compute and memory overhead. While the abovementioned methods contribute interesting ideas, they either compromise privacy guarantees (e.g., FedSSP, FedEgo) or impose heavy computational costs (e.g., PFGNAS, FedGrAINS), limiting their practicality. In contrast, our method learns Auxiliary Projection Vectors (\apv{}) in an end-to-end manner: lightweight, privacy-preserving client signatures that avoid raw data or embedding leakage, yet enable effective similarity-aware aggregation for personalized subgraph FL.

\section{Conclusion}
We present \methodname{}, a personalized subgraph federated learning framework that augments each local GNN with learnable auxiliary projection vectors (\apv{}s). Specifically, besides the global GNN parameters, the server initializes and distributes \apv{}s to each client, enabling effective and privacy-preserving characterization of local subgraph structures. By continuously projecting node embeddings onto a 1D space induced by these \apv{}s, local models adaptively refine the \apv{}s to optimally capture node relationships within each client's subgraph. After each communication round, the server leverages similarity between client \apv{}s to perform personalized model aggregation. Extensive experiments across multiple datasets with varying numbers of clients validate the effectiveness of \apv{}s as informative descriptors for personalized subgraph FL.

\section*{Acknowledgements}
The research is supported, in part, by the Ministry of Education, Singapore, under its Academic Research Fund Tier 1 (RG101/24).

\bibliography{reference}
\bibliographystyle{biblio}

\clearpage
\appendix
\section{More Discussion of Related Work} \label{app:details_pFL}
\vspace{-0.5em}
In subgraph federated learning, each client holds a local subgraph $G_k$ of a global graph $G$. These subgraphs can vary substantially in their feature distributions, structural/topological properties, and label distributions. Thus, simply applying FedAvg may fail to converge properly or yield a suboptimal global model, because clients often learn different parameters tailored to their local subgraphs, and blindly averaging those parameters disregards the non-IID nature of the data. To address this, recent studies move beyond basic FedAvg by introducing personalized or locality-aware aggregation schemes that better handle heterogeneous subgraph data.

GCFL~\citep{xie2021federated} compares each client’s parameter updates before and after communication. If a client’s update deviates substantially from the majority, it is deemed too different and is excluded from the dominant aggregation cluster. However, GCFL relies on hard clustering, which can not capture finer-grained similarities across clients. Moreover, it depends on manually tuned hyperparameters to control how different a client must be before exclusion, leaving it unclear how large a deviation is tolerable without enough knowledge about the raw data. FED-PUB~\citep{baek2023personalized}, on the other hand, constructs a proxy graph at the server and evaluates model outputs from each client on the proxy graph to measure their similarity. Clients with similar outputs on the proxy graph are deemed more alike and thus aggregated more closely. Yet building a suitable proxy graph as the middleware to measure the similarity between clients is nontrivial, because it requires simulating the real graph data on the server.

In this work, we propose an orthogonal approach to measure and aggregate models on the server. Specifically, beyond distributing a global GNN, the server also provides a global Auxiliary Projection Vector (\apv{}) to each client at the start of every communication round, as shown in the left of \Cref{fig:framework}. During local training, the \apv{} is jointly optimized with the client’s GNN parameters to form a one-dimensional space onto which node embeddings are projected. This procedure tailors the \apv{}, which adjusts the shape of this space, to the unique distribution of each subgraph by preserving its distinctive patterns. Once training concludes, clients upload their updated GNN parameters and local \apv{} to the server, which then compares these learned \apv{}s to gauge similarity and detect finer-grained, continuous heterogeneity across clients. By discarding the hard clustering thresholds used in GCFL and removing the need for a proxy dataset used in FED-PUB, our method offers a more flexible and efficient strategy to identify and aggregate similar subgraphs.

Another advantage of our method lies in its privacy-preserving design. We argue that sharing detailed node embeddings, as in FedGCN~\citep{yao2023fedgcn}, can leak private subgraph information, as adversaries on the server could compute pairwise similarities among embeddings to reconstruct local connectivity. Moreover, due to high dimensionality and lack of explicit structural encoding, directly comparing parameter matrices to measure raw data similarity is both unreliable and computationally unstable. In contrast, our proposed \apv{} is a compact parameter vector that preserves essential node relationships without exposing the actual node embeddings, offering stronger privacy guarantees and structure-awareness. Additionally, its low-dimensional nature makes similarity computation more efficient and robust.
\section{Pseudo Code of \methodname{}} \label{app:pseudo}
The overall training algorithm is shown in \Cref{alg:framework}.

\begin{algorithm}[t]
\caption{\methodname{}: Subgraph FL with Differentiable Auxiliary Projections}
\label{alg:framework}
\KwIn{%
    number of clients $K$; global communication rounds $T$;  
    local steps per round $Q$; learning rates $\eta$;  
    similarity temperature $\alpha$}
\kwInit{%
    server initializes global GNN weights $\theta^{(0)}$ and \apv{} $\boldsymbol{a}^{(0)}\sim\mathcal N(0,I_{d^\prime})$;  
    each client $G_k$ sets $\theta_k^{(0)}\!\leftarrow\!\theta^{(0)}$, $\boldsymbol{a}_k^{(0)}\!\leftarrow\!\boldsymbol{a}^{(0)}$}
\For{$t \gets 1$ \KwTo $T$}{
  \tcp{\textbf{Server $\longrightarrow$ Clients}}
  broadcast $\{(\theta^{(t-1)}_k, \boldsymbol{a}^{(t-1)}_k)\}^K_{k = 1}$ to each client $\{G_k\}_{k=1}^K$\;

  \tcp{\textbf{Local training on each client $G_k$ (runs in parallel)}}
  \ForPar{each client $G_k\!\in\! \{G_1,\dots,G_K\}$}{
    $(\theta_k,\boldsymbol a_k)\leftarrow(\theta^{(t-1)}_k,\boldsymbol a^{(t-1)}_k)$\;
    \For{$q \gets 1$ \KwTo $Q$}{
      run local GNN forward pass with $h^{(t-1)}_{k, i} = f_{\theta^{(t-1)}_k}(v_i)$ to learn node embeddings as \Cref{eq:gnn_emb}\;
      compute similarity scores $s^{(t-1)}_k$ with $s^{(t-1)}_{k,i} = \langle \hat{h}^{(t-1)}_{k,i}, \boldsymbol{a}_k^{(t-1)}\rangle$\;
      compute kernel weights $\kappa (s^{(t-1)}_{k,i}, s^{(t-1)}_{k,j})$\;
      compute kernel-based aggregated embeddings $z^{(t-1)}_{k, i}$ with \Cref{eq:kernel_emb}\;
      concatenate $r^{(t-1)}_{k,i} = [h^{(t-1)}_{k, i} \| z^{(t-1)}_{k, i}]$\;
      forward through the classifier; compute loss $\mathcal L_{CE}$\;
      update $(\theta^{(t-1)}_k,\boldsymbol a^{(t-1)}_k)$ w.r.t. $\mathcal L_{CE}$ with learning rate $\eta$\;
      \Indm
    }
    upload $(\theta^{(t-1)}_k,\boldsymbol{a}^{(t-1)}_k)$ to server\;
  }

  \tcp{\textbf{Server‑side personalised aggregation}}
  compute $w^{(t-1)}_{k,l}$ with \Cref{eq:kernel_weight}\;
  \For{each client $G_k$}{
    $\theta_k^{(t)}\leftarrow\sum_{l=1}^K w^{(t-1)}_{k, l} \theta_l^{(t-1)}$\; 
    $\boldsymbol{a}_k^{(t)}\leftarrow\sum_{l=1}^K w^{(t-1)}_{k, l} \boldsymbol{a}_l^{(t-1)}$\;
  }
  send $(\theta_k^{(t)},\boldsymbol{a}_k^{(t)})$ back to client $G_k$\;
}
\KwOut{personalized models $\bigl\{\theta_k^{(T)},\;\mathbf{a}_k^{(T)}\bigr\}_{k=1}^{K}$ for $K$ clients}
\end{algorithm}

\section{Proofs} \label{app:allproofs}
\subsection{Proof of \Cref{thm:fidelity}} \label{app:proof_fidelity}
The proof relies on two mild structural assumptions, both typical in optimization analyses of deep models.
\begin{assumption}[Centred embeddings]\label{ass:centred_emb}
 $\bar{h}:=\frac{1}{N} \sum_i h_i=\mathbf{0}$, where $N$ is the number of nodes in the current client.
\end{assumption}

\begin{assumption}[Linear classifier Jacobian] \label{ass:linear_cfr_jaco}
    Let $g_i :=  \nabla_{z_i} \mathcal{L} (\theta, \boldsymbol{a})$. Assuming that the mapping $z_i \mapsto g_i$ is linear: $g_i = \mathbf{W} z_i$ for some matrix $\mathbf{W} \in \mathbb{R}^{d^\prime \times d^\prime}$.
\end{assumption}
\Cref{ass:linear_cfr_jaco} holds exactly for a linear‑softmax classifier and is a first‑order approximation for MLPs. Then we introduce the auxiliary lemmata.

\begin{lemma}[Gradient of similarity score] \label{lma:gradient_sim_score}
For node $v_i$, $\frac{\partial s_i}{\partial \boldsymbol{a}}=h_i-s_i \boldsymbol{a}$ holds.
\end{lemma}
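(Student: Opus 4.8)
The plan is to obtain the identity by differentiating the score $s_i$ directly from its definition, with the one caveat that the relevant gradient is the one seen by the learning dynamics, which keep $\boldsymbol{a}$ on the unit sphere $\mathbb{S}^{d-1}$ (this is exactly the re-normalization $\Pi_{\mathbb{S}^{d-1}}$ that appears in \Cref{thm:fidelity}). Concretely, I would work with the score in the form $s_i = \langle h_i, \boldsymbol{a}\rangle / \|\boldsymbol{a}\|_2$, where $h_i$ absorbs the data-dependent but $\boldsymbol{a}$-independent scaling $1/\max_j\|h_j\|$ from the normalization step, so that this scaling contributes nothing to $\nabla_{\boldsymbol{a}} s_i$. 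Stating the unit-norm convention on $\boldsymbol{a}$ at the outset is what makes the $-s_i\boldsymbol{a}$ term appear; without it one would merely get $\nabla_{\boldsymbol{a}} s_i = \hat h_i$.

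The computation itself is short. Using $\nabla_{\boldsymbol{a}}\langle h_i,\boldsymbol{a}\rangle = h_i$ and $\nabla_{\boldsymbol{a}}\|\boldsymbol{a}\|_2 = \boldsymbol{a}/\|\boldsymbol{a}\|_2$, the quotient rule gives
\[
\nabla_{\boldsymbol{a}} s_i \;=\; \frac{h_i}{\|\boldsymbol{a}\|_2} \;-\; \frac{\langle h_i,\boldsymbol{a}\rangle}{\|\boldsymbol{a}\|_2^{2}}\cdot\frac{\boldsymbol{a}}{\|\boldsymbol{a}\|_2}.
\]
Evaluating on the manifold $\|\boldsymbol{a}\|_2 = 1$ on which the Oja-style update operates, and recognizing $\langle h_i,\boldsymbol{a}\rangle = s_i$, the right-hand side collapses to $h_i - s_i\boldsymbol{a}$, which is the claim. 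An equivalent (and perhaps cleaner) route is to note that the Euclidean gradient of $\langle h_i,\boldsymbol{a}\rangle$ is $h_i$ and that, since $\boldsymbol{a}$ is confined to $\mathbb{S}^{d-1}$, the operative (Riemannian) gradient is the orthogonal projection of $h_i$ onto the tangent space $T_{\boldsymbol{a}}\mathbb{S}^{d-1} = \{x : \langle x,\boldsymbol{a}\rangle = 0\}$, namely $(\mathbf{I} - \boldsymbol{a}\boldsymbol{a}^\top)h_i = h_i - (\boldsymbol{a}^\top h_i)\boldsymbol{a} = h_i - s_i\boldsymbol{a}$. I would present whichever derivation is shorter in the final write-up.

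There is no deep obstacle here; the only point requiring care is the normalization bookkeeping, and that is exactly where a casual reader might object that the main-text definition $s_i = \langle \hat h_i,\boldsymbol{a}\rangle$ has the innocuous gradient $\hat h_i$. I would therefore make explicit in the lemma's proof that (i) the $1/\max_j\|h_j\|$ factor is $\boldsymbol{a}$-independent and hence can be folded into $h_i$, and (ii) $\boldsymbol{a}$ is treated as a unit vector throughout the analysis, so the $-s_i\boldsymbol{a}$ correction is precisely the component of $h_i$ that is invisible to a norm-constrained $\boldsymbol{a}$. Finally I would flag that this is the form needed downstream: combined with $\partial \mathcal{K}_{ij}/\partial s_i = -\tfrac{2}{\sigma^2}(s_i - s_j)\mathcal{K}_{ij}$ and \Cref{ass:linear_cfr_jaco}, chaining $g_i^\top z_i$-type terms through $\nabla_{\boldsymbol{a}} s_i = h_i - s_i\boldsymbol{a}$ is what produces the covariance contraction $-\tfrac{2}{\sigma^2}\mathbf{C}\boldsymbol{a}$ plus the $\mathcal{O}(\sigma^0)$ remainder in \Cref{thm:fidelity}.
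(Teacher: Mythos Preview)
Your proposal is correct and essentially matches the paper's proof: the paper computes the Euclidean gradient $\partial s_i/\partial \boldsymbol{a}=h_i$ from $s_i=\boldsymbol{a}^\top h_i$ with $\|\boldsymbol{a}\|=1$, and then invokes the unit-sphere re-projection to replace it by the tangential component $h_i-s_i\boldsymbol{a}$, which is exactly your second (Riemannian projection) route. Your quotient-rule derivation is a harmless variant that lands at the same expression once evaluated at $\|\boldsymbol{a}\|_2=1$.
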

\begin{proof}
    $s_i = \boldsymbol{a}^\top h_i$, with $\|\boldsymbol{a}\| = 1$. Hence $\frac{\partial s_i}{\partial \boldsymbol{a}} = h_i$. Because we will always re‑project $\boldsymbol{a}$ on the unit sphere after every update, the tangential component $h_i-s_i \boldsymbol{a}$ is the effective gradient, and the radial component vanishes.
\end{proof}

\begin{lemma}[Gradient of kernel entry]\label{lma:gradient_kernel_entry}
    $\frac{\partial \mathcal{K}_{i j}}{\partial \boldsymbol{a}}=-\frac{2}{\sigma^2}\left(s_i-s_j\right) \mathcal{K}_{i j}\left[\left(h_i-h_j\right)-\left(s_i-s_j\right) \boldsymbol{a}\right]$.
\end{lemma}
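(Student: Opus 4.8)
The plan is to obtain the identity by a direct application of the chain rule, reusing \Cref{lma:gradient_sim_score} for the inner derivatives. Recall that $\mathcal{K}_{ij} = \exp\!\bigl(-(s_i-s_j)^2/\sigma^2\bigr)$ with $s_i = \boldsymbol{a}^\top h_i$. Writing $\delta_{ij} := s_i - s_j$, so that $\mathcal{K}_{ij} = \exp(-\delta_{ij}^2/\sigma^2)$, differentiation of this scalar exponential with respect to $\boldsymbol{a}$ gives $\nabla_{\boldsymbol a}\mathcal{K}_{ij} = \mathcal{K}_{ij}\,\nabla_{\boldsymbol a}\bigl(-\delta_{ij}^2/\sigma^2\bigr) = -\tfrac{2}{\sigma^2}\,\delta_{ij}\,\mathcal{K}_{ij}\,\nabla_{\boldsymbol a}\delta_{ij}$. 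Hence the whole computation reduces to evaluating $\nabla_{\boldsymbol a}\delta_{ij}$.

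The second step is to expand $\nabla_{\boldsymbol a}\delta_{ij} = \nabla_{\boldsymbol a}s_i - \nabla_{\boldsymbol a}s_j$ and substitute \Cref{lma:gradient_sim_score}, namely $\nabla_{\boldsymbol a}s_i = h_i - s_i\boldsymbol a$ (the effective, tangential gradient after the unit-norm re-projection). This yields $\nabla_{\boldsymbol a}\delta_{ij} = (h_i - s_i\boldsymbol a) - (h_j - s_j\boldsymbol a) = (h_i - h_j) - (s_i - s_j)\boldsymbol a$. Plugging this back into the expression from the first step gives exactly $\nabla_{\boldsymbol a}\mathcal{K}_{ij} = -\tfrac{2}{\sigma^2}(s_i-s_j)\mathcal{K}_{ij}\bigl[(h_i-h_j)-(s_i-s_j)\boldsymbol a\bigr]$, which is the claim.

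The only bookkeeping worth spelling out is consistency of conventions with \Cref{lma:gradient_sim_score}. As there, I treat $h_i$ as the scale-normalized embedding $\hat h_i$, so that $s_i = \boldsymbol a^\top h_i$; the common normalizer $\max_j\|h_j\|$ is a constant with respect to $\boldsymbol a$ at a fixed forward pass and therefore drops out of the derivative (equivalently, it is absorbed into the definition of $h_i$). I also adopt the same convention that the radial component of the gradient is projected out after each update, which is why only the tangential part $h_i - s_i\boldsymbol a$ enters; keeping the raw Euclidean gradient $h_i$ instead would give $-\tfrac{2}{\sigma^2}(s_i-s_j)\mathcal{K}_{ij}(h_i-h_j)$, and the two expressions agree on the tangent space of $\mathbb{S}^{d-1}$, so this choice is harmless and matches the statement of \Cref{thm:fidelity}.

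There is no substantive obstacle here: the argument is a one-line chain rule plus one substitution. The sole place to exercise care is making sure the factor $-2/\sigma^2$, the linear factor $(s_i-s_j)$, and the sign in the bracketed term all survive the composition correctly, and that the normalization/re-projection conventions used are precisely those fixed in \Cref{lma:gradient_sim_score}.
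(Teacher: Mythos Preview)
Your proposal is correct and follows exactly the paper's approach: apply the chain rule to $\mathcal{K}_{ij}=\exp\!\bigl(-(s_i-s_j)^2/\sigma^2\bigr)$ and plug in \Cref{lma:gradient_sim_score} for $\partial(s_i-s_j)/\partial\boldsymbol a$. The extra remarks on normalization and the tangential-gradient convention are fine elaborations but not additional content beyond the paper's one-line proof.
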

\begin{proof}
    Apply the chain rule to $\mathcal{K}_{i j} = \kappa (s_i, s_j) =\exp \left(-\frac{1}{\sigma^2}\left(s_i-s_j\right)^2\right)$ and invoke \Cref{lma:gradient_sim_score} for $\partial (s_i - s_j)/\partial \boldsymbol{a}$.
\end{proof}

\begin{lemma}[Gradient of the kernel‑smoothed embedding] \label{lma:gradient_kernel_emb}
    Define the normalized kernel weights:
    \begin{equation}
        \beta_{ij} := \frac{\mathcal{K}_{ij}}{ M_i}, \quad \sum_{j} \beta_{ij} = 1.
        \label{eq:beta}
    \end{equation}
    Then we have 
    \begin{equation}
        \frac{\partial z_i}{\partial \boldsymbol{a}}=-\frac{2}{\sigma^2} \sum_{j=1}^N \beta_{i j}\left(s_i-s_j\right)\left(h_j-z_i\right) \otimes\left[\left(h_i-h_j\right)-\left(s_i-s_j\right) \boldsymbol{a}\right],
        \label{eq:gradient_kernel_emb}
    \end{equation}
    where $\otimes$ denotes the outer product.
\end{lemma}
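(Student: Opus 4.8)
The plan is a direct application of the quotient rule, treating the GNN embeddings $\{h_j\}_{j=1}^N$ as constants with respect to $\boldsymbol{a}$ (they are produced by $f_{\theta_k}$ and never involve the \apv{}), so that each kernel entry $\mathcal{K}_{ij}$ depends on $\boldsymbol{a}$ only through the scalar scores $s_i=\boldsymbol{a}^\top h_i$. First I would write the kernel-smoothed embedding of \Cref{eq:kernel_emb} as a ratio $z_i=\mathbf{n}_i/M_i$, with vector numerator $\mathbf{n}_i:=\sum_{j=1}^N \mathcal{K}_{ij}h_j\in\mathbb{R}^{d'}$ and scalar denominator $M_i:=\sum_{j=1}^N \mathcal{K}_{ij}$. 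Since the denominator is scalar, the Jacobian satisfies
\begin{equation}
\frac{\partial z_i}{\partial \boldsymbol{a}}=\frac{1}{M_i}\frac{\partial \mathbf{n}_i}{\partial \boldsymbol{a}}-\frac{1}{M_i^2}\,\mathbf{n}_i\otimes\frac{\partial M_i}{\partial \boldsymbol{a}},
\end{equation}
where $\partial\mathbf{n}_i/\partial\boldsymbol{a}$ and the outer-product term are $d'\times d'$ matrices and $\partial M_i/\partial\boldsymbol{a}$ is a row gradient.

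Next I would differentiate the numerator and denominator termwise. Because the $h_j$ are constant, $\partial\mathbf{n}_i/\partial\boldsymbol{a}=\sum_j h_j\otimes(\partial \mathcal{K}_{ij}/\partial\boldsymbol{a})$ and $\partial M_i/\partial\boldsymbol{a}=\sum_j \partial \mathcal{K}_{ij}/\partial\boldsymbol{a}$. Substituting the closed form of $\partial\mathcal{K}_{ij}/\partial\boldsymbol{a}$ from \Cref{lma:gradient_kernel_entry} and abbreviating $u_{ij}:=(h_i-h_j)-(s_i-s_j)\boldsymbol{a}$ together with $c_{ij}:=-\tfrac{2}{\sigma^2}(s_i-s_j)$, every term becomes $\partial\mathcal{K}_{ij}/\partial\boldsymbol{a}=c_{ij}\,\mathcal{K}_{ij}\,u_{ij}^\top$, hence $\partial\mathbf{n}_i/\partial\boldsymbol{a}=\sum_j c_{ij}\mathcal{K}_{ij}\,h_j\otimes u_{ij}$ and $\partial M_i/\partial\boldsymbol{a}=\sum_j c_{ij}\mathcal{K}_{ij}\,u_{ij}$.

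Finally I would plug these back into the quotient-rule identity, use $\mathbf{n}_i/M_i=z_i$ and $\mathcal{K}_{ij}/M_i=\beta_{ij}$ from \Cref{eq:beta}, and collect the two sums under the common weight $c_{ij}\beta_{ij}$:
\begin{equation}
\frac{\partial z_i}{\partial \boldsymbol{a}}=\sum_{j=1}^N c_{ij}\beta_{ij}\,h_j\otimes u_{ij}-z_i\otimes\Big(\sum_{j=1}^N c_{ij}\beta_{ij}\,u_{ij}\Big)=\sum_{j=1}^N c_{ij}\beta_{ij}\,(h_j-z_i)\otimes u_{ij},
\end{equation}
and re-expanding $c_{ij}$ and $u_{ij}$ yields exactly \Cref{eq:gradient_kernel_emb}. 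As a consistency check, $\sum_j\beta_{ij}(h_j-z_i)=0$ by the definition of $z_i$, which explains why the ``centred'' form $(h_j-z_i)$ emerges naturally and confirms the bookkeeping. The only delicate point is keeping the tensor orders and transpose conventions consistent — in particular placing $u_{ij}$ on the correct side of each outer product — but this is purely mechanical once the scalar-denominator quotient rule is written out; there is no analytic subtlety beyond \Cref{lma:gradient_kernel_entry}, which is already in hand.
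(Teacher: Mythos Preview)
Your proposal is correct and follows essentially the same route as the paper: write $z_i$ as a ratio of numerator $\sum_j \mathcal{K}_{ij}h_j$ over scalar $M_i$, apply the quotient rule, differentiate each piece via \Cref{lma:gradient_kernel_entry}, then normalize by $M_i$ to produce the $\beta_{ij}$ weights and collapse the two sums using $z_i=\sum_j\beta_{ij}h_j$. Your shorthand $c_{ij},u_{ij}$ streamlines the bookkeeping, but the underlying computation is identical to the paper's.
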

\begin{proof}
Let $D_i = \sum^N_{j=1} \mathcal{K}_{ij} h_j$, based on \Cref{eq:kernel_emb} we have $z_i = \frac{D_i}{M_i}$. Using the quotient rule, we have:
\begin{equation}
    \frac{\partial z_i}{\partial \boldsymbol{a}}=\frac{1}{M_i} \frac{\partial D_i}{\partial \boldsymbol{a}}-\frac{D_i}{M_i^2} \frac{\partial M_i}{\partial \boldsymbol{a}}.
    \label{eq:quotient_gradient}
\end{equation}
Then we compute the two gradients in \Cref{eq:quotient_gradient}. The gradient of the $D_i$ w.r.t. $\boldsymbol{a}$ is $\frac{\partial D_i}{\partial \boldsymbol{a}} = \sum^N_{j = 1} \frac{\partial \mathcal{K}_{ij}}{\partial \boldsymbol{a}} h_j^\top$, where $\frac{\partial \mathcal{K}_{ij}}{\partial \boldsymbol{a}}$ has been given by \Cref{lma:gradient_kernel_entry}. Thus we have:
\begin{equation}
    \frac{\partial D_i}{\partial \boldsymbol{a}}=-\frac{2}{\sigma^2} \sum_{j=1}^N\left(s_i-s_j\right) \mathcal{K}_{i j} h_j\left[\left(h_i-h_j\right)-\left(s_i-s_j\right) \boldsymbol{a}\right]^{\top}.
    \label{eq:grad_D_i}
\end{equation}
The Gradient of $M_i$ can be represented as:
\begin{equation}
    \frac{\partial M_i}{\partial \boldsymbol{a}}=\sum_{j=1}^N \frac{\partial \mathcal{K}_{i j}}{\partial \boldsymbol{a}}=-\frac{2}{\sigma^2} \sum_{j=1}^N\left(s_i-s_j\right) \mathcal{K}_{i j}\left[\left(h_i-h_j\right)-\left(s_i-s_j\right) \boldsymbol{a}\right].
    \label{eq:grad_M_i}
\end{equation}
We can substitute \Cref{eq:grad_D_i} and \Cref{eq:grad_M_i} into the quotient rule \Cref{eq:quotient_gradient}:
\begin{equation}
\begin{aligned}
\frac{\partial z_i}{\partial \boldsymbol{a}}= &-\frac{2}{\sigma^2} \frac{1}{M_i} \sum_{j=1}^N\left(s_i-s_j\right) \mathcal{K}_{i j} h_j\left[\left(h_i-h_j\right)-\left(s_i-s_j\right) \boldsymbol{a}\right]^{\top} \\
& +  \frac{2}{\sigma^2} \frac{1}{M_i^2}\left(\sum_{j=1}^N \mathcal{K}_{i j} h_j\right)\left(\sum_{\ell=1}^N\left(s_i-s_{\ell}\right) \mathcal{K}_{i \ell}\left[\left(h_i-h_{\ell}\right)-\left(s_i-s_{\ell}\right) \boldsymbol{a}\right]^{\top}\right).
\end{aligned}
\label{eq:detailed_quotient_gradient}
\end{equation}
Given \Cref{eq:beta}, \Cref{eq:detailed_quotient_gradient} can be rewritten as:
\begin{equation}
\begin{aligned}
\frac{\partial z_i}{\partial \boldsymbol{a}}=& -\frac{2}{\sigma^2} \sum_{j=1}^N \beta_{i j}\left(s_i-s_j\right) h_j\left[\left(h_i-h_j\right)-\left(s_i-s_j\right) \boldsymbol{a}\right]^{\top} \\
& + \frac{2}{\sigma^2}\underbrace{\left(\sum_{j=1}^N \beta_{i j} h_j\right)}_{z_i}\left(\sum_{\ell=1}^N \beta_{i \ell}\left(s_i-s_{\ell}\right)\left[\left(h_i-h_{\ell}\right)-\left(s_i-s_{\ell}\right) \boldsymbol{a}\right]^{\top}\right).
\end{aligned}
\label{eq:grad_z_a}
\end{equation}
By re-indexing $\ell \to j$, \Cref{eq:grad_z_a} can be represented as:
\begin{equation}
    \frac{\partial z_i}{\partial \boldsymbol{a}}=-\frac{2}{\sigma^2} \sum_{j=1}^N \beta_{i j}\left(s_i-s_j\right)\left(h_j-z_i\right)\left[\left(h_i-h_j\right)-\left(s_i-s_j\right) \boldsymbol{a}\right]^{\top},
\end{equation}
which is exactly the \Cref{eq:gradient_kernel_emb}, completing the derivation.
\end{proof}

\subsubsection{Proof of \Cref{thm:fidelity}} \label{app:proof_the31}
\begin{proof}
Using \Cref{ass:linear_cfr_jaco} and \Cref{lma:gradient_kernel_emb}, we can perform chain-rule expansion of $\nabla_{\boldsymbol{a}} \mathcal{L}$ as:
\begin{equation}
    \nabla_{\boldsymbol{a}} \mathcal{L}=\frac{1}{N} \sum_{i=1}^N\left(\frac{\partial z_i}{\partial \boldsymbol{a}}\right)^{\top} g_i=\frac{1}{N} \sum_{i=1}^N \sum_{j=1}^N\left(\frac{\partial z_i}{\partial \boldsymbol{a}}\right)^{\top} \mathbf{W} z_i.
    \label{eq:chain_rule_exp}
\end{equation}
Substituting \Cref{eq:gradient_kernel_emb} into \Cref{eq:chain_rule_exp} gives:
\begin{equation}
    \nabla_{\boldsymbol{a}} \mathcal{L} = -\frac{2}{N \sigma^2} \sum_{i=1}^N \sum_{j=1}^N \beta_{i j}\left(s_i-s_j\right)\left[\left(h_i-h_j\right)-\left(s_i-s_j\right) \boldsymbol{a}\right]\left(h_j-z_i\right)^{\top} \mathbf{W}z_i.
    \label{eq:sub_chain_rule_exp}
\end{equation}
For small $\sigma$, $\beta_{ij}$ is sharply peaked at $j = i$. Given $\varepsilon_{i j}:=s_i-s_j$, since $\beta_{ij} \leq e^{-\varepsilon^2_ij / \sigma^2}$, all terms with $j \neq i$ are exponentially suppressed, and the dominant contribution arises from the linearization around $\varepsilon_{ij} = 0$. Then we conduct Taylor expansion to first order in $\varepsilon_{ij}$:
\begin{equation}
    \nabla_a \mathcal{L}=-\frac{2}{N \sigma^2} \sum_{i=1}^N \sum_{j=1}^N\left[\beta_{i j} \varepsilon_{i j} h_i\left(h_i^{\top} \mathbf{W} h_i\right)\right]+\mathcal{O}\left(\sigma^0\right).
\end{equation}
In Big-O notation, the symbol $\mathcal{O}\left(\sigma^\mu\right)$ as $\sigma \to 0^+$ means that there exists a constant $c>0$ and a neighbourhood $(0, \sigma_0]$ such that $|\mathcal{O}\left(\sigma^\mu\right)| \leq c \sigma^\mu$. Using \Cref{ass:centred_emb} and symmetry of the inner summation one obtains the compact matrix form of the above formula:
\begin{equation}
    \nabla_a \mathcal{L}=-\frac{2}{\sigma^2}\left(\frac{1}{N} \sum_{i=1}^N h_i h_i^{\top}\right) \boldsymbol{a}+\mathcal{R}(\sigma)=-\frac{2}{\sigma^2} \mathbf{C} \boldsymbol{a}+\mathcal{R}(\sigma),
\end{equation}
with $\|\mathcal{R}(\sigma)\|=\mathcal{O}\left(\sigma^0\right)$. This proves \Cref{eq:nabla_L}. 

Since $\boldsymbol{a}$ is re‑normalised after every update, the effective tangential gradient~\citep{delfour2011shapes} is $\nabla_{\boldsymbol{a}} \mathcal{L} - (\boldsymbol{a}^\top \nabla_{\boldsymbol{a}} \mathcal{L})\boldsymbol{a}$. Note that $\boldsymbol{a}^{\top} \mathbf{C} \boldsymbol{a}$ is scalar, so subtracting the radial part yields the tangential gradient $-\mathbf{C} \boldsymbol{a}+\left(\boldsymbol{a}^{\top} \mathbf{C} \boldsymbol{a}\right) a$. A projected gradient descent step with learning rate $\eta$ therefore becomes:
\begin{equation}
    \boldsymbol{a} \leftarrow \Pi_{\mathbb{S}^{d-1}}\left(\boldsymbol{a}-\eta\left[\mathbf{C} \boldsymbol{a}-\left(\boldsymbol{a}^{\top} \mathbf{C} \boldsymbol{a}\right) \boldsymbol{a}\right]\right)=\Pi_{\mathbb{S}^{d-1}}((\mathbf{I}-\eta \mathbf{C}) \boldsymbol{a}),
\end{equation}
which is exactly the discrete‑time Oja~\citep{oja1982simplified} update \Cref{eq:oja_learning_rule}.

Let $\lambda_{\max}$ be the largest eigenvalue of $\mathbf{C}$, with unit-norm eigenvector $u_{\max}$. Standard theory of Oja's algorithm~\citep{krasulina1969method, oja1982simplified} states:
\begin{itemize}
    \item Every eigenvector of $\mathbf{C}$ is a fixed point of \Cref{eq:oja_learning_rule}.
    \item All eigenvectors other than $\pm u_{\max}$ are unstable, and $\pm u_{\max}$ are globally asymptotically stable provided $0 < \eta < 2/\lambda_{\max}$.
\end{itemize}
Hence gradient descent drives $\boldsymbol{a}$ toward $\pm u_{\max}$. Because the kernel and the classifier do not depend on the sign of $\boldsymbol{a}$, both directions are equivalent, and choosing the positive‑projection suffices.
\end{proof}

\subsubsection{Interpretation of \Cref{thm:fidelity}}
\paragraph{Rayleigh‑quotient maximization} Oja learning rule is a stochastic gradient ascent on the Rayleigh quotient $\mathcal{R}(\boldsymbol{a})=\boldsymbol{a}^{\top} \mathbf{C} \boldsymbol{a}$ over the unit sphere. \Cref{thm:fidelity} therefore formalizes the intuition that the \apv{} aligns with the {\bf direction of maximum embedding variance}. 

\paragraph{Role of the bandwidth $\sigma$} The leading term \Cref{eq:oja_learning_rule} is multiplied by $1/\sigma^2$. A smaller bandwidth increases the gradient magnitude, accelerating alignment but reducing smoothness; conversely, a larger $\sigma$ slows convergence while preserving differentiability. 

\paragraph{Compatibility with the global optimization} Once the \apv{} converges to $u_{\max}$, the kernel weights $\mathcal{K}_{ij}$ depend only on $\left\langle h_i, u_{\max }\right\rangle-\left\langle h_j, u_{\max }\right\rangle$, which maximally separates nodes along the most informative one‑dimensional projection. This precisely captures the fidelity property we desire.

\subsection{Proof of \Cref{thm:sorting_limit}} \label{app:sorting_limit}
The proof is based on three lemmata.

\begin{lemma}[Weight concentration] \label{lma:weight_con} 
For each row $i$ of the kernel matrix $\mathcal{K}$, we have:
\begin{equation}
    \lim_{\sigma \to 0+} \mathcal{K}_{ij} = \delta_{ij},
\end{equation}
where $\delta_{ij}$ is the Kronecker delta.
\end{lemma}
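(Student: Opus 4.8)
The plan is to prove this by a direct case split on whether $i=j$, working entirely from the closed form $\mathcal{K}_{ij} = \exp\!\big(-(s_i-s_j)^2/\sigma^2\big)$ that was fixed in \Cref{sec:client_side_learning}. For the diagonal entries the claim is immediate: the argument $s_i-s_j$ vanishes when $i=j$, so $\mathcal{K}_{ii}=e^{0}=1$ for every $\sigma>0$, and therefore $\lim_{\sigma\to 0^+}\mathcal{K}_{ii}=1$ with no hypotheses required.

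For the off-diagonal entries $i\neq j$, I would set $c_{ij}:=(s_i-s_j)^2$ and observe that, under the standing genericity condition that the projected scores are pairwise distinct (the same condition under which the sorting permutation $\pi$ in \Cref{eq:argsort} is uniquely determined), we have $c_{ij}>0$. Then $\mathcal{K}_{ij}=\exp(-c_{ij}/\sigma^2)$ with $c_{ij}/\sigma^2\to+\infty$ as $\sigma\to 0^+$, hence $\mathcal{K}_{ij}\to 0$. Combining the two cases gives $\lim_{\sigma\to 0^+}\mathcal{K}_{ij}=\delta_{ij}$. I would also record that, since the client has only finitely many nodes $N$, each row $i$ contains only finitely many off-diagonal pairs, so the convergence is in fact uniform over $j$; this is the form actually needed when the lemma is subsequently fed into the steps passing from $\mathcal{K}$ to the normalized weights $\beta_{ij}$, then to $z_i=\sum_j\beta_{ij}h_j$, and finally to the $\mathrm{Conv1D}$ output in \Cref{thm:sorting_limit}.

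The only genuine subtlety — the ``hard part,'' such as it is — is the pairwise distinctness of the scores $s_i$: if two nodes share the same coordinate on the $\boldsymbol{a}$-space then $\mathcal{K}_{ij}=1$ for that pair and the stated limit fails. I would dispatch this exactly as the argsort in \Cref{eq:argsort} implicitly does, either by assuming the $s_i$ lie in general position (which holds for Lebesgue-almost-every $\boldsymbol{a}$ once the embeddings $h_i$ are distinct), or, for a fully deterministic statement, by breaking ties with an arbitrarily small perturbation of $\boldsymbol{a}$ that leaves all downstream limits unchanged. Modulo that caveat, the lemma is a one-line consequence of Gaussian decay.
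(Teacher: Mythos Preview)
Your proof is correct and follows essentially the same route as the paper: assume the projected scores are pairwise distinct so that for $j\neq i$ the Gaussian kernel $\exp\!\bigl(-(s_i-s_j)^2/\sigma^2\bigr)\to 0$, and handle the diagonal directly. If anything, your treatment of the diagonal ($\mathcal{K}_{ii}=e^0=1$ identically) is cleaner than the paper's, which concludes $\mathcal{K}_{ii}\to 1$ by appealing to row-stochasticity---a property that holds for the normalized weights $\beta_{ij}$ but not for the unnormalized $\mathcal{K}_{ij}$.
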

\begin{proof}
Since all scores are distinct, setting $\Delta_i:=\min _{j \neq i}\left|s_i-s_j\right|>0$. For $j \neq i$ we have:
\begin{equation}
\frac{\mathcal{K}_{i j}}{\mathcal{K}_{i i}}=\exp \left(-\frac{\left(\left(s_i-s_j\right)^2-0\right)}{\sigma^2}\right) \leq \exp \left(-\frac{\Delta_i^2}{\sigma^2}\right).
\end{equation}
Hence for all $j \neq i$, $\mathcal{K}_{i j} \leq e^{-\Delta_i^2 / \sigma^2} \rightarrow 0$ as $\sigma \to 0^+$. Since each row of $\mathcal{K}$ is a probability distribution, the diagonal entry must satisfy $\mathcal{K}_{i i}=1-\sum_{j \neq i} \mathcal{K}_{i j} \rightarrow 1$.
\end{proof}

\begin{lemma}[Pointwise convergence of smoothed embeddings] \label{lma:pointwise_convergence}
\begin{equation}
\lim _{\sigma \rightarrow 0^{+}} z_i=h_i, \quad \forall i=1, \ldots, N
\end{equation}
\end{lemma}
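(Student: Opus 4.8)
The plan is to write $z_i$ as a convex combination of the node embeddings with the normalized kernel weights $\beta_{ij} = \mathcal{K}_{ij}/M_i$ already introduced in \Cref{lma:gradient_kernel_emb}, peel off the diagonal $j=i$ term, and then bound the remaining mass using the exponential decay that was established in the proof of \Cref{lma:weight_con}.

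First I would record the elementary fact that $\kappa(s_i,s_i)=\exp(0)=1$, so that $M_i=\sum_{j=1}^N \mathcal{K}_{ij}\ge 1$ for every $\sigma>0$; this keeps $M_i$ bounded away from zero uniformly in $\sigma$, so no division-by-zero or blow-up occurs in the quotient. Since $\sum_j \beta_{ij}=1$, I would then write
\begin{equation}
\|z_i - h_i\| = \Bigl\| \sum_{j=1}^N \beta_{ij}(h_j - h_i) \Bigr\| = \Bigl\| \sum_{j\neq i} \beta_{ij}(h_j - h_i) \Bigr\| \le \sum_{j\neq i} \beta_{ij}\,\|h_j - h_i\|,
\end{equation}
where the $j=i$ term vanishes because $h_i-h_i=0$. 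Next I would control each off-diagonal weight: because $M_i\ge 1$ we have $\beta_{ij}\le \mathcal{K}_{ij}$, and the proof of \Cref{lma:weight_con} already gives $\mathcal{K}_{ij}\le \exp(-\Delta_i^2/\sigma^2)$ for $j\neq i$, with $\Delta_i:=\min_{j\neq i}|s_i-s_j|>0$ by distinctness of the scores. Combining these,
\begin{equation}
\|z_i - h_i\| \le (N-1)\,\exp\!\left(-\frac{\Delta_i^2}{\sigma^2}\right)\,\max_{j\neq i}\|h_j - h_i\| \xrightarrow[\sigma\to 0^+]{} 0,
\end{equation}
since $N$ and the embeddings are fixed. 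The bound is uniform in $i$ if one takes $\Delta:=\min_i\Delta_i>0$, which will be convenient when this lemma is fed into the Frobenius-norm statement of \Cref{thm:sorting_limit}.

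The main obstacle is not the limit itself — it is genuinely a one-line estimate — but rather the standing assumption that the scores $s_i$ are pairwise distinct, which is exactly what upgrades the pointwise collapse $\mathcal{K}_{ij}\to\delta_{ij}$ into a quantitative exponential rate and lets me drive $\|z_i-h_i\|$ to $0$. I would remark that this is the generic case (ties among the continuous inner products $\langle\hat h_i,\boldsymbol a\rangle$ form a measure-zero set) and is the same hypothesis already invoked in \Cref{lma:weight_con}; if one insisted on allowing ties, $z_i$ would instead converge to the average of the embeddings sharing $v_i$'s score, which is still harmless for the downstream $\mathrm{Conv1D}$ equivalence once a fixed tie-breaking rule for the permutation $\pi$ is adopted.
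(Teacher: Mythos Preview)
Your proof is correct and follows essentially the same route as the paper: split $z_i$ into its diagonal and off-diagonal parts and invoke the weight concentration of \Cref{lma:weight_con} to kill the off-diagonal mass. Your version is in fact cleaner --- you give an explicit exponential rate and make the role of the distinct-scores assumption transparent --- whereas the paper's one-line argument conflates $\mathcal{K}_{ij}$ with the normalized weights $\beta_{ij}$ and simply asserts the limit.
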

\begin{proof}
We can rewrite $z_i$ as:
\begin{equation}
z_i=\sum_{j=1}^N \alpha_{i j} h_j=\mathcal{K}_{i i}h_i+\sum_{j \neq i} \mathcal{K}_{i j} h_j.
\end{equation}
By \Cref{lma:weight_con} the non-diagonal weights vanish and $\mathcal{K}_{ii} \to 1$. Therefore $z_i \to h_i$.
\end{proof}

\begin{lemma}[Matrix convergence in Frobenius norm] \label{lma:matrix_convergence}
\begin{equation}
\lim_{\sigma \rightarrow 0^{+}}\left\|\widetilde{\mathbf{Z}}-\widetilde{\mathbf{H}}\right\|_F=0.
\end{equation}
\end{lemma}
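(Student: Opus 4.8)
The plan is to reduce the matrix-level statement entirely to the pointwise convergence already proven in \Cref{lma:pointwise_convergence}, together with the fact that a finite sum commutes with the limit. First I would make the (easily overlooked) observation that the sorting permutation $\pi$ is determined solely by the similarity scores $\{s_i\}$ (see \Cref{eq:argsort}), which depend on the current $\boldsymbol{a}$ and the fixed embeddings $\{h_i\}$ but \emph{not} on the kernel bandwidth $\sigma$. Under the distinct-scores hypothesis used throughout (the same one invoked in \Cref{lma:weight_con}), $\pi$ is thus a fixed bijection of $\{1,\dots,N\}$, independent of $\sigma$, so both $\widetilde{\mathbf{Z}}=[z_{\pi(1)},\dots,z_{\pi(N)}]$ and $\widetilde{\mathbf{H}}=[h_{\pi(1)},\dots,h_{\pi(N)}]$ are obtained by applying the \emph{same} row permutation to the unsorted stacks.

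Next I would expand the squared Frobenius norm row by row and re-index the sum through the bijection $i=\pi(t)$:
\begin{equation}
\left\|\widetilde{\mathbf{Z}}-\widetilde{\mathbf{H}}\right\|_F^2 \;=\; \sum_{t=1}^N \left\| z_{\pi(t)}-h_{\pi(t)} \right\|_2^2 \;=\; \sum_{i=1}^N \left\| z_i-h_i \right\|_2^2 .
\end{equation}
This shows the Frobenius distance is invariant under the common reordering and equals the Euclidean distance between the unsorted embedding sets. Then I would invoke \Cref{lma:pointwise_convergence}, which gives $z_i\to h_i$ as $\sigma\to 0^+$ for every fixed $i$, hence $\|z_i-h_i\|_2^2\to 0$; because the index set $\{1,\dots,N\}$ is finite and fixed, the limit passes through the sum, so $\sum_{i=1}^N\|z_i-h_i\|_2^2\to 0$, and taking square roots yields $\|\widetilde{\mathbf{Z}}-\widetilde{\mathbf{H}}\|_F\to 0$, which is the claim.

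I do not expect any real obstacle here: the argument is essentially bookkeeping, and the only point deserving care is the first one — spelling out that $\pi$ does not move with $\sigma$, so that no "moving target" arises when the limit is taken (if one dropped the distinct-scores assumption, ties would make $\pi$ non-unique, but fixing any tie-breaking rule consistently for both $\widetilde{\mathbf{Z}}$ and $\widetilde{\mathbf{H}}$ leaves the computation unchanged). With \Cref{lma:matrix_convergence} in hand, the remaining step of \Cref{thm:sorting_limit} is immediate: $\mathrm{Conv}_{\mathbf{W}}$ is a fixed linear map, hence Lipschitz in Frobenius norm, so $\|\mathrm{Conv}_{\mathbf{W}}(\widetilde{\mathbf{Z}})-\mathrm{Conv}_{\mathbf{W}}(\widetilde{\mathbf{H}})\|_F \le \|\mathbf{W}\|\,\|\widetilde{\mathbf{Z}}-\widetilde{\mathbf{H}}\|_F \to 0$.
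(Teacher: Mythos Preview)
Your proposal is correct and follows essentially the same route as the paper's proof: both note that $\widetilde{\mathbf{Z}}$ and $\widetilde{\mathbf{H}}$ share the common permutation $\pi$, invoke \Cref{lma:pointwise_convergence} row by row, and conclude via finiteness of $N$. Your version is simply more explicit in justifying that $\pi$ does not depend on $\sigma$ and in writing out the Frobenius-norm expansion, but the argument is the same.
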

\begin{proof}
Note that both matrices $\widetilde{\mathbf{Z}}$ and $\widetilde{\mathbf{H}}$ have the same ordering $\pi$ of rows. From \Cref{lma:pointwise_convergence} each corresponding row converges as $\left\|z_{\pi(t)}-h_{\pi(t)}\right\|_2 \rightarrow 0$ for every $t$. Since $N$ is finite, the Frobenius norm also converges to 0.
\end{proof}

\subsubsection{Proof of \Cref{thm:sorting_limit}}
\begin{proof}
The $\mathrm{Conv1D}$ operator $\mathrm{Conv}_{\mathbf{W}}$ is linear and its Lipschitz constant with respect to the Frobenius norm is $\textsc{Lip}_{\mathbf{W}}=\left(\sum_{\tau=1}^B\left\|\mathbf{W}_{\tau}\right\|_2^2\right)^{1 / 2}$, then for any two sequences $\mathcal{X}$ and $\mathcal{Y}$ we have:
\begin{equation}
\left\|\mathrm{Conv}_{\mathbf{W}}(\mathcal{X})-\mathrm{Conv}_{\mathbf{W}}(\mathcal{Y})\right\|_F \leq \textsc{Lip}_{\mathbf{W}} \|\mathcal{X}-\mathcal{Y}\|_F.
\end{equation}
Applying this bound with $\mathcal{X} = \widetilde{\mathbf{Z}}$ and $\mathcal{Y} = \widetilde{\mathbf{H}}$ yields:
\begin{equation}
\left\|\mathrm{Conv}_{\mathbf{W}}(\widetilde{\mathbf{Z}})-\mathrm{Conv}_{\mathbf{W}}(\widetilde{\mathbf{H}})\right\|_F \leq \textsc{Lip}_{\mathbf{W}} \|\widetilde{\mathbf{Z}}-\widetilde{\mathbf{H}}\|_F.
\label{eq:lip_z_h}
\end{equation}
\Cref{lma:matrix_convergence} states that the right‑hand side of \Cref{eq:lip_z_h} converges to 0, thus the left‑hand side must converge to zero as well, establishing the claimed limit.
\end{proof}

\subsection{Proof of \Cref{thm:linear_convergence}} \label{app:linear_convergence}
\subsubsection{Local Training Analysis}
For a client $G_k$ at any communication round and inner step $q \in \{1, \cdots, Q\}$, let $\Psi^q_k := (\theta^q_k, \boldsymbol{a}^q_k)$ at any communication round. Given the assumption that the local objective $\mathcal{L}_k$ is differentiable and $\mathscr{L}$-smooth: $\forall \Psi_k, \Psi^{\prime}_k:  \|\nabla \mathcal{L}_k(\Psi_k)-\nabla \mathcal{L}_k(\Psi^{\prime}_k)\| \leq \mathscr{L}\|\Psi_k-\Psi^{\prime}_k\|$, where the smoothness holds for cross-entropy composed with neural networks whose activations are Lipschitz, we have:
\begin{equation}
\mathcal{L}_k\left(\Psi_k^{\prime}\right) \leq \mathcal{L}_k(\Psi_k)+\left\langle\nabla \mathcal{L}_k(\Psi_k), \Psi_k^{\prime}-\Psi_k\right\rangle+\frac{\mathscr{L}}{2}\left\|\Psi_k^{\prime}-\Psi_k\right\|^2, \quad \forall \Psi_k, \Psi_k^\prime.
\label{eq:L_smoothness_ada}
\end{equation}
Take $\Psi_k = \Psi^{s-1}_k$, and $\Psi_k^{\prime} = \Psi^{q}_k =\Psi^{q-1}_k - \eta g^{q-1}_k$:
\begin{equation}
\mathcal{L}_k\left(\Psi_k^{q}\right) \leq \mathcal{L}_k\left(\Psi_k^{q-1}\right)-\eta\left\langle\nabla \mathcal{L}_k\left(\Psi_k^{q-1}\right), g_k^{q-1}\right\rangle+\frac{\mathscr{L} \eta^2}{2}\left\|g_k^{q-1}\right\|^2.
\label{eq:psi_into}
\end{equation}
Due to the unbiasedness assumption, i.e., $\mathbb{E}[g_k] = \nabla\mathcal{L}_k$, we have:
\begin{equation}
\mathbb{E}\left[\left\langle\nabla \mathcal{L}_k, g_k^s\right\rangle\right]=\left\langle\nabla \mathcal{L}_k, \mathbb{E}\left[g_k^q\right]\right\rangle=\left\|\nabla \mathcal{L}_k\right\|^2.
\label{eq:unbiasedness}
\end{equation}
Also by the bounded-variance assumption, i.e., $\mathbb{E}_q\left[\left\|g_k-\nabla \mathcal{L}_k\right\|^2\right] \leq \zeta^2$, we have:
\begin{equation}
\mathbb{E}_q\left[\left\|g_k^q\right\|^2\right]=\left\|\nabla \mathcal{L}_k\right\|^2+\mathbb{E}\left[\left\|g_k^q-\nabla \mathcal{L}_k\right\|^2\right] \stackrel{\text { Ass. } 2}{\leq}\left\|\nabla \mathcal{L}_k\right\|^2+\zeta^2.
\label{eq:bouned_variance}
\end{equation}
Then we can insert \Cref{eq:unbiasedness} and \Cref{eq:bouned_variance} into \Cref{eq:psi_into} and take expectation as follows:
\begin{equation}
\begin{aligned}
\mathbb{E}_q\left[\mathcal{L}_k\left(\Psi_k^{q}\right)\right] & \leq \mathcal{L}_k\left(\Psi_k^{q-1}\right)-\eta\left\|\nabla \mathcal{L}_k (\Psi^{q-1}_k)\right\|^2+\frac{\mathscr{L} \eta^2}{2}\left(\left\|\nabla \mathcal{L}_k(\Psi^{q-1}_k)\right\|^2+\zeta^2\right) \\
& =\mathcal{L}_k\left(\Psi_k^{q-1}\right)-\left(\eta-\frac{\mathscr{L} \eta^2}{2}\right)\left\|\nabla \mathcal{L}_k\right\|^2+\frac{\mathscr{L} \eta^2 \zeta^2}{2}.
\end{aligned}   
\end{equation}
Because $\eta \leq 1/2\mathscr{L}$ we have $1-\mathscr{L}\eta/2 \geq 1/2$, hence:
\begin{equation}
\mathbb{E}_q\left[\mathcal{L}_k\left(\Psi_k^{q}\right)\right] \leq \mathcal{L}_k\left(\Psi_k^{q-1}\right)-\frac{\eta}{2}\left\|\nabla \mathcal{L}_k\left(\Psi_k^{q-1}\right)\right\|^2+\frac{\mathscr{L} \eta^2 \zeta^2}{2}.  
\label{eq:expectation_eta}
\end{equation}
Based on the assumption that each local objective $\mathcal{L}_k$ satisfies the $\mu$-PL~\footnote{Here, we slightly abuse the notation $\mu$, which was previously introduced in \Cref{app:proof_the31} with a different meaning.} (Polyak-Lojasiewicz) condition~\citep{polyak1963gradient} iff
\begin{equation}
\left\|\nabla \mathcal{L}_k(\Psi^{q-1}_k)\right\|^2 \geq 2 \mu\left(\mathcal{L}_k(\Psi^{q-1}_k)-\mathcal{L}_k^{\star}\right).
\label{eq:pl}
\end{equation}
Plug \Cref{eq:pl} into \Cref{eq:expectation_eta}:
\begin{equation}
\mathbb{E}_q\left[\mathcal{L}_k\left(\Psi_{k}^{q}\right)-\mathcal{L}_k^{\star}\right] \leq(1-\eta \mu)\left(\mathcal{L}_k\left(\Psi_{k}^{q-1}\right)-\mathcal{L}_k^{\star}\right)+\frac{\mathscr{L} \eta^2 \zeta^2}{2}.
\label{eq:exp_on_q}
\end{equation}
We define the gap as $\Delta^{q-1}_{k} := \mathcal{L}_k (\Psi_k^{q-1}) - \mathcal{L}^\star_k$. Taking the total expectation and iterating \Cref{eq:exp_on_q} $Q$ times, we have:
\begin{equation}
\mathbb{E}\left[\Delta_{k}^Q\right] \leq(1-\eta \mu)^Q \Delta^1_{k}+\frac{\mathscr{L} \eta^2 \zeta^2}{2} \sum_{j=1}^{Q}(1-\eta \mu)^{j-1}.
\end{equation}
The geometric sum is:
\begin{equation}
    \sum_{j=1}^Q (1-\eta \mu)^{j-1} = \frac{1-(1-\eta \mu)^Q}{\eta \mu} \leq \frac{1}{\eta \mu}.
\end{equation}
Therefore, the following inequality holds:
\begin{equation}
    \mathbb{E}\left[\Delta^Q_k\right] \leq (1-\eta \mu)^Q \Delta^1_k + \frac{\eta \mathscr{L}\zeta^2}{2\mu},
    \label{eq:local_training_contraction}
\end{equation}
which is the per-client local-training contraction.

\subsubsection{Effect of Global Kernel-Based Aggregation}
Define $\Psi^{\mathrm{loc},(t-1)}_k := \Psi^Q_k$ which means the local client parameters after $Q$ local training iterations. Let $\mathbf{f}^{(t-1)}:= \left[f_1^{(t-1)}, \cdots, f_K^{(t-1)}\right]^\top$, where $f_k^{(t-1)} = \mathcal{L}_k(\Psi_k^{\mathrm{loc},(t-1)}) - \mathcal{L}^\star_k$. Recall our proposed personalized aggregation scheme in \Cref{eq:personalized_agg}, it can be rewritten as:
\begin{equation}
    \Psi^{(t)}_k = \sum^K_{l=1} w^{(t-1)}_{kl} \Psi^{\mathrm{loc},(t-1)}_l.
    \label{eq:rewrite_personalized_agg}
\end{equation}
Since each new parameter is a convex combination \Cref{eq:rewrite_personalized_agg}, based on the Jensen's inequality and $\mathscr{L}$-smoothness assumption in \Cref{eq:L_smoothness_ada}, the following inequality holds:
\begin{equation}
    \mathcal{L}_k(\Psi^{(t)}_k) = \sum^K_{l=1} w^{(t-1)}_{kl} \mathcal{L}_k(\Psi^{\mathrm{loc},(t-1)}_l).
    \label{eq:jensen_personalized_agg}
\end{equation}
Let $\mathbf{p} = [p_1, \dots,p_K]^\top$, \Cref{eq:jensen_personalized_agg} subtracts $\mathcal{L}_k^\star$ and multiply by $p_k$, and sum over $k$, we can reach:
\begin{equation}
    \mathcal{L}(\Psi^{(t)}) - \mathcal{L}^\star \leq \mathbf{p}^\top \Omega^{(t-1)}\mathbf{f}^{(t-1)}.
\end{equation}
Let the global average gap as $\bar{f}^{(t-1)} := \mathbf{p}^\top \mathbf{f}^{(t-1)}$ and $\mathbf{r}^{(t-1)}:=\mathbf{f}^{(t-1)}-\bar{f}^{(t-1)}\mathbf{1}$. Since row-stochasticity implies $\Omega^{(t-1)} \mathbf{1} = \mathbf{1}$, we have:
\begin{equation}
    \mathbf{p}^{\top} \Omega^{(t-1)} \mathbf{f}^{(t-1)}=\bar{f}^{(t-1)}+\mathbf{p}^{\top} \Omega^{(t-1)} \mathbf{r}^{(t-1)}.
\end{equation}
Hence:
\begin{equation}
\mathcal{L}\left(\Psi^{(t)}\right)-\mathcal{L}^{\star} \leq \bar{f}^{(t-1)}+\mathbf{p}^{\top} \mathbf{V}^{(t-1)} \mathbf{r}^{(t-1)}, \quad \mathbf{V}^{(t-1)}:=\Omega^{(t-1)}-\frac{1}{K} \mathbf{1 1}^{\top}.
\label{eq:L_with_V}
\end{equation}
Note that $\mathbf{p}^\top \mathbf{r}^{(t-1)} = 0$ by definition of $\bar{f}^{(t-1)}$. Applying Cauchy–Schwarz we have:
\begin{equation}
\left|\mathbf{p}^{\top} \mathbf{V}^{(t-1)} \mathbf{r}^{(t-1)}\right| \leq\left\|\mathbf{p}^{\top} \mathbf{V}^{(t-1)}\right\|_2\left\|\mathbf{r}^{(t-1)}\right\|_2.    
\end{equation}
Due to the assumption that $\|\mathbf{V}^{(t-1)}\|_2 \leq \rho$ and $\|\mathbf{p}\|_2 \leq 1$ which is a probability vector, we have:
\begin{equation}
\left|\mathbf{p}^{\top} \mathbf{V}^{(t-1)} \mathbf{r}^{(t-1)}\right| \leq \rho\left\|\mathbf{r}^{(t-1)}\right\|_2.
\label{eq: rho_r}
\end{equation}
Next bound $\|\mathbf{r}^{(t-1)}\|_2$ by the mean gap as:
\begin{equation}
\begin{aligned}
\left\|\mathbf{r}^{(t-1)}\right\|_2^2 &=\sum_k\left(f_k^{(t-1)}-\bar{f}^{(t-1)}\right)^2 \\ &\leq \sum_k\left(f_k^{(t-1)}\right)^2 \\ &\leq\left(\max _k f_k^{(t-1)}\right) \sum_k f_k^{(t-1)} \\ &=\frac{\max _k f_k^{(t-1)}}{\min _k p_k}\left(\mathbf{p}^{\top} \mathbf{f}^{(t-1)}\right) \\ &\leq \frac{1}{\min _k p_k}.  
\end{aligned}
\label{eq:bounded_mean_gap}
\end{equation}
Let $c := 1 / \sqrt{\min _k p_k} \leq \sqrt{K}$, we can combine \Cref{eq: rho_r} and \Cref{eq:bounded_mean_gap} to get the following inequality:
\begin{equation}
\left|\mathbf{p}^{\top} \mathbf{V}^{(t-1)} \mathbf{r}^{(t-1)}\right| \leq \rho c \sqrt{\bar{f}^{(t-1)}}.
\label{eq:combine_rho_c}
\end{equation}
Then by squaring both sides of \Cref{eq:combine_rho_c} and use $\sqrt{\bar{f}} \leq 1 + \bar{f}$, we have:
\begin{equation}
\left|\mathbf{p}^{\top} \mathbf{V}^{(t-1)} \mathbf{r}^{(t-1)}\right| \leq \rho^2 c^2\left(1+\bar{f}^{(t-1)}\right) \leq \frac{2 \rho^2 c^2}{1-\rho} \bar{f}^{(t-1)},
\label{eq:square_sides_pv}
\end{equation}
where the last inequality employs $\bar{f}^{(t-1)} \leq(1-\rho)^{-1} \bar{f}^{(t-1)}$ which is trivial for $0 < \rho < 1$. Taking expectations, we can substitute \Cref{eq:square_sides_pv} in to \Cref{eq:L_with_V} to obtain:
\begin{equation}
\mathbb{E}\left[\mathcal{L}\left(\Psi^{(t)}\right)-\mathcal{L}^{\star}\right] \leq\left(1+\frac{2 \rho^2}{1-\rho}\right) \mathbb{E}\left[\bar{f}^{(t-1)}\right].
\label{eq:expectation_sub}
\end{equation}
Given \Cref{eq:local_training_contraction} and $\bar{f}^{(t-1)}=\sum_k p_k \mathbb{E}\left[f_k^{(t-1)}\right]$, we can bound $\mathbb{E}[f^{(t-1)}_k]$ by:
\begin{equation}
\mathbb{E}\left[f_k^{(t-1)}\right] \leq(1-\eta \mu)^Q\left(\mathcal{L}_k\left(\Psi_k^{(t-1)}\right)-\mathcal{L}_k^{\star}\right)+\frac{\eta \mathscr{L} \zeta^2}{2 \mu}.
\end{equation}
By taking a weighted sum of the above formula over all $K$ clients with weights $p_k$, we obtain:
\begin{equation}
\mathbb{E}\left[\bar{f}^{(t-1)}\right] \leq(1-\eta \mu)^Q\left(\mathcal{L}\left(\Psi^{(t-1)}\right)-\mathcal{L}^{\star}\right)+\frac{\eta \mathscr{L} \zeta^2}{2 \mu}.
\label{eq:weighted_bar_f}
\end{equation}

\paragraph{One-round contraction} By plugging \Cref{eq:weighted_bar_f} into \Cref{eq:expectation_sub}, we have:
\begin{equation}
\mathbb{E}\left[\mathcal{L}\left(\Psi^{(t)}\right)-\mathcal{L}^{\star}\right] \leq(1-\eta \mu)^Q\left(1+\frac{2 \rho^2}{1-\rho}\right)\left(\mathcal{L}\left(\Psi^{(t-1)}\right)-\mathcal{L}^{\star}\right)+\frac{\eta \mathscr{L} \zeta^2}{2 \mu}\left(1+\frac{2 \rho^2}{1-\rho}\right).    
\end{equation}
Since $1+\frac{2 \rho^2}{1-\rho} \leq 1+\frac{2 \rho}{1-\rho}=\frac{1}{1-\rho}$ and $\rho < 1$, the following inequality holds:
\begin{equation}
\mathbb{E}\left[\mathcal{L}\left(\Psi^{(t)}\right)-\mathcal{L}^{\star}\right] \leq(1-\eta \mu)^Q\left(\mathcal{L}\left(\Psi^{(t-1)}\right)-\mathcal{L}^{\star}\right)+\frac{\eta \mathscr{L} \zeta^2}{2 \mu}+\frac{2 \eta \mathscr{L} \rho^2}{\mu(1-\rho)^2},
\label{eq:L_L_star_bound}
\end{equation}
where the last term absorbs the factor from $(1-\rho)^{-1}$.

\paragraph{Across $T$ communication rounds} By setting $\gamma := (1-\eta \mu)^Q$ where $0< \gamma < 1$, we can unroll \Cref{eq:L_L_star_bound} as:
\begin{equation}
\mathbb{E}\left[\mathcal{L}\left(\Psi^{(T)}\right)-\mathcal{L}^{\star}\right] \leq \gamma^T\left(\mathcal{L}\left(\Psi^{(0)}\right)-\mathcal{L}^{\star}\right) + \frac{\eta \mathscr{L} \zeta^2}{2 \mu} \sum_{t=0}^{T-1} \gamma^t+\frac{2 \eta \mathscr{L} \rho^2}{\mu(1-\rho)^2} \sum_{t=0}^{T-1} \gamma^t. 
\label{eq:unroll_L_L_star_bound}
\end{equation}
Since the geometric sums satisfy $\sum_{t=0}^{T-1} \gamma^t \leq \frac{1}{1-\gamma}$, while $1-\gamma=1-(1-\eta \mu)^Q \geq \eta \mu$, we have:
\begin{equation}
\sum_{t=0}^{T-1} \gamma^t \leq \frac{1}{\eta \mu}.
\end{equation}
By inserting the above inequality into \Cref{eq:unroll_L_L_star_bound} and simplify, we can easily get:
\begin{equation}
\mathbb{E}\left[\mathcal{L}\left(\Psi^{(T)}\right)-\mathcal{L}^{\star}\right] \leq \gamma^T\left(\mathcal{L}\left(\Psi^{(0)}\right)-\mathcal{L}^{\star}\right)+\frac{\eta \mathscr{L} \zeta^2}{2 \mu}+\frac{2 \eta \mathscr{L} \rho^2}{\mu(1-\rho)^2}.   
\end{equation}
Recovering $\gamma^T = (1-\eta\mu)^{QT}$ gives exactly \Cref{eq:formula_thm_converage} in \Cref{thm:linear_convergence}, which concludes the proof.
\section{Experimental Details} \label{app:exp_details}

\subsection{Dataset Statistics}\label{app:dataset_statistics}
Table~\ref{tab:dataset_statistics} summarizes the statistics of the datasets used in our experiments. It includes the total number of nodes, edges, node classes, and feature dimensions for each dataset. Specifically, we use four citation graph datasets (Cora, Citeseer, PubMed, and ogbn-arxiv) and two product co-purchase graph datasets (Amazon-Computer and Amazon-Photo).
\begin{table}[htbp]
\centering
\caption{Dataset statistics}
\label{tab:dataset_statistics}
\begin{tabular}{lrrrr}
\toprule
Datasets          & Nodes   & Edges     & Classes & Features \\ \midrule
Cora              & 2,708   & 5,429     & 7       & 1,433    \\
Citeseer          & 3,327   & 4,732     & 6       & 3,703    \\
PubMed            & 19,717  & 44,324    & 3       & 500      \\
Amazon-Computer   & 13,752  & 491,722   & 10      & 767      \\
Amazon-Photo      & 7,650   & 238,162   & 8       & 745      \\
ogbn-arxiv        & 169,343 & 2,315,598 & 40      & 128      \\ 
\bottomrule
\end{tabular}
\end{table}

\subsection{Implementation Details}
Following the standard FL settings, in our \methodname{}, both the local client models and the global server model adopt the same backbone architecture. We employ MaskedGCN~\citep{baek2023personalized} to generate node embeddings and sweep the number of GCN layers over $L \in \{1,2,3\}$. The hidden dimension is selected from $d^\prime \in \{64,128,256\}$, and dropout probabilities are set to 0.5. The auxiliary projection vector $\boldsymbol{a}$ is initialised from a Gaussian distribution in $\mathbb{R}^{d^\prime}$. The similarity–temperature parameter $\alpha$ is set to 10, and the bandwidth $\sigma$ is fixed to 1. For the FL schedule, we run $T = 100$ communication rounds with $Q=1$ local epoch on the smaller citation datasets (Cora, Citeseer, PubMed). On all other datasets, we set the total number of rounds to $T=200$ and the number of local epochs per round to $Q = 2$. All experiments are executed on a workstation equipped with an NVIDIA Tesla V100 SXM2 GPU (32 GB) running CUDA 12.4.

\subsection{Quantifying Non-IIDness in Federated Graph Datasets}\label{app:qua_hetero}
To compare how much statistical heterogeneity (i.e., non-IIDness) each dataset induces under a given partition scheme, it is useful to measure how far the local data distribution at each client deviates from the global distribution and how dispersed local distributions are from one another.
Below are three complementary, fully formalized metrics that can be computed once the graph has been split into $K$  client sub-graphs $\{G_1, \cdots, G_K\}$. 

\paragraph{Label-distribution divergence} Let $P(y)$  be the global class prior and $P_k(y)$ the class prior in client $G_k$. We use the average Jensen–Shannon (JS) divergence to measure the gap between each local label prior and the global label prior as follows:
\begin{equation}
    \textsc{Jsd} = \frac{1}{K} \sum_{k=1}^K \frac{1}{2}\left[\mathrm{KL}\left(P_k \| R_k\right)+\mathrm{KL}\left(P \| R_k\right)\right],
    \label{eq:jsd}
\end{equation}
where $R_k = \frac{1}{2} (P_k + P)$ denotes the mid-point distribution between $P_k$ and $P$ with $R_k(y \in Y) = \frac{1}{2}\left[P_k(y)+P(y)\right]$. Pinsker’s inequality~\citep{csiszar2011information} gives that $\textsc{Jsd} \in [0, \log2]$. A small $\textsc{Jsd}$ indicates that each client’s label distribution closely matches the global prior, so the partition is effectively IID. As the $\textsc{Jsd}$ increases, local class proportions deviate more sharply from the global mixture, making individual clients progressively class-specific and therefore increasingly subject to statistical non-IIDness.

\paragraph{Subgraph-distribution discrepancy} Label skew alone may underestimate heterogeneity when covariate shift is strong. To quantify covariate‐shift–induced heterogeneity, we measure how far the embedding distribution of each client deviates from that of every other client in an embedding space that reflects graph structure and attributes. We first obtain node embeddings for all nodes with a simple neighbor aggregation $\mathbf{Z}_k = \mathbf{A}_k\mathbf{X}_k = \{z_{k, i}\}^{N_k}_{i = 1}$ in each client. Let $\mathbf{Z}_k$ be the empirical distribution of embeddings held by client $G_k$. We define the graph-distribution discrepancy of a $K$-client partition as the mean pair-wise maximum mean discrepancy (MMD) as:
\begin{equation}
    \textsc{Mmd} =  \frac{2}{K(K-1)} \sum_{1 \leq k<l \leq K}\left\|\frac{1}{\left|V_k\right|} \sum_{v_i \in V_k} \phi\left(z_{k,i}\right)-\frac{1}{\left|V_{l}\right|} \sum_{v_j \in V_{l}} \phi\left(z_{l, j}\right)\right\|_2^2,
    \label{eq:mmd}
\end{equation}
where $\phi(\cdot)$ is the canonical feature map of a Gaussian RBF kernel with the bandwidth fixed with the median pairwise distance to ensure comparability across datasets. \Cref{eq:mmd} evaluates to zero when all clients share an identical embedding distribution (IID) and increases monotonically with covariate divergence. 

Based on the above two perspectives, we can quantify the degree of non-IIDness by summing $\textsc{Jsd}$ and $\textsc{Mmd}$ as $\xi = \textsc{Jsd} + \textsc{Mmd}$, where a higher $\xi$ indicates a higher degree of non-IIDness. We show the degree of non-IIDness of all datasets under different numbers of clients in \Cref{tab:degree_noniid}.

\begin{table*}[t]
\caption{The degree of non-IIDness.}
\label{tab:degree_noniid}
\centering
\resizebox{\textwidth}{!}{
\renewcommand{\arraystretch}{1.0}
\begin{tabular}{lccc|ccc|ccc}
\toprule
& \multicolumn{3}{c}{\bf Cora} & \multicolumn{3}{c}{\bf CiteSeer} & \multicolumn{3}{c}{\bf Pubmed} \\
\cmidrule(lr){2-4} \cmidrule(lr){5-7} \cmidrule(lr){8-10} 
\textbf{Non-IIDness} & \textbf{5 Clients} & \textbf{10 Clients} & \textbf{20 Clients} & \textbf{5 Clients} & \textbf{10 Clients} & \textbf{20 Clients} & \textbf{5 Clients} & \textbf{10 Clients} & \textbf{20 Clients}\\
\midrule
$\xi$ & 0.2667 & 0.3092 & 0.3760 & 0.1848 & 0.2292 & 0.2572 & 0.1316 & 0.1500 & 0.1725 \\
\midrule
\midrule

& \multicolumn{3}{c}{\bf Amazon-Computer} & \multicolumn{3}{c}{\bf Amazon-Photo} & \multicolumn{3}{c}{\bf ogbn-arxiv} \\
\cmidrule(lr){2-4} \cmidrule(lr){5-7} \cmidrule(lr){8-10}
\textbf{Non-IIDness} & \textbf{5 Clients} & \textbf{10 Clients} & \textbf{20 Clients} & \textbf{5 Clients} & \textbf{10 Clients} & \textbf{20 Clients} & \textbf{5 Clients} & \textbf{10 Clients} & \textbf{20 Clients}\\
\midrule
$\xi$ & 0.2774 & 0.3582 & 0.3931 & 0.3600 & 0.4314 & 0.4840 & 0.3398 & 0.3668 & 0.4307  \\
\bottomrule

\end{tabular}
}
\end{table*}

\subsection{Synthetic Graph for Client Similarity Estimation}\label{app:synthetic_graph}
We first generate an SBM graph with 3000 nodes that are uniformly divided into $K = 20$ equal-sized blocks as clients.  Inter-client edges are added with probability $P^{\mathrm{inter}} = 0.02$.  To inject structural non-IIDness, the 20 clients are grouped into five super-clusters $\{\mathcal{G}_i\}_{i=1}^5$ (four clients per group). For every client belonging to $\mathcal{G}_i$, we draw intra-client edges with probability $P^{\mathrm{intra}}_i = 0.15 \times i$, so clients within the same group are structurally IID, while clients across groups are non-IID. To inject label and feature non-IIDness, we assign 5 labeled classes to all nodes. Nodes owned by the group $\mathcal{G}_i$ receive label $i-1$ with high probability 0.8, and the remaining 0.2 mass is distributed uniformly over the other four labels. Each node’s feature vector is the one-hot encoding of its label. In this way, clients in the same group are IID in both label and feature space, while clients from different groups exhibit pronounced distributional shifts. This controlled setting allows us to test whether the learned \apv{}s can cluster clients that are genuinely similar. We directly compute the similarity between clients' mean-pooling embeddings without considering privacy in \Cref{fig:embedding_share} as the ground-truth similarity. \Cref{fig:apv_embedding_sim}-\ref{fig:func_emb_sim} are similarities computed by \apv{}s, learnable weights of the readout layer, and functional embedding~\citep{baek2023personalized}. 
\section{More Experiments} \label{app:more_exp}

\begin{table}[t]
\centering
\caption{Ablation studies on the federated node classification task under 10 clients.}\label{table:ablations}
\begin{tabular}{l|cccc}
\toprule
\textbf{Baseline}       
 & \textbf{Cora}     
 & \textbf{Pubmed}    
 & \textbf{Amazon-Computer}   
 & \textbf{ogbn-arxiv}    
     \\
\midrule
\textbf{(i)} \methodname{}$_{\mathrm{hard}}$    &       \ms{80.29}{0.71}   &     \ms{83.06}{0.35}  &     \ms{88.10}{1.02}   &      \ms{66.53}{0.12}     \\
\textbf{(ii)}  \texttt{FedAvg}$_\mathrm{mask}$   & \ms{78.98}{0.54}  &  \ms{83.97}{0.51}  &   \ms{84.31}{0.60}   &    \ms{65.09}{0.07}      \\
\midrule[1pt]
 \methodname{}   &       \textbf{\ms{82.05}{0.71}}  &  \textbf{\ms{85.43}{0.29}}   &     \textbf{\ms{89.92}{0.15}} &   \textbf{\ms{68.50}{0.27}}    \\
\bottomrule[1pt]
\end{tabular}
\end{table}
\subsection{Ablation Study}
We conduct an ablation study in \Cref{table:ablations} validate our motivation and design, with the following two ablations: (i) replacing our proposed continuous aggregation scheme over the $\boldsymbol{a}_k$-space defined in \Cref{eq:kernel_emb} with the $\mathrm{Conv1D}$ operation applied on the hard-sorted embeddings as introduced by~\cite{liu2021non}, yielding a variant \methodname{}$_{\mathrm{hard}}$, and (ii) removing our server-side APV-based personalized aggregation, instead using simple averaging to aggregate local models into a global model, leading to the baseline \texttt{FedAvg}$_\mathrm{mask}$. Note that the \texttt{FedAvg}$_\mathrm{mask}$ variant differs from the standard FedAvg used in \Cref{tab:disjoint}, where \texttt{FedAvg}$_\mathrm{mask}$ adopts MaskedGCN~\citep{baek2023personalized} as the GNN backbone, whereas the standard \texttt{FedAvg} utilizes a conventional GCN architecture. Compared with \methodname{}$_{\mathrm{hard}}$, our \methodname{} consistently obtains higher accuracy. This empirical gain confirms our claim that although the hard-sorting scheme proposed by \cite{liu2021non} does allow gradient flow to optimize the \apv{}, the underlying discrete permutation remains non-differentiable and therefore restricts the capacity of the \apv{} to adapt. By using our proposed continuous aggregation with a fully differentiable kernel operator, \methodname{} enables smoother gradient flow, allowing the \apv{} and the local GNN to co-evolve optimally. Besides, \methodname{} and \texttt{FedAvg}$_\mathrm{mask}$ both adopt MaskedGCN as backbone, while \methodname{} personalized aggregates local models based on the \apv{} similarity, while \texttt{FedAvg}$_\mathrm{mask}$ aggregates local models to a single global model. Results show that \methodname{} outperforms \texttt{FedAvg}$_\mathrm{mask}$ on all datasets, which demonstrates that exploiting \apv{}-based personalized aggregation allows the federation to respect non-IID data distributions and learn more effective client-specific models.

\subsection{Hyperparameter Analysis}

\begin{figure*}[tb]
\centering
\begin{subfigure}[t]
{0.32\textwidth}
  \centering
  \includegraphics[width=\linewidth]{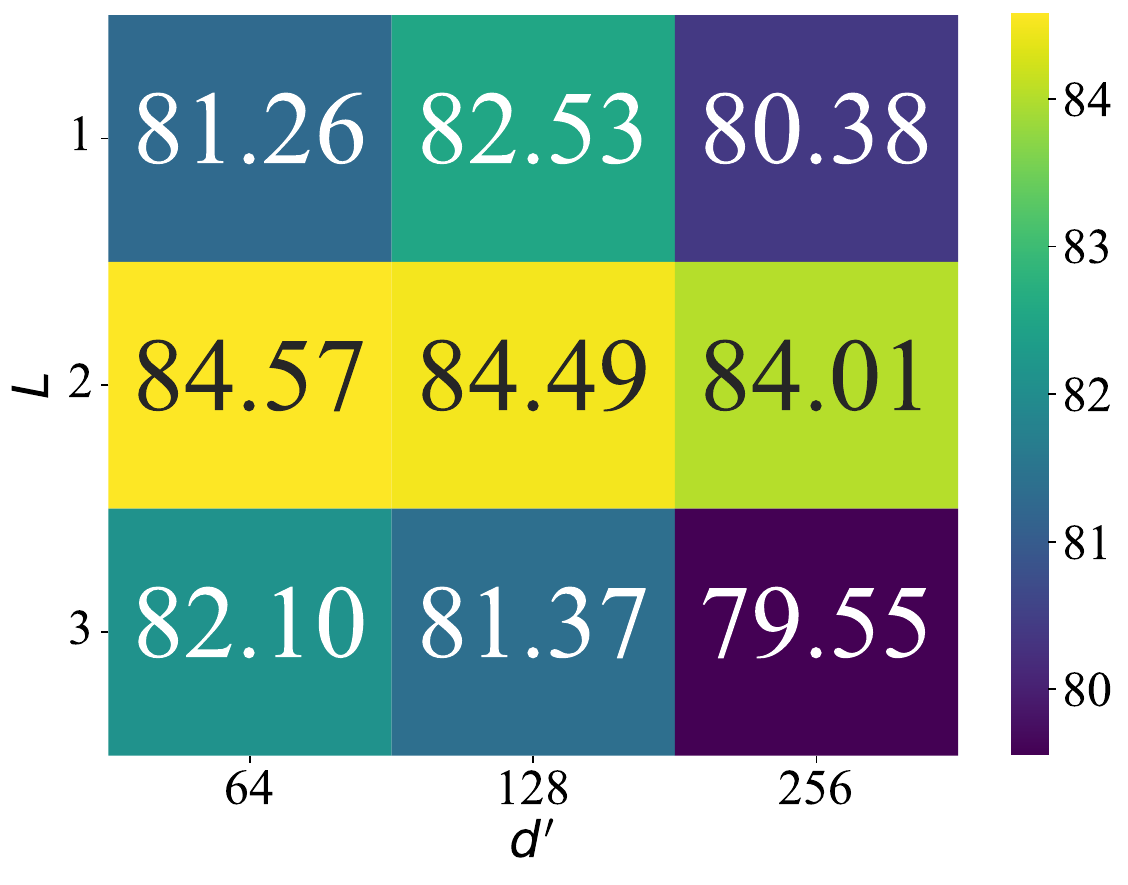}
  \subcaption{5 clients}
\end{subfigure}%
\begin{subfigure}[t]{0.32\textwidth}
  \centering
  \includegraphics[width=\linewidth]{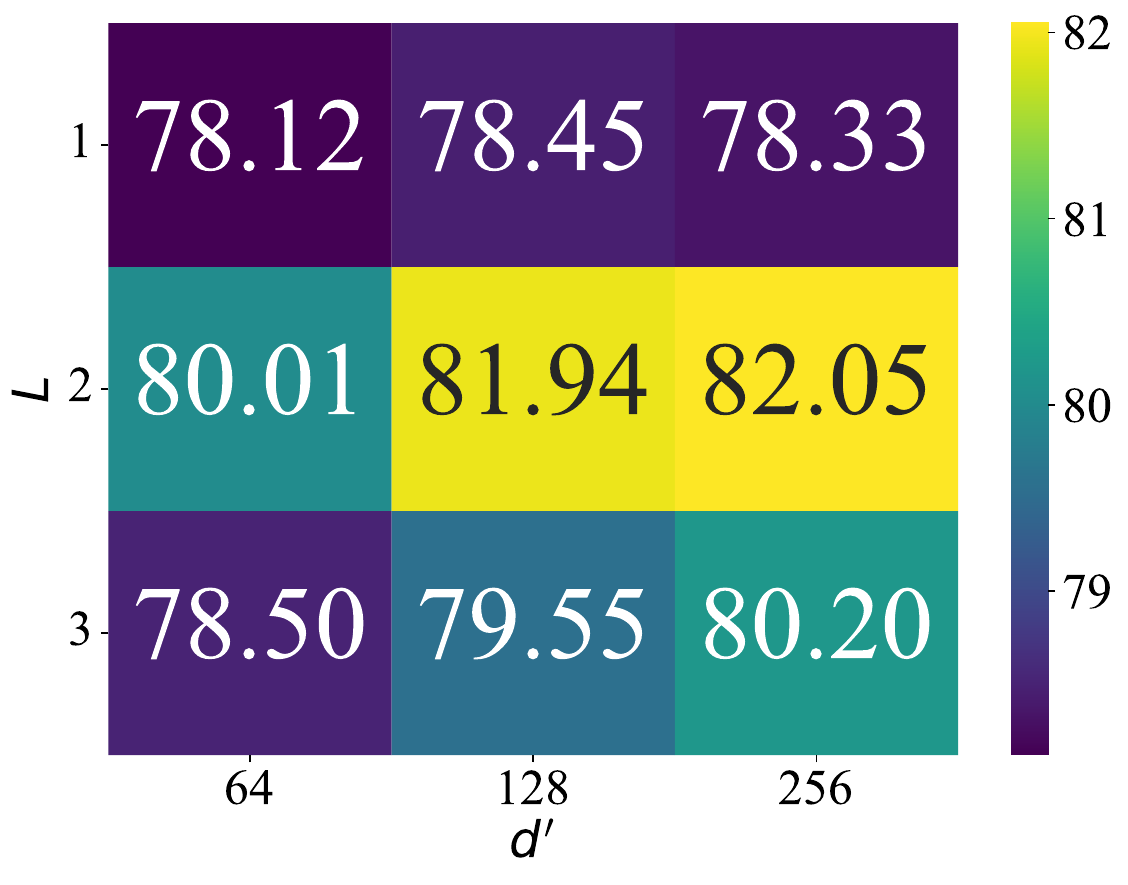}
\subcaption{10 clients}
\end{subfigure}%
\begin{subfigure}[t]{0.32\textwidth}
  \centering
  \includegraphics[width=\linewidth]{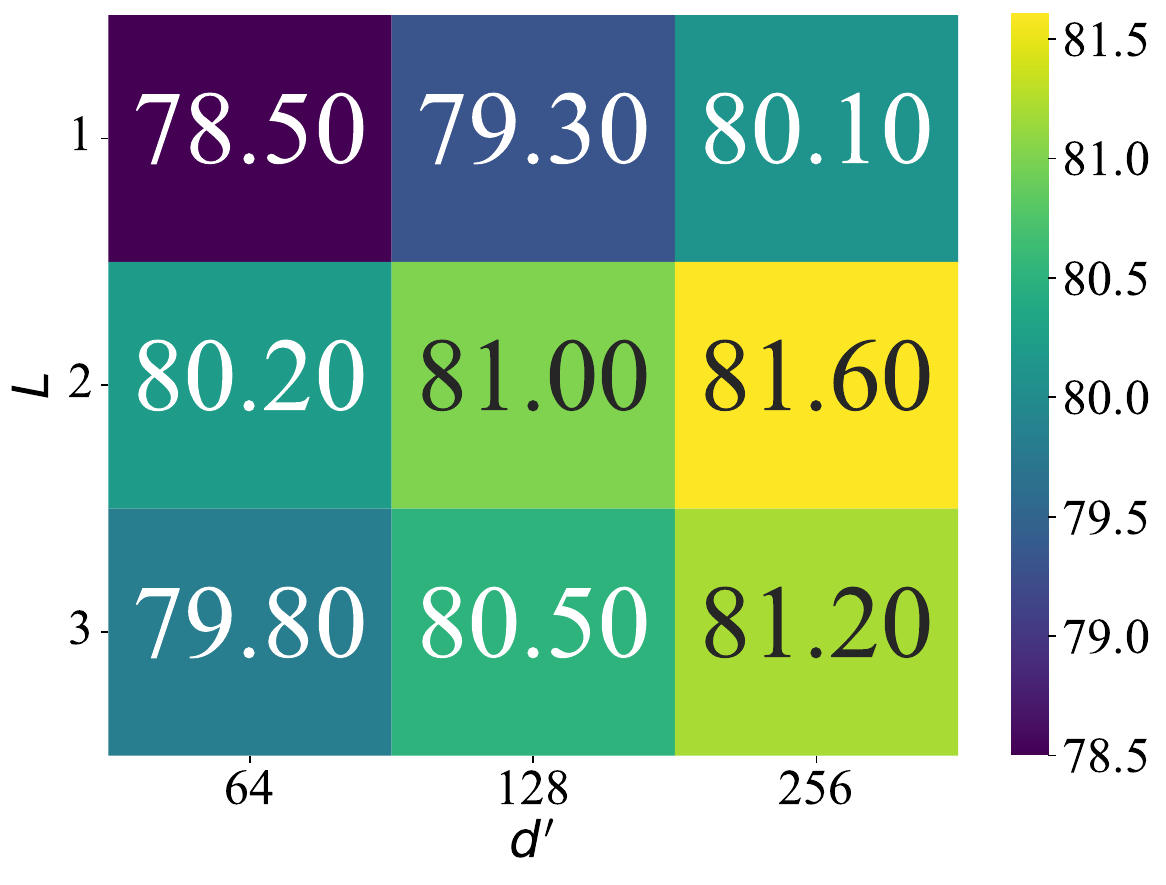}
 \subcaption{20 clients}
\end{subfigure}%
\caption{Classification accuracy (\%) for different GCN configurations.}
\label{fig:hyper_mh}
\end{figure*}

\paragraph{Impact of Model Depth and Hidden Dimension} In our model, the number of GCN layers is selected from $L \in \{1,2,3\}$, and the dimension of the hidden layers is selected from $d^\prime \in \{64,128,256\}$. In \Cref{fig:hyper_mh}, we show all the hyperparameter combinations on the Cora dataset for different client counts. It is evident that \methodname{} consistently achieves the highest performance with $L = 2$ across all federated settings. For the hidden dimension, when the number of clients is not large, \methodname{} requires a relatively small hidden dimension, while with 20 clients, a hidden dimension of 256 yields the best results.

\begin{figure*}[tb]
\centering
\begin{subfigure}[t]
{0.32\textwidth}
  \centering
  \includegraphics[width=\linewidth]{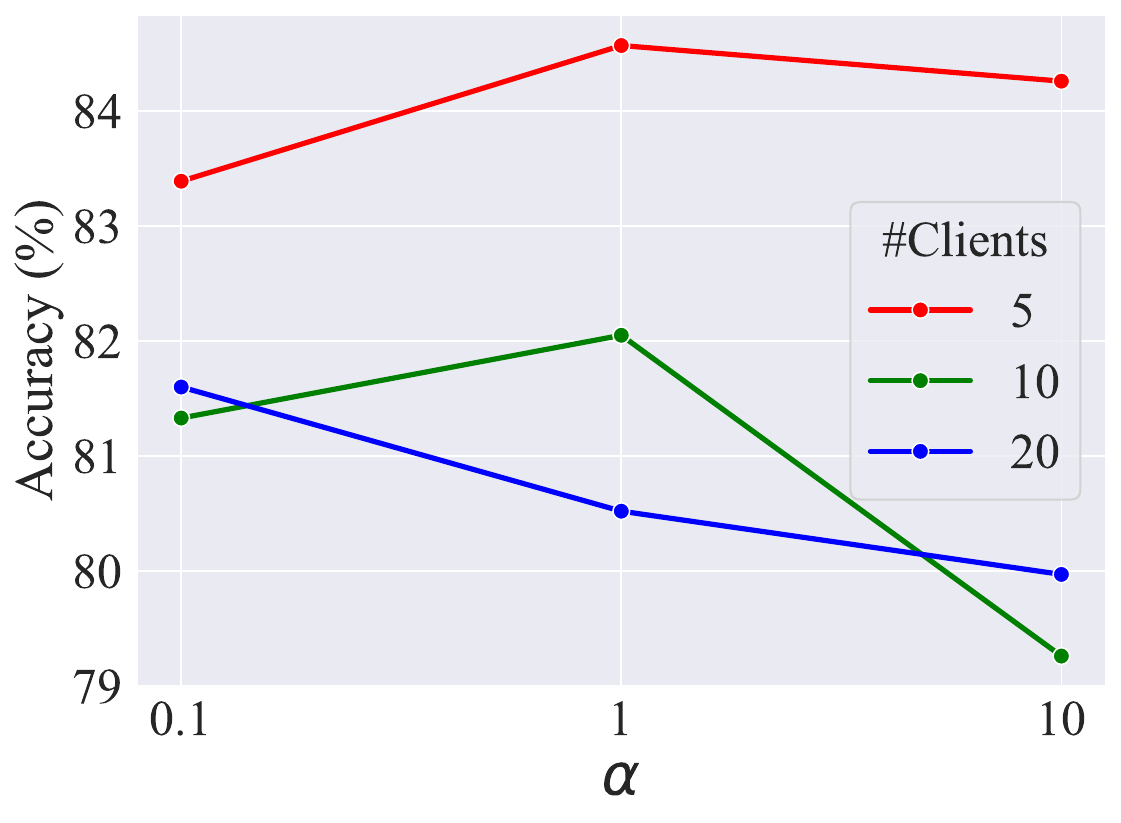}
  \subcaption{Cora}
\end{subfigure}%
\begin{subfigure}[t]{0.32\textwidth}
  \centering
  \includegraphics[width=\linewidth]{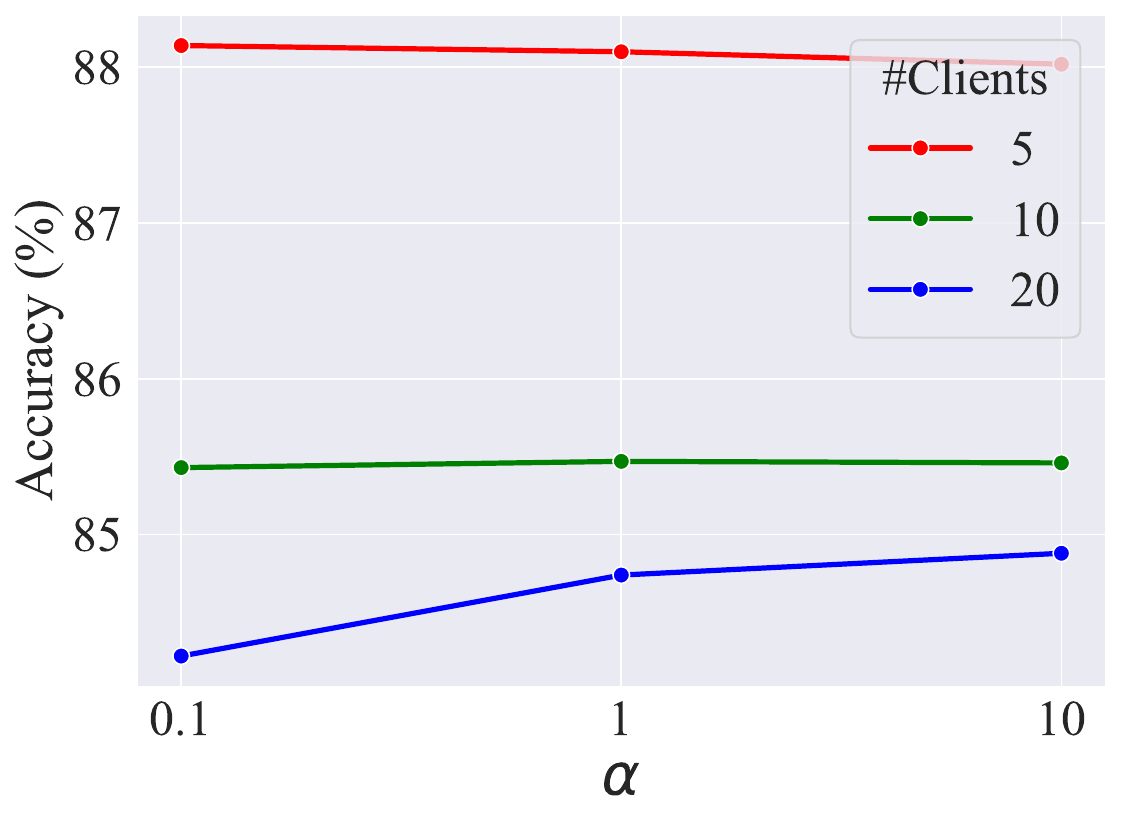}
\subcaption{Pubmed}
\end{subfigure}%
\begin{subfigure}[t]{0.32\textwidth}
  \centering
  \includegraphics[width=\linewidth]{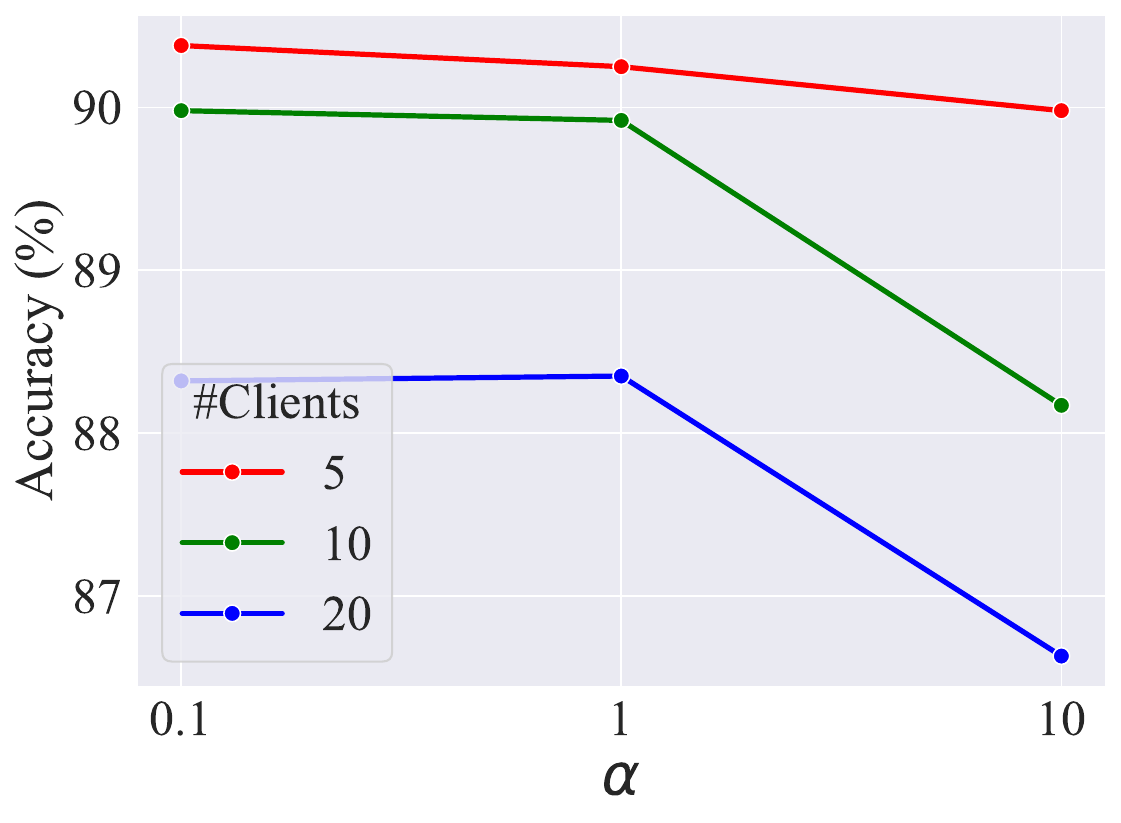}
 \subcaption{Amazon-Computer}
\end{subfigure}%
\caption{Sensitivity of \methodname{} on the similarity temperature parameter $\alpha$.}
\label{fig:hyper_alpha}
\end{figure*}

\paragraph{Impact of Similarity Temperature $\alpha$} In \Cref{eq:kernel_weight}, the similarity temperature parameter $\alpha$ is introduced to modulate the sharpness of the similarity distribution. To evaluate the sensitivity of \methodname{} to this hyperparameter, we test $\alpha \in \{0.1, 1, 10\}$ across varying numbers of clients and report the resulting accuracy in \Cref{fig:hyper_alpha}. The results show that while some configurations achieve optimal accuracy at different values of $\alpha$, for example, $\alpha = 0.1$ is optimal for 20 clients on Cora and $\alpha = 10$ is optimal for 20 clients on Pubmed, directly setting $\alpha = 1$ consistently provides satisfactory performance across different datasets and client counts.

\section{Limitations and Future Work} \label{app:limitation}
The main limitation of our work is that while the kernel‑based continuous aggregation scheme lets each client learn a fully differentiable, data‑driven ordering on the \apv{}, it requires evaluating the pairwise kernel for every node pair in the local client subgraph. Thus, the per‑client computational cost scales quadratically, $\mathcal{O}(N_k^2)$, which is acceptable for small and medium subgraphs but can dominate run‑time on very large clients. Our future work focuses on addressing this scalability bottleneck. One promising direction is to approximate the dense kernel with low‑rank factorizations, such as Nyström approximation, so that complexity scales linearly in $N_k$ with controllable error. We stress that our core contribution is a privacy‑preserving mechanism for personalized clustering clients via the \apv{}, and reducing local computational load is a complementary line of future research.

\end{document}